\newcommand{\blu}[1]{{\color{Cerulean} \textbf{#1}}}
\newcommand{\red}[1]{{\color{Red} \textbf{#1}}}
\theoremstyle{plain}
\newtheorem{theorem}{Theorem}[section]
\newtheorem{lemma}[theorem]{Lemma}
\newtheorem{corollary}[theorem]{Corollary}
\theoremstyle{definition}
\newtheorem{definition}[theorem]{Definition}
\newcommand{\eq}[1]{\begin{equation}{#1}\end{equation}}
\newcommand{\abs}[1]{\left|{#1}\right|}
\newcommand{\mbf}[1]{\mathbf{#1}}
\newcommand{\mbb}[1]{\mathbb{#1}}
\newcommand{\mc}[1]{\mathcal{#1}}
\newcommand{\magentaNodeSymbol}{\tikz[baseline=-0.6ex]\draw[fill=magenta,draw=black] (0,0) circle (3.25pt);}
\newcommand{\redNodeSymbol}{\tikz[baseline=-0.6ex]\draw[fill=red,draw=black] (0,0) circle (3.25pt);}
\newcommand{\greenNodeSymbol}{\tikz[baseline=-0.6ex]\draw[fill=green,draw=black] (0,0) circle (3.25pt);}
\newcommand{\dashSymbol}
\title{Random Search Neural Networks for Efficient and Expressive Graph Learning}
\author{%
  Michael Ito \\
  University of Michigan\\
  \texttt{mbito@umich.edu} \\
  \And
  Danai Koutra \\
  University of Michigan\\
  \texttt{dkoutra@umich.edu} \\
  \And
  Jenna Wiens \\
  University of Michigan\\
  \texttt{wiensj@umich.edu} \\
}
\begin{document}

\maketitle

\begin{abstract}
Random walk neural networks (RWNNs) have emerged as a promising approach for graph representation learning, leveraging recent advances in sequence models to process random walks. However, under realistic sampling constraints, RWNNs often fail to capture global structure even in small graphs due to incomplete node and edge coverage, limiting their expressivity. To address this, we propose \textit{random search neural networks} (RSNNs), which operate on random searches, each of which guarantees full node coverage. Theoretically, we demonstrate that in sparse graphs, only $O(\log |V|)$ searches are needed to achieve full edge coverage, substantially reducing sampling complexity compared to the $O(|V|)$ walks required by RWNNs (assuming walk lengths scale with graph size). Furthermore, when paired with universal sequence models, RSNNs are universal approximators. We lastly show RSNNs are probabilistically invariant to graph isomorphisms, ensuring their expectation is an isomorphism-invariant graph function. Empirically, RSNNs consistently outperform RWNNs on molecular and protein benchmarks, achieving comparable or superior performance with up to 16$\times$ fewer sampled sequences. Our work bridges theoretical and practical advances in random walk based approaches, offering an efficient and expressive framework for learning on sparse graphs.
\end{abstract}

\section{Introduction}

Early work on random walk–based graph representations focused on using skip-gram objectives to learn node embeddings from sampled walks \citep{perozzi2014deepwalk, grover2016node2vec}. Building on these ideas and leveraging recent advances in sequence modeling, \textit{random walk neural networks} (RWNNs) have emerged as a powerful paradigm for modern graph learning \citep{wang2024non, tan2023walklm, tonshoff2023walking, chen2025learning, kim2025revisiting, martinkus2023agent}, overcoming the limitations of message-passing neural networks (MPNNs) \citep{li2018deeper, zhu2020beyond, ItoKW25dynamic} and graph transformers \citep{dwivedi2021generalization, rampavsek2022recipe, ItoKW25llpe} by representing graphs as collections of random walks processed by sequence models. This advancement aligns with the broader research goal of identifying effective and expressive methods for graph representation learning \citep{xu2018how, wang2022powerful, ma2023graph}. However, despite their success, RWNNs encounter critical expressivity challenges under realistic conditions due to incomplete node and edge coverage, limiting their capacity to capture structure even in small graphs (Figure~\ref{fig:coverage_fig1}). In our analysis, we establish that, under partial coverage, RWNNs are strictly less expressive than traditional MPNNs, highlighting the importance of complete coverage and bridging the theoretical expressivity of the two paradigms. 

To illustrate the limitations of RWNNs, consider the graph composed of connected six-cycles and side chains shown in Figure~\ref{fig:coverage_fig1}. Capturing the full structure of this graph requires traversing every node and edge. However, since the node and edge cover times for a random walk can scale as $O(|V||E|)$~\citep{aleliunas1979random}, RWNNs require either prohibitively long walks or an impractically large number of samples to guarantee complete coverage. Under realistic sampling constraints where the walk's number of steps is significantly less than $O(|V||E|)$, random walks obtain only partial graph reconstruction: as shown in Figure~\ref{fig:coverage_fig1}(a), subgraphs induced by short random walks can miss critical structural components, such as the side chains connected to the six-cycles. This incomplete coverage significantly hinders RWNN expressivity. Current methods attempt to address this limitation through non-backtracking walks \citep{tonshoff2023walking, chen2025learning} and minimum-degree local rules (MDLR) \citep{kim2025revisiting}, reducing node and edge cover time to $O(|V|^2)$. Nonetheless, these approaches retain quadratic complexity with respect to graph size, making comprehensive coverage costly and impractical for even small and medium graphs.

\begin{figure*}[t!]
    \centering
    \includegraphics[width=\textwidth]{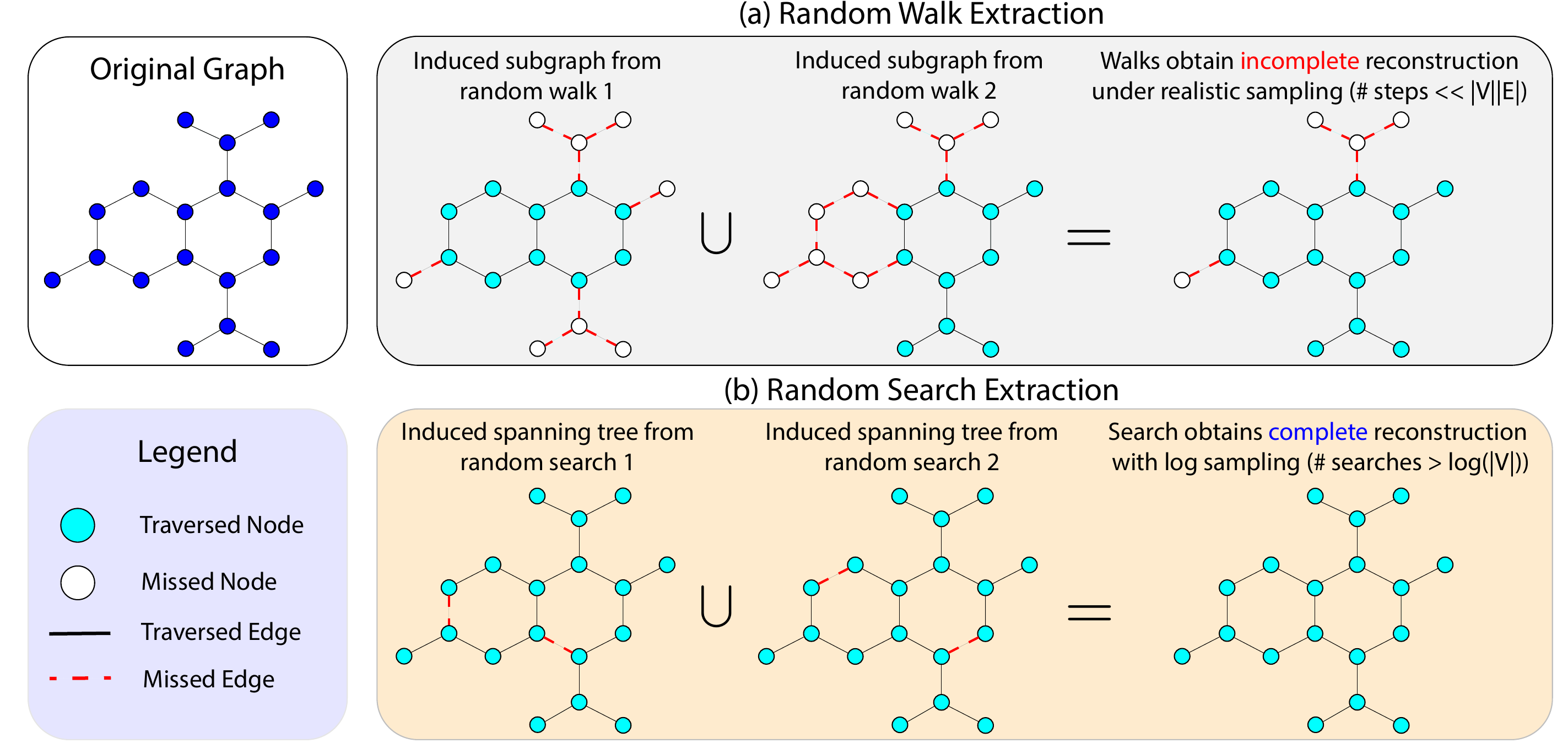}
    \caption{RWNN and RSNN coverage differences. Random walks miss critical structure under realistic sampling constraints, wheras each individual search only misses single edges in cycles, enabling complete reconstruction across logarithmic sampling in $\lvert V \rvert$ on sparse graphs.}
    \label{fig:coverage_fig1}
    \vspace{-1em}
\end{figure*}

To overcome these challenges in small and medium sized graphs, we introduce \textit{random search neural networks (RSNNs)}, which represent graphs as collections of random searches. Critical to our analysis is the insight that subgraphs induced by searches are spanning trees as opposed to arbitrary subgraphs induced by random walks. Each spanning tree inherently ensures full node coverage, reducing the task to achieving edge coverage across the union of induced trees. Leveraging this insight, our analysis demonstrates that RSNNs require only a logarithmic number of searches for complete edge coverage, specifically in sparse graphs where such searches are computationally feasible. This is a substantial improvement over the linear number of walks required by RWNNs, assuming walk lengths scale with graph size. As shown in Figure~\ref{fig:coverage_fig1}(b), the union of just a few spanning trees enables complete reconstruction of the graph, including nodes and edges missed by walk-induced subgraphs. When equipped with maximally expressive sequence models, RSNNs achieve universal approximation efficiently. Furthermore, we show that RSNNs are probabilistically invariant to graph isomorphisms, ensuring their expectation is an isomorphism-invariant predictor. Empirically, we focus on sparse molecular and protein graph classification datasets, domains in which RWNNs have shown significant improvement over existing GNNs. Across both domains, we demonstrate that RSNNs consistently outperform existing RWNN approaches. In summary, we make the following contributions:

\begin{itemize}
    \item \textbf{Characterizing RWNN Expressive Limitations.} Our analysis characterizes the expressive power of RWNNs, bridging the expressivity of RWNNs and MPNNs. We demonstrate that RWNNs under partial node and edge coverage are strictly less expressive than MPNNs, motivating the design of sampling strategies that guarantee full coverage. 

    \item \textbf{New Model: Random Search Neural Networks.} We propose random search neural networks (RSNNs), a new approach that operates on random searches, whose induced subgraphs are spanning trees, substantially reducing the sample size required for complete node and edge coverage in sparse graphs.

    \item \textbf{Efficient Coverage, Universal Approximation, \& Isomorphism Invariance.} We demonstrate that RSNNs can achieve universal approximation efficiently with logarithmic sampling in sparse graphs. RSNNs are also probabilistically invariant to graph isomorphims, ensuring their expectation is an isomorphism-invariant function on graphs.

    \item \textbf{Extensive Empirical Analysis.} Focusing on sparse molecular and protein graph benchmarks, we demonstrate that RSNNs consistently outperform existing RWNNs.
\end{itemize}

\section{Background and Preliminaries}

We establish notation for graphs and random walks and next review MPNNs and RWNNs, the primary class of models under investigation. Importantly, we later bridge the expressivity of MPNNs and RWNNs. We lastly review random walk cover times, highlighting how RWNNs require prohibitively long walks or impractically large numbers of walks to guarantee full graph coverage.

\subsection{Notation and Random Walks on Graphs}

We define a graph \( G = (V, \mbf{A}, \mbf{X}) \), where \( V \) is the set of nodes, \(\mbf{A} \in \{0, 1\}^{\abs{V} \times \abs{V}}\) is the adjacency matrix representing the set of edges $E$, and \(\mbf{X} \in \mbb{R}^{\abs{V} \times d}\) is the node feature matrix. For each node \( i \in V \), we denote its feature vector as \(\mbf{x}_i\) and its set of immediate (one-hop) neighbors as \(\mc{N}(i)\). We define the augmented neighborhood \(\hat{\mc{N}}(i)\), obtained by adding a self-loop to node \( i \). 

A random walk of length $\ell$ on \( G \) produces a sequence of nodes \( W=(w_0, \ldots, w_\ell) \) by first sampling an initial node \( w_0 \in V \) according to a uniform distribution \( P_0 \), and then iteratively transitioning to subsequent nodes by sampling neighbors according to a given random walk algorithm. We let \(\mc{W}_\ell(G)\) denote the set of all possible random walks of length $\ell$ on \( G \), and let \(P(\mc{W}(G), P_0)\) represent a probability distribution over these walks. Lastly, we define \( P_m(\mc{W}(G)) = \{W_1, \ldots, W_m\} \) as a realization of a set of \( m \) independently sampled random walks from \( P(\mc{W}(G), P_0) \).

\subsection{Message-passing Neural Networks and GNN Expressivity} 

Standard GNNs adopt a message-passing approach, where each layer iteratively updates a node's representation by aggregating the features of its neighbors \citep{gilmer2017neural}. Formally, the initial message-passing layer can be defined as the following propagation rule at the node level for all $i \in V$, 
\begin{equation*}
f_{\operatorname{MPNN}}(G)_i = f_{\operatorname{agg}}(\{\mathbf{x}_j \mid j \in \hat{ \mathcal{N}}(i)\}),
\end{equation*}
where $f_{\operatorname{agg}}$ is a permutation-invariant function. Because of this aggregation step, MPNNs incur fundamental expressivity limitations and cannot distinguish certain classes of non-isomorphic graphs~\citep{xu2018how, sato2020survey}. We compare the expressivity of GNNs by the pairs of graphs they can distinguish~\citep{azizian2020expressive}, introducing the following notation. For two GNNs $f_1$ and $f_2$, we write
\begin{equation*}
f_2 \preceq f_1
\quad\Longleftrightarrow\quad
\forall\, G,H:\; f_1(G)=f_1(H) \;\Longrightarrow\; f_2(G)=f_2(H).
\end{equation*}

Thus, any pair indistinguishable by $f_1$ is also indistinguishable by $f_2$, so $f_1$ is at least as expressive as $f_2$. The relation is strict, $f_2 \prec f_1$, if $f_2 \preceq f_1$ and there exist graphs $G,H$ with $f_1(G)\neq f_1(H)$ while $f_2(G)=f_2(H)$. $f_1$ and $f_2$ are equally expressive, written $f_1 \simeq f_2$, if $f_2 \preceq f_1$ and $f_1 \preceq f_2$. These relations coincide with notions of approximation power. For example, if $f_2 \prec f_1$, every target approximable by $f_2$ is approximable by $f_1$, and there exist targets approximable by $f_1$ but not $f_2$.

\subsection{Random Walk Neural Networks}

RWNNs are a novel class of neural network on graphs that leverage sequence models to process random walks sampled from the graph. Typically, an RWNN is characterized by four key components: (1) a random walk algorithm that generates node sequences, (2) a recording function that encodes the walks into structured representations, (3) a reader neural network that processes these representations, and (4)~an aggregator that combines the representations or predictions from multiple walks.

For our analysis, we assume the following representative general version of RWNN \citep{wang2024non, tan2023walklm, tonshoff2023walking, chen2025learning, kim2025revisiting, martinkus2023agent}. Specifically, we consider the random walk algorithm as uniform random walks of fixed length $\ell$, denoted by $P_m(\mc{W}(G)) := P(\mathcal W_\ell(G), \mbb{U}(V))$, where $\mbb{U}(V)$ denotes the uniform distribution over $V$. Given a sampled walk $W \in P_m(\mc{W}(G))$, we define the recording function $f_\text{emb}: \mathcal W_\ell(G) \to \mbb{R}^{\ell \times d}$ as follows: 
\eq{
  f_\text{emb}[i] := h_V(w_i) + \text{proj}(h_{\text{PE}}[i]),
  \label{eq:emb}
}
where $h_V: V \to \mbb{R}^d$ is a node embedding function. Here, $h_{\text{PE}}[i]$ serves as an optional position encoding that supplies extra structural context for each node in the walk (Appendix \ref{sec:pes}); when such encoding is employed, it is further processed by the learnable projection mapping $\text{proj}: \mbb{R}^{d_\mathrm{pe}} \to \mbb{R}^d$. Subsequently, we assume walk embeddings produced by $f_\text{emb}$ are processed by a sequence model, denoted by $f_\text{seq}: \mbb{R}^{\ell \times d} \to \mbb{R}^{\ell \times d}$. Finally, embeddings from the sequence model are aggregated by a permutation-invariant function. The choice for the function can be simple functions such as taking the mean over random walk representations such as in~\citet{wang2024non, kim2025revisiting}, or it can be more complex as in \citet{tonshoff2023walking, chen2025learning}, which updates a node's representation as the aggregation of its representations across all walks using the aggregation function $f_\text{agg}: \mbb{R}^{m \times \ell \times d} \to \mbb{R}^{ \lvert V \rvert \times d}$:
\eq{
  f_\text{agg}[w_i] := \frac{1}{N_i(P_m(\mc{W}(G)))} \sum_{W\in P_m(\mc{W}(G))} \sum_{w_i\in W} f_\text{seq}(f_\text{emb}(P_m(\mc{W}(G))))[i],
  \label{eq:agg}
}
where $N_i(P_m(\mc{W}(G)))$ represents the number of occurrences of node $i$ in the union of walks in $P_m(\mc{W}(G))$. The RWNN layer is defined as the composition $f^l_\text{RWNN} = f^l_\text{agg} \circ f^l_\text{seq}$, while the overall architecture $f_\text{RWNN}$ is defined as the stacking of RWNN layers. In the node classification setting, the final node representation $f_\text{agg}[i]$ produced by the last RWNN layer is directly utilized for predictions. In graph classification, an additional global pooling function aggregates these node representations into a single representation for the graph.

\subsection{Random Walk Cover Times}

RWNN expressivity depends on how much of the graph its random walks visit (Section \ref{sec:lim}). Here, we review results on random walk node cover times, $C_V(G)$, the expected number of steps a walk takes to visit all nodes. For a connected graph $G=(V,E)$, the cover time of a general uniform random walk satisfies $C_V(G)=O(|V||E|)$ \citep{lovasz1993random}; in particular, for sparse graphs ($|E|=\Theta(|V|)$) this gives $C_V(G)=O(|V|^2)$. Minimum-degree local rule (MDLR) walks further achieve $C_V(G)=O(|V|^2)$ on all graphs, which is optimal among first-order walks \citep{kim2025revisiting, david2018random}. Non-backtracking walks can also empirically reduce cover times on graphs \citep{tonshoff2023walking, chen2025learning}. Even with these improvements, guaranteeing full node and edge coverage by random walks can require prohibitively long walks or impractically large numbers of walks. We therefore replace walks entirely with \emph{searches} (Section \ref{sec:rsnn}), significantly improving on the number of samples required for full coverage in comparison to random walks. 

\section{Expressive Power of Random Walk Neural Networks \label{sec:lim}}

In this section, we characterize the expressive power of RWNNs. Our main result establishes that without additional positional or structural encodings, RWNNs with access to the complete multiset of random walks whose lengths scale up to the cover time are exactly as expressive as MPNNs. In practice, however, such assumptions are unrealistic: guaranteeing full node and edge coverage requires walk lengths on the order of the cover time, rendering full coverage computationally infeasible. We then show that in the partial-coverage regime, RWNNs are strictly less expressive than MPNNs. This limitation motivates our random search neural network (RSNN), which achieves full coverage and thus maximal expressivity at significantly lower sampling cost.

\subsection{The Role of Coverage: RWNNs vs. MPNNs}

We first analyze the ideal setting in which the RWNN has access to complete walk sets up to the cover time. In this regime, RWNN expressive power matches that of MPNNs.

\begin{theorem}[RWNN-MPNN Equivalence Under Full Coverage (FC)]
Let $G$ be a graph. Let $f_{\mathrm{RWNN}}^{\mathrm{FC}}$ denote an RWNN with injective $f_\mathrm{seq}$ and $f_\mathrm{agg}$ with no additional positional encodings, applied to the complete multiset of walks $\mathcal{W}_{\le \ell}(G)$ with lengths up to $\ell=C_E(G)$, the edge cover time of $G$. Let $f_{\mathrm{MPNN}}$ be an MPNN with injective $f_\mathrm{agg}$. Then, for all graphs $G,H$,
\[
f_{\mathrm{MPNN}}(G)=f_{\mathrm{MPNN}}(H)
\;\Longleftrightarrow\;
f_{\mathrm{RWNN}}^{\mathrm{FC}}(G)=f_{\mathrm{RWNN}}^{\mathrm{FC}}(H).
\]
Hence, $f_{\mathrm{RWNN}}^{\mathrm{FC}} \simeq f_{\mathrm{MPNN}}$ (i.e., $f_{\mathrm{RWNN}}^{\mathrm{FC}}$ and $f_{\mathrm{MPNN}}$ are equal in expressive power).
\label{thm:equal}
\end{theorem}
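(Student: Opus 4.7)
My plan is to funnel both $f_{\mathrm{MPNN}}$ and $f_{\mathrm{RWNN}}^{\mathrm{FC}}$ through the same canonical invariant, namely the multiset of rooted computation trees of $G$, equivalently the stable 1-WL coloring. For the MPNN side I would invoke the standard Weisfeiler--Leman correspondence: an MPNN with injective $f_\mathrm{agg}$ produces the same output on $G$ and $H$ if and only if their multisets of rooted computation trees coincide. This supplies the MPNN half of the biconditional immediately and reduces the theorem to showing that the RWNN side is tied to the same invariant.

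\textbf{Walks as a lossless encoding of computation trees.} The technical core is to show that, absent positional encodings, the complete multiset of walk feature-sequences from a node $v$ up to depth $\ell$ is in bijective correspondence with the rooted computation tree $T_v^\ell$. One direction is immediate: every walk is a root-to-leaf path in $T_v^\ell$, so the walk multiset is a function of the tree. For the converse I would induct on $\ell$: the first coordinate of every walk from $v$ recovers $h_V(v)$; partitioning walks by their second coordinate groups them by neighbor feature-type, and restricting to the suffix of each group yields the multiset of length-$(\ell-1)$ walks from each such neighbor, which by the inductive hypothesis recovers the children subtrees up to isomorphism. Taking $\ell = C_E(G)$ guarantees the depth exceeds the 1-WL stabilization depth, so the recovered trees are the stable WL signatures of each $v$. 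Aggregating across all starting nodes, the multiset $\mc{W}_{\le \ell}(G)$ is in bijection with the graph-level multiset of rooted trees; because $f_\mathrm{emb}$ depends only on node features, injectivity of $f_\mathrm{seq}$ and $f_\mathrm{agg}$ transfers this equivalence to the RWNN output. Chaining with the MPNN correspondence yields both directions of the biconditional.

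\textbf{Main obstacle.} The delicate point is the converse in the inductive step: when many nodes share a feature, partitioning by next-step feature only exposes neighbors up to feature-type, not identity. The argument must establish that the full multiset of extensions from each feature-class suffices to separate sibling subtrees to exactly the resolution of 1-WL, no finer and no coarser. This precision is what pins the RWNN equivalence at MPNN expressivity rather than above it, and it is also the structural reason the result fails once positional encodings are introduced or once walks no longer achieve full coverage, foreshadowing the strict inequality discussed in the partial-coverage regime that follows.
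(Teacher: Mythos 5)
Your skeleton matches the paper's in spirit: both routes go through the 1-WL stable coloring and the rooted computation (unfolding) trees, and your "easy" direction (the walk multiset is a function of the tree, since walks are root-to-leaf paths) is exactly the paper's Lemma on leaf-to-root paths. But the converse direction contains a genuine gap, and it is precisely the one you flag in your "main obstacle" paragraph without closing it. Worse, in the one-shot form you state it, the claim is false. You assert that the raw multiset of feature-sequences of walks of length $\le \ell$ from $v$ recovers $T^G[\ell,v]$ (or at least recovers it at 1-WL resolution). When you partition the walks by the feature of the second vertex, what you obtain for each feature class $c$ is the multiset \emph{union} $\biguplus_{w\in\mathcal N(v),\,h(w)=c}\mathcal W_{\le \ell-1}(w)$, and this union does not determine the multiset of per-neighbor walk multisets $\multiset{\mathcal W_{\le \ell-1}(w)}$, so your inductive hypothesis cannot be applied to the individual children. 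This is not merely a technical nuisance: for unlabeled graphs the featured-walk multiset collapses to the vector of walk counts by length (equivalently, path homomorphism counts), and by the Dell--Grohe--Rattan characterization (1-WL equivalence $=$ equality of \emph{tree} homomorphism counts, which strictly refines equality of \emph{path} homomorphism counts) there exist non-1-WL-equivalent graphs with identical walk profiles. So "same walk multiset $\Rightarrow$ same MPNN output" fails for the one-shot invariant, and the forward implication of the biconditional cannot be established this way.

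The paper escapes this by never relying on a single-shot reconstruction. It exploits the \emph{layered} structure of the RWNN: it defines an iterated refinement $\mathrm{WWL}^\ell$ that recolors each node from the multiset of \emph{colored} terminating walks and runs it to a fixed point. The lower bound $\pi_{\mathrm{WL}}\preceq\pi^{(\infty)}_{\mathrm{WWL}^\ell}$ then comes for free from the observation that length-$1$ walks are exactly neighbors, so each $\mathrm{WWL}^\ell$ round subsumes a WL round (plus monotonicity in $\ell$); the upper bound $\pi^{(\infty)}_{\mathrm{WWL}^\ell}\preceq\pi_{\mathrm{WL}}$ is your easy direction, done carefully: a root-preserving isomorphism of unfolding trees between WL-equivalent roots preserves stable WL colors at every tree node, so the multisets of colored terminating walks coincide and the stable WL coloring is a fixed point of WWL. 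To repair your proof you would need to replace the induction on walk length with this induction on refinement rounds (or otherwise invoke the stacking of RWNN layers); as written, the argument does not establish the direction $f_{\mathrm{RWNN}}^{\mathrm{FC}}(G)=f_{\mathrm{RWNN}}^{\mathrm{FC}}(H)\Rightarrow f_{\mathrm{MPNN}}(G)=f_{\mathrm{MPNN}}(H)$.
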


Although Theorem~\ref{thm:equal} shows that full-coverage RWNNs and MPNNs are equal in expressivity, RWNNs under full coverage can be more effective empirically. RWNNs leverage expressive sequence models capable of capturing long-range dependencies when given full graph structure in complete sequences. MPNNs instead rely on iterative neighborhood aggregation and are limited in depth by oversmoothing \citep{chen2020simple} and oversquashing \citep{oono2019graph}, which in practice reduce their expressivity and capabilities to capture long-range signals. This contrasts our theoretical setup where we assume MPNNs have unlimited depth, allowing them to match full-coverage RWNN expressivity. 

Constructing complete walk sets with lengths up to the cover time, however, is typically computationally infeasible. RWNNs can thus fall short of MPNNs under realistic budgets despite their inherent advantages. Indeed, as an immediate consequence of Theorem~\ref{thm:equal}, when RWNNs operate under partial coverage, their expressive power is strictly less than that of MPNNs.

\begin{corollary}[RWNNs Under Partial Coverage (PC)]
Let $f_{\mathrm{RWNN}}^{\mathrm{PC}}$ denote an RWNN of the same architectural class as in Theorem~\ref{thm:equal} but applied to a multiset of random walks that attains only \emph{partial} node/edge coverage of the input graph. Then, for all graphs $G,H$,
\[
f_{\mathrm{MPNN}}(G)=f_{\mathrm{MPNN}}(H)
\;\Longrightarrow\;
f_{\mathrm{RWNN}}^{\mathrm{PC}}(G)=f_{\mathrm{RWNN}}^{\mathrm{PC}}(H),
\]
and there exist non-isomorphic graphs $G\not\cong H$ such that
\[
f_{\mathrm{MPNN}}(G)\neq f_{\mathrm{MPNN}}(H)
\quad\text{while}\quad
f_{\mathrm{RWNN}}^{\mathrm{PC}}(G)=f_{\mathrm{RWNN}}^{\mathrm{PC}}(H).
\]
Hence, $f_{\mathrm{RWNN}}^{\mathrm{PC}} \prec f_{\mathrm{MPNN}}$ (partial-coverage RWNNs are strictly less expressive than MPNNs).
\label{cor:pc}
\end{corollary}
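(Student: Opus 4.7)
The plan is to prove the two parts separately: the implication via a reduction to Theorem~\ref{thm:equal}, and the strict separation via an explicit construction.

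For the implication, I would view the PC-RWNN as a restriction of the FC-RWNN to a sub-multiset of the complete walk set. First, I invoke Theorem~\ref{thm:equal} to deduce that $f_{\mathrm{MPNN}}(G)=f_{\mathrm{MPNN}}(H)$ implies $f_{\mathrm{RWNN}}^{\mathrm{FC}}(G)=f_{\mathrm{RWNN}}^{\mathrm{FC}}(H)$. Injectivity of $f_{\mathrm{agg}}$ then forces the multisets of embedded walks produced from the complete sets $\mathcal{W}_{\le C_E(G)}(G)$ and $\mathcal{W}_{\le C_E(H)}(H)$ to coincide as multisets. Consequently, any sub-multiset chosen on $G$ corresponds to a sub-multiset of identical embedded walks on $H$, and this correspondence preserves both the partial-coverage status (since coverage is determined by the node and edge content of the walks) and the output after $f_{\mathrm{seq}}$ and $f_{\mathrm{agg}}$. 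Thus PC-RWNN cannot distinguish $G$ from $H$, giving $f_{\mathrm{RWNN}}^{\mathrm{PC}} \preceq f_{\mathrm{MPNN}}$.

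For strictness, I would exhibit a small pair $G\not\cong H$ that MPNNs separate but on which some partial-coverage walk multiset yields identical RWNN outputs. A clean choice is $G=C_4$ (the $4$-cycle) and $H$ obtained from $C_4$ by attaching a single pendant vertex. With trivial node features, an injective MPNN separates these: $G$ is $2$-regular while $H$ contains a degree-$3$ node and a degree-$1$ node, so one refinement step produces distinct graph-level readouts. However, taking the PC multiset to consist of a single short walk contained entirely in the common $4$-cycle (say a length-$1$ walk between two adjacent cycle nodes), the embedded walk sequence, the sequence-model outputs, and the per-node aggregations all match across $G$ and $H$, while the multiset attains only partial coverage in both. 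Hence $f_{\mathrm{RWNN}}^{\mathrm{PC}}(G)=f_{\mathrm{RWNN}}^{\mathrm{PC}}(H)$ even though $f_{\mathrm{MPNN}}(G)\ne f_{\mathrm{MPNN}}(H)$.

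The main obstacle will be formalizing the coverage-transfer step in Part~1: that a partial-coverage sub-multiset on $G$ maps, under the multiset equality induced by Theorem~\ref{thm:equal}, to a sub-multiset that also attains only partial coverage on $H$. This should follow because WL-equivalent graphs have equal distributions of nodes per color and equal walk multisets of each length, so the cardinalities of corresponding walks and the colors of the nodes and edges they traverse are matched, preserving partial coverage. Once this step is justified, Part~1 reduces to comparing PC-RWNN outputs on corresponding sub-multisets, and Part~2 follows directly from the example above.
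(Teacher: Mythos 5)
Your proof is correct and follows essentially the same route as the paper's: the implication via coverage monotonicity reducing to Theorem~\ref{thm:equal} (you spell out the sub-multiset correspondence that the paper compresses into the single phrase ``removing walks cannot increase distinguishing power''), and strictness via an explicit pair that an injective MPNN separates by degree but whose partial walk multisets coincide. The only substantive difference is the counterexample: the paper starts from two isomorphic graphs and adds an \emph{isolated} vertex to $G$ and a \emph{pendant} vertex to $H$, so that $|V(G)|=|V(H)|$ and each graph has exactly one unvisited node, which is slightly more robust than your $C_4$ versus $C_4$-plus-pendant pair (with $4$ versus $5$ nodes) in case the readout or aggregation depends on the total node count; under the natural reading that the output is a function of the sampled walk multiset alone, your example works equally well.
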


Corollary~\ref{cor:pc} reveals a fundamental limitation of RWNNs: under partial coverage, their expressive power falls below that of classical message passing. Thus, to attain maximal theoretical expressivity, it is essential to design sampling strategies that efficiently guarantee complete coverage. In order to realize the advantages of RWNNs while obtaining maximal expressivity, we introduce RSNNs (Section~\ref{sec:rsnn}), which replace walks with searches to guarantee full node coverage by construction and achieve full edge coverage with a small number of searches on sparse graphs.

\emph{Insights of the analysis.} In proving Theorem~\ref{thm:equal}, we introduce a walk-based color refinement, \emph{Walk Weisfeiler–Lehman} ($\mathrm{WWL}$; Definition \ref{def:wwl}), which updates each node using the multiset of walks that visit it. We demonstrate that $\mathrm{WWL}$ upper bounds RWNN expressivity (Lemma \ref{lem:rwnn_equal}). Next, we establish that $\mathrm{WWL}$ operates on the same object as classical $\mathrm{WL}$: unfolding trees (Definition~\ref{def:unfold}). We lastly leverage this insight to establish that $\mathrm{WWL}$ and $\mathrm{WL}$ have equal distinguishing power (Theorem \ref{thm:wl_wwl}). In essence, this construction \emph{aligns the Weisfeiler–Lehman hierarchy with RWNNs}, unifying the expressive power of two seemingly distinct model classes: RWNNs, which process random walks with sequence models, and MPNNs, which process multisets of node neighborhoods with graph convolution. Formal definitions and details are in Appendix \ref{sec:pfs}.

\section{Random Search Neural Networks (RSNNs) \label{sec:rsnn}}

Motivated by our analysis of RWNNs, we propose a new sampling strategy that efficiently achieves the necessary conditions for maximal expressivity: full node and edge coverage. Since random walks require either prohibitively long walks or an impractically large number of walks to guarantee full coverage, we introduce random search neural networks (RSNNs), which represent graphs as collections of random searches. Notably, a single search guarantees full node coverage, and under the sparse graph assumption, only $O(\log(|V|))$ searches are needed to capture all edges. This significantly reduces the sampling complexity compared to the $O(|V|)$ requirement for traditional RWNNs, assuming walk lengths scale on the order of $O(|V|)$. When paired with a maximally expressive sequence model, RSNNs emerge as universal approximators on graphs. Moreover, we provably show RSNNs are probabilistically invariant to graph isomorphisms. Hence, the predictor obtained by averaging over searches is an isomorphism-invariant graph function. While the computational cost of a full search can be significantly larger than a short random walk, we focus on sparse graphs where search is computationally feasible, addressing the limitations of RWNNs in these classes of graphs.

\subsection{Search via Random DFS}

RSNNs leverage a random depth-first search (DFS) procedure to obtain sequences from an input graph \(G\). We utilize a DFS rather than a breadth-first search in order to better preserve continuity in the sequence. We denote by \(\mathcal{S}_\textrm{DFS}(G)\) the set of all possible DFS searches over \(G\). RSNN generates a random search \(S\) by sampling a DFS from the uniform distribution $\mathbb U(\mathcal{S}_\textrm{DFS}(G))$ and collects \(m\) independent searches to form the set \(P_m(\mathcal{S}_\textrm{DFS}(G)) = \{ S_1, \ldots, S_m \}\). Once these searches are obtained, RSNNs leverage all the advances of RWNNs but with new benefits. We apply the recording function (Equation \eqref{eq:emb}) to each search, including positional encodings from \citet{tonshoff2023walking} to distinguish between disconnected nodes and true connections in the sequence. Search embeddings are then processed with a sequence model and the node aggregation function (Equation \eqref{eq:agg}) (Figure~\ref{fig:rsnn}). 

\begin{figure*}[t!]
    \centering
    \includegraphics[width=\textwidth]{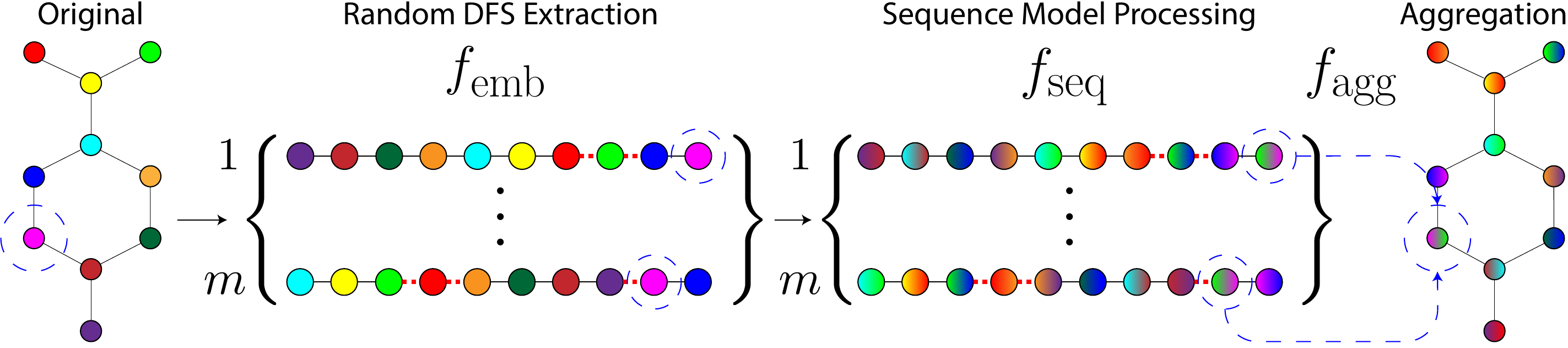}
    \caption{Overview of an RSNN layer. Starting from an input graph, $m$ random depth-first searches are extracted and encoded via $f_{\text{emb}}$. Additional positional encodings indicate discontinuities in the sequence (e.g., \protect\redNodeSymbol \, \red{-$\cdot$-} \protect\greenNodeSymbol \, in search 1). These sequences are processed by a sequence model $f_{\text{seq}}$, and final node representations are aggregated across sequences using $f_{\text{agg}}$. We highlight in blue the flow of a selected node representation (shown as \protect\magentaNodeSymbol) as it is tracked through each stage of the RSNN layer.}
    \label{fig:rsnn}
    \vspace{-1em}
\end{figure*}

\subsection{From Efficient Graph Coverage to Universal Approximation in RSNNs \label{sec:rsnn_coverage}}

In this section, we establish the theoretical foundations of RSNNs by demonstrating how our random search strategy efficiently obtains full graph coverage. Central to our analysis is the observation that the subgraphs induced by DFS sequences are spanning trees. Leveraging this insight, we prove the following key lemma showing that for sparse graphs with bounded degree, a logarithmic number of random searches is sufficient to guarantee full node and edge coverage with high probability.

\begin{lemma}[Logarithmic Sampling Yields Full Edge Coverage]
Let \(G=(V,E)\) be a connected graph with \(|E|\le C|V|\) for some constant \(C\) and a bounded maximum degree \(d_{\max}\). Let \(S_1, S_2, \ldots, S_m\) be \(m\) independent random searches sampled from \(G\), and let \(T_1, T_2, \ldots, T_m\) be their corresponding induced spanning trees. Then, for small \(\delta \ll 1\), if
\eq{m \ge \frac{\ln\left(\frac{C|V|}{\delta}\right)}{\ln\left(\frac{d_{\max}}{d_{\max}-1}\right)}, \label{eq:5}}
the union of \(T_1, T_2, \ldots, T_m\) contains every edge in \(E\) with probability at least \(1-\delta\).
\label{lem:log}
\end{lemma}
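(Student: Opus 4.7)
The plan is to reduce the full-coverage claim to a per-edge probability estimate and then combine it with independence and a union bound. Node coverage is automatic: since $G$ is connected, each random DFS visits every vertex, so every $T_i$ is a spanning tree, and the lemma reduces to ensuring that every edge of $E$ appears in at least one $T_i$.

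The central step is to establish that, for any fixed edge $e = (u,v) \in E$, a single random DFS satisfies $P(e \in T_i) \ge 1/d_{\max}$. I would condition on which endpoint of $e$ is visited first by the search; by symmetry take this to be $u$. When DFS first arrives at $u$, it iterates through $u$'s $d_u$ neighbors in a uniformly random order. If $v$ is chosen as the first neighbor, then $v$ is still unvisited at that moment (since the DFS reached $u$ before $v$), so DFS recurses into $v$ and $(u,v)$ becomes a tree edge. This favorable event has probability at least $1/d_u \ge 1/d_{\max}$, and the analogous argument handles the case where $v$ is reached before $u$. Hence $P(e \notin T_i) \le (d_{\max} - 1)/d_{\max}$.

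With this per-edge bound, the rest follows routinely. Independence of the $m$ searches gives $P(e \notin \bigcup_i T_i) \le \left((d_{\max}-1)/d_{\max}\right)^m$, and a union bound over the $|E| \le C|V|$ edges yields
\[
P\!\left(E \not\subseteq \bigcup_{i=1}^m T_i\right) \;\le\; C|V| \cdot \left(\frac{d_{\max}-1}{d_{\max}}\right)^{m}.
\]
Requiring the right-hand side to be at most $\delta$ and taking logarithms then rearranging yields the stated threshold $m \ge \ln(C|V|/\delta)/\ln(d_{\max}/(d_{\max}-1))$.

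The main obstacle is the per-edge estimate, since the distribution over DFS trees depends on the entire history of neighbor choices, which are themselves entangled with the evolving DFS stack. The simplification is to lower bound $P(e \in T_i)$ by the single favorable event ``$v$ is chosen first among $u$'s neighbors'' (or symmetrically for $v$), ignoring the other ways $e$ could become a tree edge. This loses constants but produces the clean $1/d_{\max}$ bound that, together with the sparsity assumption $|E| \le C|V|$, drives the logarithmic sampling rate in $|V|$.
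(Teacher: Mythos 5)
Your proposal is correct and follows essentially the same route as the paper: a per-edge inclusion bound of $1/d_{\max}$ for a single random DFS (obtained by conditioning on which endpoint of $e$ is discovered first and using the uniform neighbor ordering at that endpoint), followed by independence across the $m$ searches, a union bound over the at most $C|V|$ edges, and rearrangement. The only difference is that the paper's per-edge lemma uses the slightly sharper sufficient event that $v$ precedes only those neighbors of $u$ that lead to $v$ in $G\setminus\{e\}$, giving $1/(\tau_u(e)+1)$, whereas you use the cruder event that $v$ is first among all of $u$'s neighbors; both reduce to the same $1/d_{\max}$ bound actually used downstream.
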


In contrast to RWNNs, which require $m=O(|V|)$ random walks of length $\ell=O(|V|)$, RSNNs achieve complete coverage with \(m=O(\log(|V|))\) searches of length $\ell=O(|V|)$. With full node and edge coverage, RSNNs are able to capture all the information necessary to represent any function on graphs. Intuitively, this means that under our sampling strategy, RSNNs are universal approximators: they can approximate any graph function arbitrarily well, provided they are paired with a universal sequence model such as transformers or LSTMs \citep{hochreiter1997long, vaswani2017attention}.

\begin{theorem}[Universal Approximation by RSNNs on Sparse Graphs with Bounded Degree] \label{thm:uni}
Let \(\epsilon > 0\) and let \(f: \mathcal{G} \to \mathbb{R}^d\) be any continuous graph-level function, where \(\mathcal{G}\) is the space of sparse graphs with \(|E|=O(|V|)\) and maximum degree at most \(d_{\max}\). Assume $m$ satisfies Equation \eqref{eq:5}, so that full coverage is achieved with probability at least \(1-\delta\). Then, with probability at least \(1-\delta\) there exists an RSNN configuration such that 
\eq{\| f_{\mathrm{RSNN}}(G) - f(G) \| < \epsilon \quad \text{for all } G \in \mathcal{G},}
\end{theorem}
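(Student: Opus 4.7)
The plan is to reduce the statement to two facts: (i) on the full-coverage event guaranteed by Lemma~\ref{lem:log}, the multiset of DFS searches together with their discontinuity positional encodings uniquely determines the input graph up to isomorphism; and (ii) a universal sequence model composed with an injective multiset aggregator can realize any continuous function of that multiset. Combining (i) and (ii) then yields the stated high-probability approximation.

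First, I would condition on the event $\mathcal{E}$ that $\bigcup_{i=1}^m T_i = E$, where $T_i$ is the spanning tree induced by $S_i$; by Lemma~\ref{lem:log}, $\Pr(\mathcal{E}) \geq 1-\delta$ whenever $m$ satisfies Equation~\eqref{eq:5}. On $\mathcal{E}$ we have full node coverage (each DFS spans $V$) and full edge coverage. The positional encoding in Equation~\eqref{eq:emb}, following \citet{tonshoff2023walking}, marks whether consecutive entries of $S_i$ are truly adjacent in $G$ or arise from a DFS backtrack, so each pair $(S_i, h_{\text{PE}}(S_i))$ reconstructs the spanning tree $T_i$ exactly. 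Taking the union $\bigcup_i T_i$ then recovers $(V, E)$, so on $\mathcal{E}$ the map from $G$ to the multiset $P_m(\mathcal{S}_\textrm{DFS}(G))$ is effectively injective up to graph isomorphism.

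Next, I would instantiate the RSNN components. Choose $f_\text{seq}$ to be a universal sequence model such as a sufficiently wide transformer or LSTM~\citep{hochreiter1997long, vaswani2017attention}, which uniformly approximates any continuous function on compact subsets of $\mathbb{R}^{\ell \times d}$. Choose $f_\text{agg}$ to be an injective continuous map on multisets of per-search embeddings, realized via a Deep Sets–style sum decomposition, and post-compose with an MLP. The composite can thus represent any continuous function of the multiset, which by step (i) factors through $G$ on $\mathcal{E}$. An $\epsilon$-approximation of the induced map then yields $\|f_{\mathrm{RSNN}}(G) - f(G)\| < \epsilon$ for all $G \in \mathcal{G}$, and the $\delta$-failure probability of Lemma~\ref{lem:log} translates directly into the $1-\delta$ guarantee in the theorem statement.

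The main obstacle is the \emph{uniformity} over $\mathcal{G}$: classical universal approximation results hold on compact input domains, whereas $\mathcal{G}$ is unbounded in $|V|$. I would address this by fixing a size bound, noting that sparse bounded-degree graphs of that size form a finite set up to isomorphism so the approximation task reduces to matching finitely many target values, and then extending to arbitrary size by taking the sequence model capacity as a function of the maximum $|V|$ of interest. A related subtlety is that $f$ is defined on the quotient space of graphs modulo isomorphism, so I would also need to verify that the reconstructed map through the multiset is well-defined there, which follows from the invariance of the multiset of DFS searches under relabeling of $V$.
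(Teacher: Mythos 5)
Your overall strategy matches the paper's: condition on the full-coverage event from Lemma~\ref{lem:log}, argue that a full-coverage search set determines the graph so that the target function factors through the search multiset, invoke universal sequence and multiset encoders, and handle uniformity by noting the input space is finite once sizes are bounded. The transfer of the \(\delta\) failure probability is also handled the same way.

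There is, however, one concrete gap in your reconstruction step. You argue that each pair \((S_i, h_{\mathrm{PE}}(S_i))\) reconstructs its spanning tree \(T_i\), and then assert that ``taking the union \(\bigcup_i T_i\) recovers \((V,E)\).'' But the network never sees raw vertex identities: the input to \(f_{\mathrm{seq}}\) is the embedded sequence \(h_V(w_i)+\mathrm{proj}(h_{\mathrm{PE}}[i])\) of Equation~\eqref{eq:emb}, and the discontinuity/adjacency encodings you invoke are purely \emph{within-search}. From the multiset of \(m\) individually reconstructed abstract trees there is in general no way to determine which vertex of \(T_1\) corresponds to which vertex of \(T_2\), and the edge set of \(\bigcup_i T_i\) depends entirely on that identification; so the map from \(G\) to the multiset of embedded searches need not be injective up to isomorphism, and the induced function \(\widetilde f\) on search sets need not be well defined. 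The paper closes exactly this hole by introducing \emph{shared anonymous integer tags} (Definition~\ref{def:anon}): a single labeling \(\tau:V\to[n]\), derived from the first search's visit order and reused across all \(m\) searches, so that the multiset of tagged sequences does determine the union of the spanning trees as a labeled graph (and the tag assignment remains permutation-invariant in distribution). Your proof would go through once you either add such a shared tagging device or otherwise justify a consistent cross-search node identification; appealing to the per-node averaging in Equation~\eqref{eq:agg} would not suffice without further argument, since that specific aggregator is not obviously injective on the multiset of searches.
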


\subsection{From Expressivity to Invariance: Isomorphism Invariance of RSNNs}

Having established the expressive capabilities of RSNNs, we now turn to invariance. For graphs, the target symmetry is isomorphism invariance: for all $G \cong H$, an isomorphism-invariant graph function satisfies $f(G)=f(H)$. Graph functions that capture the symmetry enjoy learning and generalization benefits. Because RSNNs are randomized functions, we adopt the notion of probabilistic invariance~\citep{kim2025revisiting, bloem2020probabilistic}: for all $G \cong H$, the random outputs satisfy $f(G)\stackrel{d}{=}f(H)$. Intuitively, a randomized graph function is probabilistically invariant to graph isomorphisms if its distribution is unchanged under any graph isomorphism. We demonstrate that the randomized DFS procedure used by RSNNs is probabilistically invariant; consequently, the RSNN predictor $f_{\mathrm{RSNN}}$ is invariant in distribution, and its expectation $\Phi(G)\coloneqq \mathbb{E}\!\left[f_{\mathrm{RSNN}}(G)\right]$ is an isomorphism-invariant function on graphs.

\begin{theorem}[Probabilistic Isomorphism-Invariance of RSNN]
\label{thm:rsnn_invariance}
A randomized search procedure on a graph $G$ produces a sequence $S^G = (s_0^G,\ldots,s_{|V(G)|}^G)$ of visited vertices. We say the procedure is \emph{probabilistically invariant} to graph isomorphisms if for all graph isomorphisms $\pi$,
\[
\bigl(\pi(s_0^G),\ldots,\pi(s_{|V(G)|}^G)\bigr)\ \stackrel{d}{=}\ (s_0^H,\ldots,s_{|V(H)|}^H) \quad\text{for all } G\stackrel{\pi}\cong H.
\]

The randomized DFS procedure used in RSNNs satisfies the above definition. Hence, RSNNs satisfy probabilistic invariance: for all $G\cong H$, \(f_{\mathrm{RSNN}}(G)\ \stackrel{d}{=}\ f_{\mathrm{RSNN}}(H)\), and the averaged predictor \(\Phi(G)\ \coloneqq\ \mathbb{E}\bigl[f_{\mathrm{RSNN}}(G)\bigr]\) is an invariant function on graphs: $\Phi(G)=\Phi(H)$ for all $G\cong H$.
\end{theorem}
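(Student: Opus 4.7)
The plan is to prove the theorem in two stages: first establish that the randomized DFS procedure itself is probabilistically invariant, and then lift this invariance through the deterministic components of the RSNN pipeline. Throughout, I would decompose the randomness in the DFS into two ingredients: a uniform choice of starting vertex from $V(G)$, and, at each step, a uniform choice among the currently unvisited neighbors of the top-of-stack vertex (with backtracking when none exist). This explicit decomposition is what makes the symmetry argument tractable.

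For the first stage, I would fix an arbitrary isomorphism $\pi:G\to H$ and an arbitrary realizable DFS sequence $s=(s_0,\ldots,s_{|V(G)|})$ on $G$, and show that $\Pr[S^G=s]=\Pr[S^H=\pi(s)]$, where $\pi(s):=(\pi(s_0),\ldots,\pi(s_{|V(G)|}))$. The probability $\Pr[S^G=s]$ factors as $1/|V(G)|$ for the starting vertex times a product of $1/|\mathcal{U}_t(s,G)|$ at each step $t$, where $\mathcal{U}_t(s,G)$ is the set of unvisited neighbors of the current DFS frontier vertex at step $t$. The key claim is that $\pi$ restricts to a bijection $\mathcal{U}_t(s,G)\to\mathcal{U}_t(\pi(s),H)$, so the two probability factorizations agree term by term. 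Since $|V(G)|=|V(H)|$ and every sequence arises from exactly one realization of the random choices, distributional equality of the full sequence follows.

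For the second stage, I would track the invariance through $f_{\text{emb}}$, $f_{\text{seq}}$, and $f_{\text{agg}}$. Because $f_{\text{emb}}$ combines a node-identity embedding $h_V$ with position-only encodings that depend only on sequence structure, applying $\pi$ to every node index in a search commutes with $f_{\text{emb}}$. The sequence model $f_{\text{seq}}$ acts position-wise within a sequence and is agnostic to graph labels. Finally, $f_{\text{agg}}$ is a permutation-invariant aggregation keyed by node identity, so the aggregated features on $H$ are the $\pi$-relabeled aggregated features on $G$. Combined with the distributional equality from stage one, this yields $f_{\mathrm{RSNN}}(G)\stackrel{d}{=}f_{\mathrm{RSNN}}(H)$ (for graph-level readouts, which are themselves permutation-invariant over node features, the relabeling disappears entirely). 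Taking expectations gives $\Phi(G)=\Phi(H)$.

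The main obstacle is the first stage, specifically justifying the bijection $\mathcal{U}_t(s,G)\leftrightarrow\mathcal{U}_t(\pi(s),H)$, because the DFS frontier is determined by a non-trivial stack process rather than a memoryless Markov chain. I would handle this via an induction on $t$ that simultaneously maintains two invariants: (i) the visited set in $H$ after $t$ steps equals $\pi$ applied to the visited set in $G$, and (ii) the DFS stacks, viewed as vertex sequences, are related by coordinate-wise application of $\pi$. Preservation of these invariants under one additional DFS step (including the backtracking case, where the stack is popped until a vertex with unvisited neighbors is found) then reduces to $\pi$-equivariance of the "unvisited neighbor" operation, which is immediate from adjacency preservation; the needed bijection on $\mathcal{U}_t$ is a direct consequence.
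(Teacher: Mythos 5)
Your proposal is correct and follows essentially the same route as the paper's proof: the paper also decomposes the randomness into a uniform root plus uniform choices over the admissible (unvisited-neighbor) set, and runs an induction showing that $\pi$ maps the visited set, the DFS stack, and hence the admissible set of $G$ bijectively onto those of $H$, treating the backtracking step as deterministic. Your trajectory-probability factorization $\Pr[S^G=s]=\Pr[S^H=\pi(s)]$ is just the product form of the paper's step-by-step equality of conditional pushforward laws, and both arguments conclude by observing that $f_{\mathrm{RSNN}}$ is a deterministic (label-equivariant) function of the sampled sequence.
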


\paragraph{Learning the invariance.} In addition to being invariant in expectation, we show that RSNNs can learn the optimal invariant predictor throughout training even under limited sampling budgets, where the expectation is only approximated (e.g., $m=1$ sampled search for each forward pass in the parameter update). At inference, the invariant predictor can then be computed exactly or estimated by the Monte Carlo estimator. Our result follows \citet{murphy2018janossy, murphy2019relational}. For RSNN parameters $\mathbf W$, define the model output on a graph $G$ and a sampled search set $S\sim\mathcal{S}_{\mathrm{DFS}}(G)$ as $f_{\mathrm{RSNN}}(G,S;\mathbf W)$.

\begin{corollary}[SGD converges to the invariant objective]
\label{corr:sgd}
Let $\ell(\cdot,y)$ be differentiable and define
\[
L(\mathbf W)\;=\;\mathbb{E}_{(G,y)\sim \mathcal D}\;
\mathbb{E}_{S\sim \mathcal{S}_{\mathrm{DFS}}(G)}
\big[\ell\big(f_{\mathrm{RSNN}}(G,S;\mathbf W),\,y\big)\big].
\]
At each step $t$, sample a mini-batch $\mathcal B_t=\{(G_t^{(i)},y_t^{(i)})\}_{i=1}^B$ i.i.d.\ from $\mathcal D$ and, for each $i$, draw a single $S_t^{(i)}\sim \mathcal{S}_{\mathrm{DFS}}(G_t^{(i)})$ independently of $\mathbf W_t$; update
\[
\mathbf W_{t+1}\;=\;\mathbf W_t\;-\;\eta_t\,\frac{1}{B}\sum_{i=1}^B
\nabla_{\mathbf W}\,\ell\big(f_{\mathrm{RSNN}}(G_t^{(i)},S_t^{(i)};\mathbf W_t),\,y_t^{(i)}\big).
\]
Then $\mathbb{E}\!\left[\frac{1}{B}\sum_{i=1}^B \nabla_{\mathbf W}\ell\big(f_{\mathrm{RSNN}}(G_t^{(i)},S_t^{(i)};\mathbf W_t),y_t^{(i)}\big)\right]
=\nabla_{\mathbf W} L(\mathbf W_t)$, i.e., the mini-batch gradient is an unbiased estimator of $\nabla L(\mathbf W_t)$. Under standard SGD conditions, $\mathbf W_t$ converges almost surely to an optimizer $\mathbf W^\star$ of the invariant objective.
\end{corollary}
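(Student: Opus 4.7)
The plan is to reduce Corollary~\ref{corr:sgd} to two separate facts: (i) an unbiasedness identity for the stochastic gradient, and (ii) a standard SGD convergence theorem applied to the resulting unbiased estimator. The only content specific to this paper lies in step (i), which leverages the nested expectation structure of $L(\mathbf W)$ and the independence of the DFS sampling from the parameters.

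First I would verify the unbiasedness of the single-sample stochastic gradient. Fix $t$, condition on $\mathbf W_t$, and consider one example $(G_t^{(i)},y_t^{(i)})$ with its independently drawn search $S_t^{(i)}$. Since $S_t^{(i)}$ is drawn independently of $\mathbf W_t$ and the joint distribution factors as $(G,y)\sim\mathcal D$ then $S\sim\mathcal S_{\mathrm{DFS}}(G)$, Fubini/tower gives
\[
\mathbb{E}\!\left[\nabla_{\mathbf W}\ell\bigl(f_{\mathrm{RSNN}}(G_t^{(i)},S_t^{(i)};\mathbf W_t),y_t^{(i)}\bigr)\,\big|\,\mathbf W_t\right]
=\mathbb{E}_{(G,y)}\mathbb{E}_{S\mid G}\bigl[\nabla_{\mathbf W}\ell(\cdot)\bigr].
\]
I then interchange the gradient with the inner expectation. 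This is the step requiring the most care: it is justified by dominated convergence provided $\ell$ is differentiable and $\|\nabla_{\mathbf W}\ell(f_{\mathrm{RSNN}}(G,S;\mathbf W),y)\|$ is dominated by an integrable function in a neighborhood of $\mathbf W_t$. Since $\mathcal S_{\mathrm{DFS}}(G)$ is a finite set for each finite $G$, the inner expectation is actually a finite sum and the interchange is immediate; for the outer expectation over $\mathcal D$, this is the standard mild regularity assumption accompanying Robbins--Monro SGD. Averaging over the $B$ i.i.d.\ batch elements and using linearity of expectation then yields
\[
\mathbb{E}\!\left[\frac{1}{B}\sum_{i=1}^B \nabla_{\mathbf W}\ell\bigl(f_{\mathrm{RSNN}}(G_t^{(i)},S_t^{(i)};\mathbf W_t),y_t^{(i)}\bigr)\,\Big|\,\mathbf W_t\right]=\nabla_{\mathbf W}L(\mathbf W_t).
\]

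Next I would invoke a standard SGD convergence theorem (e.g., Robbins--Monro or the Bottou--Curtis--Nocedal framework). Given the unbiasedness above, together with the usual hypotheses (step sizes $\sum_t\eta_t=\infty$, $\sum_t\eta_t^2<\infty$; bounded or suitably bounded-variance stochastic gradients; $L$ lower-bounded and its gradient Lipschitz), the iterates $\mathbf W_t$ converge almost surely to a stationary point $\mathbf W^\star$ of $L$. I would also remark that by Theorem~\ref{thm:rsnn_invariance} the inner expectation $\Phi(G;\mathbf W)=\mathbb E_{S}[f_{\mathrm{RSNN}}(G,S;\mathbf W)]$ is an isomorphism-invariant predictor, so $L$ is precisely the population risk of the invariant predictor $\Phi$; hence optimizing $L$ corresponds to optimizing the ``invariant objective'' in the statement.

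The main obstacle I anticipate is the gradient--expectation interchange under the outer data expectation, which in full generality requires a dominating integrable function; I would handle it either by assuming bounded loss gradients on the relevant parameter neighborhood (as is standard in SGD analyses) or by noting that practical RSNNs with bounded activations, bounded features, and Lipschitz loss automatically satisfy the required regularity. The inner interchange over searches is trivial because $\mathcal S_{\mathrm{DFS}}(G)$ is finite for each $G$, so the only nontrivial integrability condition is at the population level.
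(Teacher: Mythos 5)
Your proposal is correct and follows essentially the same route as the paper's proof: establish unbiasedness by exchanging the gradient with the nested expectations (using independence of the search draw from $\mathbf W_t$ and integrability of the gradient), then invoke a standard Robbins--Monro-type SGD convergence result. The paper simply attributes this to Proposition~A.1 of the Janossy pooling work with permutations replaced by DFS searches, whereas you fill in the same steps directly (and usefully note that the inner interchange is trivial since $\mathcal{S}_{\mathrm{DFS}}(G)$ is finite).
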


\paragraph{Inference.} Given a fixed point $\mathbf W^\star$ and a new test graph $G'$, the invariant prediction is $\mathbb{E}_{S}[f_{\mathrm{RSNN}}(G',S;\mathbf W^\star)]$, which can be exactly computed or approximated with the estimator \(\frac{1}{m}\sum_{j=1}^m f_{\mathrm{RSNN}}(G',S_j;\mathbf W^\star)\) where \(S_1,\ldots,S_m \stackrel{\text{i.i.d.}}{\sim}\mathcal{S}_{\mathrm{DFS}}(G')\).

\subsection{Runtime Complexity}

We compare the sampling costs of RSNNs and RWNNs. In our approach, each random search corresponds to a DFS traversal. Assuming a sparse graph, a single DFS has a worst-case cost of $O(|V|)$, and obtaining $m$ searches requires $O(m|V|)$ time, efficient and computationally feasible in small to medium-sized graphs. In contrast, RWNNs generate $m$ random walks of length $\ell$, with total sampling cost $O(m\ell)$. When $\ell \ll |V|$, random walk sampling can be faster than random search extraction. However, as we have shown, short walks fail to capture global structure, leading to reduced expressivity. Thus, while RSNN sampling is more expensive when $\ell$ is small, its increased coverage and performance can justify its cost, especially in graphs where full structure is critical.

\section{Experiments \& Results \label{sec:5}}

Through empirical evaluation we aim to answer the following research questions, extending our theory by testing RSNNs on datasets with factors not explicitly addressed in the theoretical analysis (e.g., class imbalance, rich node features), and testing RSNNs against models beyond our theory such as canonical approaches (e.g., SMILES, Fingerprints) used commonly in molecular analysis.

\begin{itemize}
\item \textbf{RQ1 (Discriminative performance):} How does RSNN discriminative performance compare to standard baselines and RWNNs across sparse graph benchmark tasks?
\item \textbf{RQ2 (Node and edge coverage):} Do RSNNs achieve higher node and edge coverage than RWNNs as the number of sampled searches $m$ increases, and does this increased coverage translate into improved task performance?
\item \textbf{RQ3 (Generalization to dense graphs):} How do RSNNs perform on dense graphs, where attaining full edge coverage is computationally expensive?
\end{itemize}

\subsection{Experimental Setup}

\paragraph{Datasets.} We focus our analysis on molecular and protein benchmarks, domains where RWNNs have demonstrated strong empirical performance and where efficient coverage, long-range dependencies, and high expressivity are essential \citep{tonshoff2023walking, kim2025revisiting, dwivedi2022long}. Importantly, RSNNs are not intended as a solution across all domains, but as a principled alternative for sparse graphs requiring representations that capture global structure. Specifically, we evaluate on four small-scale molecular graph classification datasets from MoleculeNet \citep{wu2018moleculenet}: \textbf{CLINTOX}, \textbf{SIDER}, \textbf{TOX21}, and \textbf{BBBP}. These benchmarks span diverse molecular tasks such as toxicity and adverse reaction prediction, with graph sizes ranging from tens to hundreds nodes. We also include four protein graph classification datasets from ProteinShake~\citep{kucera2023proteinshake}: \textbf{EC Subclass}, \textbf{EC Mechanism}, \textbf{SC Class}, and \textbf{SC Family}. Protein graphs are significantly larger and denser than molecules, ranging up to thousands of nodes, making it more difficult to capture global structure. To assess scalability, we evaluate on large-scale molecular benchmarks with hundreds of thousands of graphs from Open Graph Benchmark \citep{hu2020open}: \textbf{PCBA-1030}, \textbf{PCBA-1458}, and \textbf{PCBA-4467}. Lastly, to test generalization to dense graphs, we evaluate on \textbf{NeuroGraph-task}, a dense brain graph benchmark, where the task is to predict one of seven mental states (e.g., emotion processing, language). We provide descriptive statistics for all datasets in Tables \ref{tab:1} and~\ref{tab:2}. 

\paragraph{Baselines.} First, we compare to standard molecular learning baselines:  (1) \textbf{SMILES}, a sequence model applied to canonical SMILES \citep{weininger1988smiles}; (2) \textbf{GCN} \citep{kipf2017semi} and (3) \textbf{GIN} \citep{xu2018how}, message-passing GNNs; and (4) \textbf{GT} \citep{dwivedi2021generalization}, a graph transformer model. In addition, we compare to (5) \textbf{Fingerprint}, a multi-layer perceptron trained on hand-crafted chemical descriptors known to be effective in molecular tasks~\citep{rogers2010extended}. Importantly, \textbf{SMILES} and \textbf{Fingerprint} are not applicable in protein graphs. Second, we consider four RWNN variants as baselines for comparison: (6) \textbf{RWNN-base}, which employs uniform random walks of length $\ell$ with mean aggregation over walk representations \citep{tan2023walklm}; (7) \textbf{RWNN-anon}, which augments the base model with a node anonymization strategy from \citet{wang2024non}; (8) \textbf{RWNN-mdlr}, which uses minimum-degree local rule walks from \citep{kim2025revisiting}, anonymization, and mean aggregation; (9) \textbf{CRAWL} \citep{tonshoff2023walking}, which applies non-backtracking walks with node-level aggregation. We consider three sequence models for $f_\text{seq}$: (a) GRU \citep{cho2014learning}, (b) LSTM\citep{hochreiter1997long}, and (c) transformer \citep{vaswani2017attention}. 

\paragraph{Training and Evaluation.} To ensure fair comparisons, all RWNNs and RSNN are configured with the same number of samples $m$, and RWNN walk lengths are set to $\ell = |V|$, the number of nodes per graph, so that asymptotic runtimes are equivalent across methods. On molecular benchmarks, we sample a new set of $m$ walks for each forward pass during training, and on protein benchmarks, we precompute the set of $m$ walks before training. Following each dataset's protocol, performance is computed as AUC or accuracy. We report median (min, max) performance over five random splits (60/20/20), which is more robust than mean and standard deviation for small sample sizes. All models are trained on a machine equipped with 8$\times$ NVIDIA GeForce GTX 1080 Ti GPUs; if a model does not converge within 24 hours, we omit it from evaluation. All remaining details are in Appendix \ref{sec:details}\footnote{Code can be found at:\\https://github.com/MLD3/RandomSearchNNs}.

\begin{table*}[t!]
\begingroup
\setlength{\tabcolsep}{4pt} 
\small
\centering
\caption{Median (min, max) of performance across test splits on molecular and protein benchmarks. We highlight in \blu{blue} the best model for each value of $m$. We use "---" to indicate when a method is not applicable (Fingerprint/SMILES) or when training exceeds 24 hours (GT). RSNNs consistently outperform all RWNN variants at $m=1$. While RWNNs approach RSNN performance on molecular benchmarks at $m=16$, RSNNs outperform RWNNs across all $m$ on protein benchmarks.}
\label{tab:1}
\resizebox{\textwidth}{!}{
\begin{tabular}{l l c c c c c c c c c } 
\toprule
& & \multicolumn{4}{c }{\textbf{Small Scale Molecular Benchmarks (AUC $\uparrow$)}} && \multicolumn{4}{c }{\textbf{Protein Benchmarks (ACC $\uparrow$)}} \\
\cmidrule{3-6} \cmidrule{8-11}
 & & \textbf{CLINTOX} & \textbf{SIDER} & \textbf{BBBP} & \textbf{TOX21} && \textbf{SC CL} & \textbf{SC FAM} & \textbf{EC SUB} & \textbf{EC MEC} \\ 
 & \textbf{\# Graphs} & 1.5K & 1.5K & 2K & 8K && 10K & 10K & 15K & 15K\\ 
 & \textbf{Avg. $|V|$} & 26.1 & 33.6 & 23.9 & 18.6 && 217.5 & 217.5 & 304.9 & 306.4\\ 
 & \textbf{Avg. $|E|$} & 28.0 & 35.4 & 26.0 & 16.9 && 593.8 & 593.8 & 843.4 & 846.9 \\ 
 & \textbf{\# Classes} & 2 & 2 & 2 & 2 && 5 & 1000 & 24 & 31\\ 
 \midrule
 & \textbf{Fingerprint} & 66.5 \scriptsize{(52.3, 74.9)} & 70.4 \scriptsize{(66.6, 74.5)} & 86.2 \scriptsize{(83.4, 92.5)} & 79.1 \scriptsize{(75.1, 81.0)} && --- & --- & --- & --- \\
 & \textbf{SMILES} & 62.5 \scriptsize{(45.7, 68.6)} & 61.5 \scriptsize{(57.6, 66.4)} & 71.9 \scriptsize{(65.5, 75.3)} & 71.3 \scriptsize{(66.4, 73.8)} && --- & --- & --- & --- \\
 NA & \textbf{GT (full)} & 57.1 \scriptsize{(46.5, 73.5)} & 64.3 \scriptsize{(57.9, 69.0)} & 75.8 \scriptsize{(62.6, 84.0)} & 67.8 \scriptsize{(64.8, 73.9)} && --- & --- & --- & --- \\
 & \textbf{GCN} & 62.4 \scriptsize{(56.9, 74.7)} & 64.2 \scriptsize{(62.4, 70.3)} & 73.9 \scriptsize{(68.9, 81.4)} & 67.5 \scriptsize{(63.1, 71.9)} && 63.4 \scriptsize{(62.8, 64.9)} & 3.9 \scriptsize{(1.1, 5.3)} & 31.2 \scriptsize{(28.0, 33.1)} & 52.8 \scriptsize{(51.9, 53.1)} \\
 & \textbf{GIN} & 59.7 \scriptsize{(54.1, 72.4)} & 66.5 \scriptsize{(64.0, 69.9)} & 75.3 \scriptsize{(49.4, 85.3)} & 66.9 \scriptsize{(64.6, 73.4)} && 68.0 \scriptsize{(67.9, 69.2)} & 10.4 \scriptsize{(8.7, 11.7)} & 37.2 \scriptsize{(33.5, 38.3)} & 57.4 \scriptsize{(56.1, 59.5)} \\
 \midrule
 & \textbf{RWNN-base} & 71.0 \scriptsize{(54.9, 79.5)} & 62.5 \scriptsize{(55.9, 67.3)} & 74.1 \scriptsize{(56.7, 82.8)} & 71.5 \scriptsize{(68.8, 76.3)} && 44.5 \scriptsize{(42.9, 45.4)} & 2.2 \scriptsize{(1.6, 2.8)} & 26.7 \scriptsize{(24.8, 27.9)} & 47.3 \scriptsize{(46.1, 48.4)} \\
  & \textbf{RWNN-anon} & 68.2 \scriptsize{(52.5, 87.2)} & 64.1 \scriptsize{(57.0, 67.3)} & 74.8 \scriptsize{(69.0, 82.6)} & 71.2 \scriptsize{(69.3, 75.0)} && 45.4 \scriptsize{(41.5, 45.9)} & 4.6 \scriptsize{(4.2, 5.8)} & 26.9 \scriptsize{(26.0, 28.7)} & 47.1 \scriptsize{(45.6, 48.2)} \\
  $m=1$ & \textbf{RWNN-mdlr} & 70.7 \scriptsize{(60.4, 76.1)} & 59.8 \scriptsize{(57.0, 65.9)} & 76.1 \scriptsize{(72.1, 81.6)} & 70.8 \scriptsize{(66.6, 75.3)} && 43.3 \scriptsize{(42.9, 45.1)} & 4.5 \scriptsize{(3.7, 4.7)} & 26.7 \scriptsize{(26.5, 27.2)} & 47.2 \scriptsize{(46.0, 48.2)} \\
  & \textbf{CRAWL} & 70.0 \scriptsize{(64.6, 73.6)} & 64.2 \scriptsize{(56.1, 67.2)} & 77.6 \scriptsize{(68.8, 81.5)} & 71.7 \scriptsize{(66.4, 75.3)} && 53.0 \scriptsize{(50.7, 53.4)} & 5.2 \scriptsize{(3.4, 5.8)} & 28.7 \scriptsize{(27.6, 29.6)} & 47.0 \scriptsize{(46.2, 47.6)} \\
  & \textbf{RSNN (ours)} & \blu{88.1 \scriptsize{(84.9, 91.5)}} & \blu{66.2 \scriptsize{(63.0, 72.4)}} & \blu{87.5 \scriptsize{(80.3, 89.9)}} & \blu{79.8 \scriptsize{(77.2, 83.4)}} && \blu{62.2 \scriptsize{(60.0, 65.6)}} & \blu{13.9 \scriptsize{(10.6, 14.9)}} & \blu{36.8 \scriptsize{(36.5, 38.3)}} & \blu{49.8 \scriptsize{(48.2, 50.8)}} \\
 \midrule
 & \textbf{RWNN-base} & 83.6 \scriptsize{(76.5, 86.7)} & 64.4 \scriptsize{(59.9, 71.9)} & 84.2 \scriptsize{(77.2, 87.0)} & 76.3 \scriptsize{(71.9, 80.9)} && 53.0 \scriptsize{(52.5, 54.1)} & 3.7 \scriptsize{(3.3, 5.4)} & 32.7 \scriptsize{(32.1, 34.5)} & 48.1 \scriptsize{(47.1, 48.8)} \\
  & \textbf{RWNN-anon} & 84.7 \scriptsize{(80.3, 89.5)} & 65.6 \scriptsize{(61.5, 68.8)} & 82.0 \scriptsize{(77.1, 85.4)} & 77.2 \scriptsize{(73.5, 79.2)} && 52.7 \scriptsize{(51.7, 53.1)} & 6.4 \scriptsize{(5.2, 7.5)} & 32.9 \scriptsize{(31.2, 34.2)} & 47.9 \scriptsize{(46.5, 50.3)} \\
  $m=4$ & \textbf{RWNN-mdlr} & 82.9 \scriptsize{(77.9, 90.4)} & 65.5 \scriptsize{(60.4, 72.4)} & 81.9 \scriptsize{(79.2, 88.0)} & 76.9 \scriptsize{(72.6, 80.2)} && 51.5 \scriptsize{(50.2, 52.5)} & 6.2 \scriptsize{(5.4, 7.8)} & 32.4 \scriptsize{(30.6, 33.6)} & 48.2 \scriptsize{(47.3, 49.3)} \\
  & \textbf{CRAWL} & 83.0 \scriptsize{(76.6, 91.5)} & 65.2 \scriptsize{(59.5, 71.3)} & 84.5 \scriptsize{(80.7, 87.0)} & 77.6 \scriptsize{(75.6, 81.2)} && 67.0 \scriptsize{(66.6, 67.9)} & 10.8 \scriptsize{(9.5, 11.4)} & 38.2 \scriptsize{(37.0, 39.9)} & 50.7 \scriptsize{(49.9, 51.7)} \\
  & \textbf{RSNN (ours)} & \blu{89.1 \scriptsize{(80.9, 91.7)}} & \blu{67.0 \scriptsize{(61.3, 71.1)}} & \blu{88.0 \scriptsize{(80.3, 90.5)}} & \blu{80.3 \scriptsize{(77.3, 84.2)}} && \blu{71.7 \scriptsize{(70.5, 73.8)}} & \blu{15.5 \scriptsize{(14.4, 19.2)}} & \blu{43.9 \scriptsize{(41.7, 44.3)}} & \blu{54.8 \scriptsize{(51.7, 55.8)}} \\
 \midrule
 & \textbf{RWNN-base} & 85.0 \scriptsize{(82.6, 88.7)} & 65.2 \scriptsize{(62.8, 70.2)} & 84.1 \scriptsize{(81.0, 91.1)} & 78.3 \scriptsize{(72.1, 81.3)} && 57.0 \scriptsize{(55.5, 58.5)} & 6.1 \scriptsize{(4.3, 6.9)} & 35.5 \scriptsize{(34.8, 36.9)} & 49.7 \scriptsize{(48.2, 52.0)} \\
 & \textbf{RWNN-anon} & 86.6 \scriptsize{(81.8, 92.7)} & 67.8 \scriptsize{(60.3, 70.7)} & 83.9 \scriptsize{(78.2, 85.3)} & 78.9 \scriptsize{(76.1, 82.0)} && 55.0 \scriptsize{(53.5, 58.4)} & 9.3 \scriptsize{(8.6, 10.0)} & 36.2 \scriptsize{(35.8, 37.0)} & 49.3 \scriptsize{(48.8, 50.3)} \\
  $m=8$ & \textbf{RWNN-mdlr} & 83.9 \scriptsize{(78.0, 87.5)} & 64.9 \scriptsize{(61.8, 69.1)} & 84.9 \scriptsize{(81.5, 86.7)} & 77.6 \scriptsize{(75.0, 79.0)} && 54.9 \scriptsize{(52.1, 56.9)} & 9.2 \scriptsize{(8.4, 10.7)} & 35.5 \scriptsize{(34.5, 36.7)} & 49.6 \scriptsize{(48.3, 51.8)} \\
  & \textbf{CRAWL} & 86.5 \scriptsize{(83.6, 91.4)} & 66.1 \scriptsize{(62.1, 69.9)} & 86.0 \scriptsize{(82.8, 89.6)} & 79.1 \scriptsize{(76.7, 82.1)} && 72.7 \scriptsize{(71.7, 73.3)} & 14.1 \scriptsize{(10.2, 17.6)} & 43.7 \scriptsize{(43.0, 45.4)} & 54.7 \scriptsize{(51.6, 55.0)} \\
  & \textbf{RSNN (ours)} & \blu{88.3 \scriptsize{(80.1, 91.3)}} & \blu{67.6 \scriptsize{(63.3, 69.2)}} & \blu{88.6 \scriptsize{(83.6, 90.3)}} & \blu{82.2 \scriptsize{(77.3, 85.3)}} && \blu{74.4 \scriptsize{(74.1, 75.4)}} & \blu{16.0 \scriptsize{(14.5, 19.2)}} & \blu{46.3 \scriptsize{(46.0, 49.4)}} & \blu{57.1 \scriptsize{(56.5, 57.7)}} \\
 \midrule
 & \textbf{RWNN-base} & 87.8 \scriptsize{(82.6, 91.1)} & \blu{67.2 \scriptsize{(64.6, 71.4)}} & 86.0 \scriptsize{(83.7, 88.1)} & 80.0 \scriptsize{(75.6, 81.8)} && 59.0 \scriptsize{(58.4, 60.2)} & 10.9 \scriptsize{(9.6, 11.4)} & 37.2 \scriptsize{(36.1, 39.3)} & 51.7 \scriptsize{(51.4, 53.0)} \\
 & \textbf{RWNN-anon} & 85.9 \scriptsize{(81.7, 91.8)} & 66.5 \scriptsize{(61.1, 69.3)} & 85.8 \scriptsize{(80.1, 88.1)} & 79.2 \scriptsize{(75.9, 82.2)} && 60.1 \scriptsize{(58.3, 61.5)} & 10.2 \scriptsize{(8.1, 12.4)} & 39.3 \scriptsize{(38.5, 40.6)} & 51.7 \scriptsize{(50.4, 53.2)} \\
  $m=16$ & \textbf{RWNN-mdlr} & 85.9 \scriptsize{(81.5, 89.9)} & 65.7 \scriptsize{(63.5, 70.1)} & 85.4 \scriptsize{(80.8, 90.5)} & 79.1 \scriptsize{(77.7, 83.0)} && 59.5 \scriptsize{(56.7, 61.0)} & 11.2 \scriptsize{(9.4, 11.7)} & 39.1 \scriptsize{(38.4, 40.2)} & 51.3 \scriptsize{(49.9, 51.9)} \\
  & \textbf{CRAWL} & \blu{89.1 \scriptsize{(80.5, 91.1)}} & 65.3 \scriptsize{(61.4, 70.8)} & 87.0 \scriptsize{(81.7, 90.3)} & 80.9 \scriptsize{(77.4, 82.6)} && 76.2 \scriptsize{(73.6, 77.4)} & 15.5 \scriptsize{(13.6, 16.0)} & 48.7 \scriptsize{(46.1, 49.3)} & 57.4 \scriptsize{(56.8, 58.6)} \\
  & \textbf{RSNN (ours)} & 88.5 \scriptsize{(82.0, 93.7)} & 67.1 \scriptsize{(65.0, 74.0)} & \blu{89.4 \scriptsize{(83.0, 91.7)}} & \blu{82.2 \scriptsize{(78.0, 84.1)}} && \blu{77.0 \scriptsize{(75.0, 77.2)}} & \blu{19.0 \scriptsize{(15.3, 20.1)}} & \blu{50.0 \scriptsize{(49.5, 52.0)}} & \blu{59.5 \scriptsize{(57.1, 60.0)}} \\
 \bottomrule
\end{tabular}
}
\endgroup
\vspace{-1em}
\end{table*}

\subsection{RQ1 \& RQ2: Discriminative Performance and Coverage}

First, RSNNs significantly outperform standard baselines across all benchmarks, demonstrating their effectiveness for molecular and protein learning (Table \ref{tab:1}). Notably, at $m=16$, RSNNs match or exceed the performance of Fingerprint models, which do not rely on learned representations and instead use features designed by domain experts. For all RWNNs and RSNN, we present results using GRU, which performs best empirically, and include additional results for LSTMs and transformers in the Appendix \ref{sec:ext}, where we observe similar trends. Compared to existing RWNNs, RSNNs exhibit greater expressivity at low sampling budgets; with a single search ($m=1$), RSNN significantly outperforms all RWNN variants at the same budget. Moreover, across all molecular benchmarks, RSNNs at $m=1$ match or exceed the best-performing RWNNs at $m=16$, highlighting their sample efficiency. While performance differences narrow at $m=16$ on molecular benchmarks, RSNNs retain a substantial lead on larger protein graphs, underscoring their expressivity in structurally complex settings. On large-scale molecular benchmarks, training both RWNNs and RSNNs with $m > 1$ becomes computationally infeasible, exceeding the 24-hour time budget. At $m = 1$, however, RSNNs maintain strong performance and substantially outperform RWNNs (Table~\ref{tab:2}), demonstrating RSNNs' robustness under sampling constraints when computation is limited.

\begin{wraptable}[11]{r}{0.5\textwidth}
\begingroup
\setlength{\tabcolsep}{4pt} 
\small
\centering
\caption{\footnotesize Median (min, max) AUC on large scale molecular benchmarks. We highlight in \textbf{\blu{blue}} the best model. RSNNs outperform all RWNNs across all tasks.}

\resizebox{.5\textwidth}{!}{
\begin{tabular}{l l c c c } 
\toprule
& & \multicolumn{3}{c }{\textbf{Large Scale Molecular Benchmarks (AUC $\uparrow$)}} \\
\cmidrule{3-5}
 & & \textbf{PCBA-1030} & \textbf{PCBA-1458} & \textbf{PCBA-4467}\\ 
  & \textbf{\# Graphs} & 160K & 195K & 240K \\
  & \textbf{Avg. $| V |$} & 24.3 & 25.1 & 25.3 \\
  & \textbf{Avg. $| E |$} & 26.2 & 27.1 & 27.2 \\
 \midrule
  & \textbf{RWNN-mdlr} & 63.5 \scriptsize{(62.3, 64.3)} & 76.2 \scriptsize{(75.4, 76.7)} & 75.4 \scriptsize{(75.4, 76.0)} \\
  $m=1$ & \textbf{CRAWL} & 64.2 \scriptsize{(62.5, 64.5)} & 77.0 \scriptsize{(76.8, 77.2)} & 75.6 \scriptsize{(75.2, 75.7)} \\
  & \textbf{RSNN} & \blu{78.8 \scriptsize{(78.1, 79.3)}} & \blu{87.0 \scriptsize{(86.7, 87.4)}} & \blu{85.2 \scriptsize{(84.3, 85.3)}} \\
 \bottomrule
 \label{tab:2}
\end{tabular}
}
\endgroup
\end{wraptable}

We compare how node/edge coverage and performance varies with the number of walks or searches for RSNNs and CRAWL, the strongest RWNN baseline (Figure~\ref{fig:coverage_performance}). Across all benchmarks, we observe a strong correlation between coverage and model performance. On molecular graphs, RSNNs achieve full node and high edge coverage with a single search ($m=1$), resulting in strong initial performance. This aligns with our theoretical analysis: each RSNN search guarantees node coverage by construction, and only a few searches are needed to achieve full edge coverage in sparse graphs. In contrast, CRAWL begins with low node and edge coverage and only reaches RSNN-level performance at $m=16$, once coverage converges, highlighting RWNN limitations under small sampling budgets. On larger protein graphs, both coverage and performance improve more gradually, but RSNNs retain a consistent performance advantage across all $m$, underscoring the benefit of efficient coverage in larger graphs.

\subsection{RQ3: Generalization to Dense Graphs}

\begin{wraptable}[11]{l}{0.5\textwidth}
\begingroup
\setlength{\tabcolsep}{4pt}
\small
\centering
\vspace{-1.5em}
\caption{\footnotesize Median (min, max) of accuracy on dense NeuroGraph benchmark. We highlight in \blu{blue} the best model. RSNNs outperform CRAWL across $m=4, 16$.}
\label{tab:neurograph_m}
\resizebox{.5\textwidth}{!}{%
\begin{tabular}{l l c c c}
\toprule
& & \multicolumn{3}{c}{\textbf{NeuroGraph-task}} \\
& \textbf{\# Graphs}   & \multicolumn{3}{c}{7500} \\
& \textbf{Avg.\ $|V|$} & \multicolumn{3}{c}{1000} \\
& \textbf{Avg.\ $|E|$} & \multicolumn{3}{c}{7029} \\
& \textbf{Max degree}  & \multicolumn{3}{c}{153} \\
\midrule
& & \textbf{$m=1$} & \textbf{$m=4$} & \textbf{$m=16$} \\
\midrule
& \textbf{CRAWL} & \blu{63.4 \scriptsize{(59.4, 64.5)}} & 77.5, \scriptsize{(74.4, 78.9)} & 68.3 \scriptsize{(30.1, 87.9)} \\
& \textbf{RSNN}  & 58.9 \scriptsize{(57.1, 61.5)} & \blu{80.4 \scriptsize{(78.8, 82.6)}} & \blu{86.5 \scriptsize{(76.5, 88.9)}} \\
\bottomrule
\end{tabular}}
\endgroup
\end{wraptable}

To assess generalization beyond the sparse regime, we evaluate on a \textbf{NeuroGraph} benchmark of dense brain graphs. These graphs are substantially denser than molecules and proteins, making full edge coverage expensive for both walks and searches. We compare RSNN against CRAWL. RSNN outperforms CRAWL at $m=4, 16$, indicating that RSNNs can leverage structure even when full coverage is expensive and that their performance advantage remains on dense graphs.

\section{Discussion and Conclusion \label{sec:6}}

We present the first theoretical analysis of RWNNs under realistic sampling constraints, showing that their expressivity is fundamentally limited without full node and edge coverage, even in small graphs. We prove that under partial coverage, RWNNs are strictly less expressive than traditional MPNNs. To address this, we introduce RSNNs, which use random depth-first search to guarantee full node coverage and edge coverage with only a logarithmic number of samples in sparse graphs. When paired with expressive sequence models, we show that RSNNs are universal approximators. Furthermore, RSNNs are also probabilistically invariant to graph isomorphisms. Empirically, RSNNs consistently outperform RWNNs on both molecular and protein benchmarks, requiring up to $16\times$ fewer samples to achieve comparable performance.

Our work builds on recent work in RWNNs that combines random walks with expressive sequence models \citep{wang2024non, tan2023walklm, tonshoff2023walking, chen2025learning, kim2025revisiting, martinkus2023agent}. These works explore various walk strategies, including uniform walks \citep{wang2024non, tan2023walklm}, non-backtracking walks \citep{tonshoff2023walking, chen2025learning}, minimum-degree local rule walks \citep{kim2025revisiting}, and learnable walks \citep{martinkus2023agent}, and propose architectural improvements to enhance expressivity and performance. We critically examine the expressivity of RWNNs under realistic sampling constraints, relaxing prior assumptions that walks are as long as cover times. Based on our analysis, we propose to replace random walks entirely with random searches, leading to RSNNs, a more sample-efficient and expressive alternative. 

\begin{figure*}[t!]
     \centering
     \begin{subfigure}[b]{.33\textwidth}
         \centering
         \includegraphics[width=\textwidth]{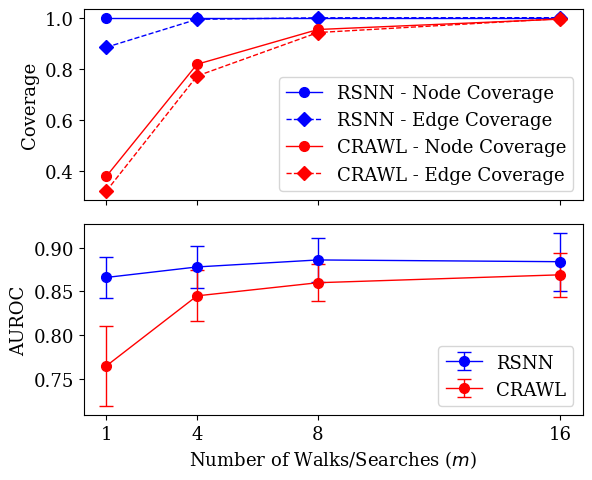}
         \caption{BBBP Molecular Graph}
         \label{fig:kreg}
     \end{subfigure}
     \hfill
     \begin{subfigure}[b]{.325\textwidth}
         \centering
         \includegraphics[width=\textwidth]{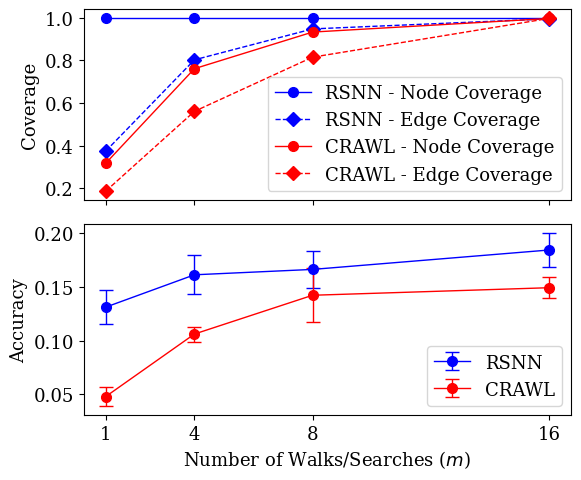}
         \caption{Structural Family Protein Graph}
     \end{subfigure}
     \hfill
     \begin{subfigure}[b]{.32\textwidth}
         \centering
         \includegraphics[width=\textwidth]{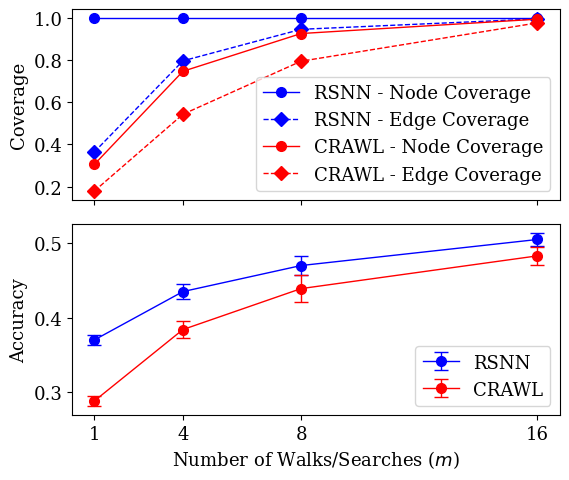}
         \caption{Enzyme Subclass Protein Graph}
     \end{subfigure}
    \caption{Coverage vs. performance across benchmarks. RSNNs achieve higher coverage and performance at low sample sizes, while CRAWL only approaches RSNN coverage and performance at $m=16$, highlighting a strong correlation between coverage and performance. }
    \label{fig:coverage_performance}
    \vspace{-1em}
\end{figure*}

Our work is not without limitations. In particular, RSNNs are tailored to sparse, medium-sized graphs. How to scale RSNN to large, densely connected graphs remains an open question. In such settings, full-depth searches may become prohibitively expensive, and edge coverage may scale less efficiently. A promising direction is to explore truncated searches that capture key structural signal while reducing computation. This raises new questions about how coverage and expressivity behave under partial searches, particularly in dense regimes where full coverage is infeasible.

Despite the focused scope, our results are promising: RSNNs match or exceed RWNN performance with significantly fewer samples and maintain a clear advantage across benchmarks. These findings underscore the value of replacing random walk sampling with search-based sampling in graph learning. More broadly, this work highlights the importance of moving beyond local neighborhoods toward sampling strategies that capture global structure. By leveraging efficient coverage through random searches, RSNNs offer a principled, expressive, and sample-efficient framework for learning on sparse graphs, laying the foundation for future exploration in other settings.

\subsubsection*{Acknowledgements}

This material is based upon work supported by the U.S. Department of Energy, Office of Science, Office of Advanced Scientific Computing Research, Department of Energy Computational Science Graduate Fellowship under Award Number DE-SC0023112. It was also partially supported by National Science Foundation under Grants No. IIS~2212143 and IIS~2504090. We thank the anonymous reviewers and members of the MLD3 lab for their valuable feedback.

\newpage

\bibliography{bibs/grl, bibs/grl_rwnns, bibs/grl_pe_transformers, bibs/grl_heterophily_oversmoothing, bibs/network_theory, bibs/dl, bibs/grl_ctnns}

\begin{thebibliography}{45}
\providecommand{\natexlab}[1]{#1}
\providecommand{\url}[1]{\texttt{#1}}
\expandafter\ifx\csname urlstyle\endcsname\relax
  \providecommand{\doi}[1]{doi: #1}\else
  \providecommand{\doi}{doi: \begingroup \urlstyle{rm}\Url}\fi

\bibitem[Perozzi et~al.(2014)Perozzi, Al-Rfou, and Skiena]{perozzi2014deepwalk}
Bryan Perozzi, Rami Al-Rfou, and Steven Skiena.
\newblock Deepwalk: Online learning of social representations.
\newblock In \emph{Proceedings of the 20th ACM SIGKDD international conference on Knowledge discovery and data mining}, 2014.

\bibitem[Grover and Leskovec(2016)]{grover2016node2vec}
Aditya Grover and Jure Leskovec.
\newblock node2vec: Scalable feature learning for networks.
\newblock In \emph{Proceedings of the 22nd ACM SIGKDD international conference on Knowledge discovery and data mining}, 2016.

\bibitem[Wang and Cho(2024)]{wang2024non}
Yuanqing Wang and Kyunghyun Cho.
\newblock Non-convolutional graph neural networks.
\newblock In \emph{Advances in Neural Information Processing Systems}, 2024.

\bibitem[Tan et~al.(2023)Tan, Zhou, Lv, Liu, and Yang]{tan2023walklm}
Yanchao Tan, Zihao Zhou, Hang Lv, Weiming Liu, and Carl Yang.
\newblock Walklm: A uniform language model fine-tuning framework for attributed graph embedding.
\newblock In \emph{Advances in neural information processing systems}, 2023.

\bibitem[T{\"o}nshoff et~al.(2023)T{\"o}nshoff, Ritzert, Wolf, and Grohe]{tonshoff2023walking}
Jan T{\"o}nshoff, Martin Ritzert, Hinrikus Wolf, and Martin Grohe.
\newblock Walking out of the weisfeiler leman hierarchy: Graph learning beyond message passing.
\newblock \emph{Transactions in Machine Learning Research}, 2023.

\bibitem[Chen et~al.(2025)Chen, Schulz, and Borgwardt]{chen2025learning}
Dexiong Chen, Till~Hendrik Schulz, and Karsten Borgwardt.
\newblock Learning long range dependencies on graphs via random walks.
\newblock In \emph{International Conference on Learning Representations}, 2025.

\bibitem[Kim et~al.(2025)Kim, Zaghen, Suleymanzade, Ryou, and Hong]{kim2025revisiting}
Jinwoo Kim, Olga Zaghen, Ayhan Suleymanzade, Youngmin Ryou, and Seunghoon Hong.
\newblock Revisiting random walks for learning on graphs.
\newblock In \emph{International Conference on Learning Representations}, 2025.

\bibitem[Martinkus et~al.(2023)Martinkus, Papp, Schesch, and Wattenhofer]{martinkus2023agent}
Karolis Martinkus, P{\'a}l~Andr{\'a}s Papp, Benedikt Schesch, and Roger Wattenhofer.
\newblock Agent-based graph neural networks.
\newblock In \emph{International Conference on Learning Representations}, 2023.

\bibitem[Li et~al.(2018)Li, Han, and Wu]{li2018deeper}
Qimai Li, Zhichao Han, and Xiao-Ming Wu.
\newblock Deeper insights into graph convolutional networks for semi-supervised learning.
\newblock In \emph{Proceedings of the AAAI conference on artificial intelligence}, volume~32, 2018.

\bibitem[Zhu et~al.(2020)Zhu, Yan, Zhao, Heimann, Akoglu, and Koutra]{zhu2020beyond}
Jiong Zhu, Yujun Yan, Lingxiao Zhao, Mark Heimann, Leman Akoglu, and Danai Koutra.
\newblock Beyond homophily in graph neural networks: Current limitations and effective designs.
\newblock In \emph{Advances in Neural Information Processing Systems}, 2020.

\bibitem[Ito et~al.(2025{\natexlab{a}})Ito, Koutra, and Wiens]{ItoKW25dynamic}
Michael Ito, Danai Koutra, and Jenna Wiens.
\newblock Understanding gnns and homophily in dynamic node classification.
\newblock In \emph{International Conference on Artificial Intelligence and Statistics}. PMLR, 2025{\natexlab{a}}.

\bibitem[Dwivedi and Bresson(2021)]{dwivedi2021generalization}
Vijay~Prakash Dwivedi and Xavier Bresson.
\newblock A generalization of transformer networks to graphs.
\newblock In \emph{AAAI Workshop on Deep Learning on Graphs: Methods and Applications}, 2021.

\bibitem[Ramp{\'a}{\v{s}}ek et~al.(2022)Ramp{\'a}{\v{s}}ek, Galkin, Dwivedi, Luu, Wolf, and Beaini]{rampavsek2022recipe}
Ladislav Ramp{\'a}{\v{s}}ek, Michael Galkin, Vijay~Prakash Dwivedi, Anh~Tuan Luu, Guy Wolf, and Dominique Beaini.
\newblock Recipe for a general, powerful, scalable graph transformer.
\newblock In \emph{Advances in Neural Information Processing Systems}, 2022.

\bibitem[Ito et~al.(2025{\natexlab{b}})Ito, Zhu, Chen, Koutra, and Wiens]{ItoKW25llpe}
Michael Ito, Jiong Zhu, Dexiong Chen, Danai Koutra, and Jenna Wiens.
\newblock Learning laplacian positional encodings for heterophilous graphs.
\newblock In \emph{International Conference on Artificial Intelligence and Statistics}. PMLR, 2025{\natexlab{b}}.

\bibitem[Xu et~al.(2019)Xu, Hu, Leskovec, and Jegelka]{xu2018how}
Keyulu Xu, Weihua Hu, Jure Leskovec, and Stefanie Jegelka.
\newblock How powerful are graph neural networks?
\newblock In \emph{International Conference on Learning Representations}, 2019.

\bibitem[Wang and Zhang(2022)]{wang2022powerful}
Xiyuan Wang and Muhan Zhang.
\newblock How powerful are spectral graph neural networks.
\newblock In \emph{International Conference on Machine Learning}, 2022.

\bibitem[Ma et~al.(2023)Ma, Lin, Lim, Romero-Soriano, Dokania, Coates, Torr, and Lim]{ma2023graph}
Liheng Ma, Chen Lin, Derek Lim, Adriana Romero-Soriano, Puneet~K Dokania, Mark Coates, Philip Torr, and Ser-Nam Lim.
\newblock Graph inductive biases in transformers without message passing.
\newblock In \emph{International Conference on Machine Learning}, 2023.

\bibitem[Aleliunas et~al.(1979)Aleliunas, Karp, Lipton, Lov{\'a}sz, and Rackoff]{aleliunas1979random}
Romas Aleliunas, Richard~M Karp, Richard~J Lipton, L{\'a}szl{\'o} Lov{\'a}sz, and Charles Rackoff.
\newblock Random walks, universal traversal sequences, and the complexity of maze problems.
\newblock In \emph{20th Annual Symposium on Foundations of Computer Science (sfcs 1979)}, pages 218--223. IEEE Computer Society, 1979.

\bibitem[Gilmer et~al.(2017)Gilmer, Schoenholz, Riley, Vinyals, and Dahl]{gilmer2017neural}
Justin Gilmer, Samuel~S Schoenholz, Patrick~F Riley, Oriol Vinyals, and George~E Dahl.
\newblock Neural message passing for quantum chemistry.
\newblock In \emph{International Conference on Machine Learning}, 2017.

\bibitem[Sato(2020)]{sato2020survey}
Ryoma Sato.
\newblock A survey on the expressive power of graph neural networks.
\newblock \emph{arXiv preprint arXiv:2003.04078}, 2020.

\bibitem[Azizian and Lelarge(2021)]{azizian2020expressive}
Waiss Azizian and Marc Lelarge.
\newblock Expressive power of invariant and equivariant graph neural networks.
\newblock In \emph{International Conference on Machine Learning}, 2021.

\bibitem[Lov{\'a}sz(1993)]{lovasz1993random}
L{\'a}szl{\'o} Lov{\'a}sz.
\newblock Random walks on graphs.
\newblock \emph{Combinatorics, Paul erdos is eighty}, 2\penalty0 (1-46):\penalty0 4, 1993.

\bibitem[David and Feige(2018)]{david2018random}
Roee David and Uriel Feige.
\newblock Random walks with the minimum degree local rule have o(n\^{}2) cover time.
\newblock \emph{SIAM Journal on Computing}, 47\penalty0 (3):\penalty0 755--768, 2018.

\bibitem[Chen et~al.(2020)Chen, Wei, Huang, Ding, and Li]{chen2020simple}
Ming Chen, Zhewei Wei, Zengfeng Huang, Bolin Ding, and Yaliang Li.
\newblock Simple and deep graph convolutional networks.
\newblock In \emph{International conference on machine learning}, pages 1725--1735. PMLR, 2020.

\bibitem[Oono and Suzuki(2020)]{oono2019graph}
Kenta Oono and Taiji Suzuki.
\newblock Graph neural networks exponentially lose expressive power for node classification.
\newblock In \emph{International Conference on Learning Representations}, 2020.

\bibitem[Hochreiter and Schmidhuber(1997)]{hochreiter1997long}
Sepp Hochreiter and J{\"u}rgen Schmidhuber.
\newblock Long short-term memory.
\newblock \emph{Neural computation}, 9\penalty0 (8):\penalty0 1735--1780, 1997.

\bibitem[Vaswani et~al.(2017)Vaswani, Shazeer, Parmar, Uszkoreit, Jones, Gomez, Kaiser, and Polosukhin]{vaswani2017attention}
Ashish Vaswani, Noam Shazeer, Niki Parmar, Jakob Uszkoreit, Llion Jones, Aidan~N Gomez, {\L}ukasz Kaiser, and Illia Polosukhin.
\newblock Attention is all you need.
\newblock \emph{Advances in neural information processing systems}, 30, 2017.

\bibitem[Bloem-Reddy and Teh(2020)]{bloem2020probabilistic}
Benjamin Bloem-Reddy and Yee~Whye Teh.
\newblock Probabilistic symmetries and invariant neural networks.
\newblock \emph{Journal of Machine Learning Research}, 21\penalty0 (90):\penalty0 1--61, 2020.

\bibitem[Murphy et~al.(2018)Murphy, Srinivasan, Rao, and Ribeiro]{murphy2018janossy}
Ryan~L Murphy, Balasubramaniam Srinivasan, Vinayak Rao, and Bruno Ribeiro.
\newblock Janossy pooling: Learning deep permutation-invariant functions for variable-size inputs.
\newblock In \emph{International Conference on Learning Representations}, 2018.

\bibitem[Murphy et~al.(2019)Murphy, Srinivasan, Rao, and Ribeiro]{murphy2019relational}
Ryan Murphy, Balasubramaniam Srinivasan, Vinayak Rao, and Bruno Ribeiro.
\newblock Relational pooling for graph representations.
\newblock In \emph{International Conference on Machine Learning}, pages 4663--4673. PMLR, 2019.

\bibitem[Dwivedi et~al.(2022)Dwivedi, Ramp{\'a}{\v{s}}ek, Galkin, Parviz, Wolf, Luu, and Beaini]{dwivedi2022long}
Vijay~Prakash Dwivedi, Ladislav Ramp{\'a}{\v{s}}ek, Michael Galkin, Ali Parviz, Guy Wolf, Anh~Tuan Luu, and Dominique Beaini.
\newblock Long range graph benchmark.
\newblock In \emph{Advances in Neural Information Processing Systems}, 2022.

\bibitem[Wu et~al.(2018)Wu, Ramsundar, Feinberg, Gomes, Geniesse, Pappu, Leswing, and Pande]{wu2018moleculenet}
Zhenqin Wu, Bharath Ramsundar, Evan~N Feinberg, Joseph Gomes, Caleb Geniesse, Aneesh~S Pappu, Karl Leswing, and Vijay Pande.
\newblock Moleculenet: a benchmark for molecular machine learning.
\newblock \emph{Chemical science}, 9\penalty0 (2):\penalty0 513--530, 2018.

\bibitem[Kucera et~al.(2023)Kucera, Oliver, Chen, and Borgwardt]{kucera2023proteinshake}
Tim Kucera, Carlos Oliver, Dexiong Chen, and Karsten Borgwardt.
\newblock Proteinshake: Building datasets and benchmarks for deep learning on protein structures.
\newblock In \emph{Advances in Neural Information Processing Systems}, 2023.

\bibitem[Hu et~al.(2020)Hu, Fey, Zitnik, Dong, Ren, Liu, Catasta, and Leskovec]{hu2020open}
Weihua Hu, Matthias Fey, Marinka Zitnik, Yuxiao Dong, Hongyu Ren, Bowen Liu, Michele Catasta, and Jure Leskovec.
\newblock Open graph benchmark: Datasets for machine learning on graphs.
\newblock \emph{Advances in neural information processing systems}, 33:\penalty0 22118--22133, 2020.

\bibitem[Weininger(1988)]{weininger1988smiles}
David Weininger.
\newblock Smiles, a chemical language and information system. 1. introduction to methodology and encoding rules.
\newblock \emph{Journal of chemical information and computer sciences}, 28\penalty0 (1):\penalty0 31--36, 1988.

\bibitem[Kipf and Welling(2017)]{kipf2017semi}
Thomas~N. Kipf and Max Welling.
\newblock Semi-supervised classification with graph convolutional networks.
\newblock In \emph{International Conference on Learning Representations}, 2017.

\bibitem[Rogers and Hahn(2010)]{rogers2010extended}
David Rogers and Mathew Hahn.
\newblock Extended-connectivity fingerprints.
\newblock \emph{Journal of chemical information and modeling}, 50\penalty0 (5):\penalty0 742--754, 2010.

\bibitem[Cho et~al.(2014)Cho, Van~Merri{\"e}nboer, Gulcehre, Bahdanau, Bougares, Schwenk, and Bengio]{cho2014learning}
Kyunghyun Cho, Bart Van~Merri{\"e}nboer, Caglar Gulcehre, Dzmitry Bahdanau, Fethi Bougares, Holger Schwenk, and Yoshua Bengio.
\newblock Learning phrase representations using rnn encoder-decoder for statistical machine translation.
\newblock In \emph{Proceedings of the 2024 Conference on Empirical Methods in Natural Language Processing}, 2014.

\bibitem[Morris et~al.(2019)Morris, Ritzert, Fey, Hamilton, Lenssen, Rattan, and Grohe]{morris2019weisfeiler}
Christopher Morris, Martin Ritzert, Matthias Fey, William~L Hamilton, Jan~Eric Lenssen, Gaurav Rattan, and Martin Grohe.
\newblock Weisfeiler and leman go neural: Higher-order graph neural networks.
\newblock In \emph{Proceedings of the AAAI conference on artificial intelligence}, volume~33, pages 4602--4609, 2019.

\bibitem[Morris et~al.(2020)Morris, Rattan, and Mutzel]{morris2020weisfeiler}
Christopher Morris, Gaurav Rattan, and Petra Mutzel.
\newblock Weisfeiler and leman go sparse: Towards scalable higher-order graph embeddings.
\newblock \emph{Advances in Neural Information Processing Systems}, 33:\penalty0 21824--21840, 2020.

\bibitem[Kriege(2022)]{kriege2022weisfeiler}
Nils~M Kriege.
\newblock Weisfeiler and leman go walking: Random walk kernels revisited.
\newblock \emph{Advances in Neural Information Processing Systems}, 35:\penalty0 20119--20132, 2022.

\bibitem[Kingma and Ba(2015)]{KingBa15}
Diederik Kingma and Jimmy Ba.
\newblock Adam: A method for stochastic optimization.
\newblock In \emph{International Conference on Learning Representations}, 2015.

\bibitem[M{\"u}ller and Morris(2024)]{muller2024aligning}
Luis M{\"u}ller and Christopher Morris.
\newblock Aligning transformers with weisfeiler-leman.
\newblock In \emph{International Conference on Learning Representations}, 2024.

\bibitem[M{\"u}ller et~al.(2024)M{\"u}ller, Kusuma, Bonet, and Morris]{muller2024towards}
Luis M{\"u}ller, Daniel Kusuma, Blai Bonet, and Christopher Morris.
\newblock Towards principled graph transformers.
\newblock \emph{Advances in Neural Information Processing Systems}, 37:\penalty0 126767--126801, 2024.

\bibitem[Graziani et~al.(2024)Graziani, Drucks, Jogl, Bianchini, Scarselli, Gartner, et~al.]{graziani2024expressive}
Caterina Graziani, Tamara Drucks, Fabian Jogl, Monica Bianchini, Franco Scarselli, T~Gartner, et~al.
\newblock The expressive power of path-based graph neural networks.
\newblock In \emph{International Conference on Machine Learning}, volume 235, pages 16226--16249. ML Research Press, 2024.

\end{thebibliography}


\newpage

\section*{NeurIPS Paper Checklist}

The checklist is designed to encourage best practices for responsible machine learning research, addressing issues of reproducibility, transparency, research ethics, and societal impact. Do not remove the checklist: {\bf The papers not including the checklist will be desk rejected.} The checklist should follow the references and follow the (optional) supplemental material.  The checklist does NOT count towards the page
limit. 

Please read the checklist guidelines carefully for information on how to answer these questions. For each question in the checklist:
\begin{itemize}
    \item You should answer \answerYes{}, \answerNo{}, or \answerNA{}.
    \item \answerNA{} means either that the question is Not Applicable for that particular paper or the relevant information is Not Available.
    \item Please provide a short (1–2 sentence) justification right after your answer (even for NA). 
\end{itemize}

{\bf The checklist answers are an integral part of your paper submission.} They are visible to the reviewers, area chairs, senior area chairs, and ethics reviewers. You will be asked to also include it (after eventual revisions) with the final version of your paper, and its final version will be published with the paper.

The reviewers of your paper will be asked to use the checklist as one of the factors in their evaluation. While "\answerYes{}" is generally preferable to "\answerNo{}", it is perfectly acceptable to answer "\answerNo{}" provided a proper justification is given (e.g., "error bars are not reported because it would be too computationally expensive" or "we were unable to find the license for the dataset we used"). In general, answering "\answerNo{}" or "\answerNA{}" is not grounds for rejection. While the questions are phrased in a binary way, we acknowledge that the true answer is often more nuanced, so please just use your best judgment and write a justification to elaborate. All supporting evidence can appear either in the main paper or the supplemental material, provided in appendix. If you answer \answerYes{} to a question, in the justification please point to the section(s) where related material for the question can be found.

IMPORTANT, please:


\begin{enumerate}

\item {\bf Claims}
    \item[] Question: Do the main claims made in the abstract and introduction accurately reflect the paper's contributions and scope?
    \item[] Answer: \answerYes{} 
    \item[] Justification: Yes, they are properly reflected. 
    \item[] Guidelines:
    \begin{itemize}
        \item The answer NA means that the abstract and introduction do not include the claims made in the paper.
        \item The abstract and/or introduction should clearly state the claims made, including the contributions made in the paper and important assumptions and limitations. A No or NA answer to this question will not be perceived well by the reviewers. 
        \item The claims made should match theoretical and experimental results, and reflect how much the results can be expected to generalize to other settings. 
        \item It is fine to include aspirational goals as motivation as long as it is clear that these goals are not attained by the paper. 
    \end{itemize}

\item {\bf Limitations}
    \item[] Question: Does the paper discuss the limitations of the work performed by the authors?
    \item[] Answer: \answerYes{} 
    \item[] Justification: Yes, we discuss limitations of the work in Section \ref{sec:6}.
    \item[] Guidelines:
    \begin{itemize}
        \item The answer NA means that the paper has no limitation while the answer No means that the paper has limitations, but those are not discussed in the paper. 
        \item The authors are encouraged to create a separate "Limitations" section in their paper.
        \item The paper should point out any strong assumptions and how robust the results are to violations of these assumptions (e.g., independence assumptions, noiseless settings, model well-specification, asymptotic approximations only holding locally). The authors should reflect on how these assumptions might be violated in practice and what the implications would be.
        \item The authors should reflect on the scope of the claims made, e.g., if the approach was only tested on a few datasets or with a few runs. In general, empirical results often depend on implicit assumptions, which should be articulated.
        \item The authors should reflect on the factors that influence the performance of the approach. For example, a facial recognition algorithm may perform poorly when image resolution is low or images are taken in low lighting. Or a speech-to-text system might not be used reliably to provide closed captions for online lectures because it fails to handle technical jargon.
        \item The authors should discuss the computational efficiency of the proposed algorithms and how they scale with dataset size.
        \item If applicable, the authors should discuss possible limitations of their approach to address problems of privacy and fairness.
        \item While the authors might fear that complete honesty about limitations might be used by reviewers as grounds for rejection, a worse outcome might be that reviewers discover limitations that aren't acknowledged in the paper. The authors should use their best judgment and recognize that individual actions in favor of transparency play an important role in developing norms that preserve the integrity of the community. Reviewers will be specifically instructed to not penalize honesty concerning limitations.
    \end{itemize}

\item {\bf Theory assumptions and proofs}
    \item[] Question: For each theoretical result, does the paper provide the full set of assumptions and a complete (and correct) proof?
    \item[] Answer: \answerYes{} 
    \item[] Justification: Yes, all theoretical results are clearly stated and we place all the mathematical proofs in the Appendix.
    \item[] Guidelines:
    \begin{itemize}
        \item The answer NA means that the paper does not include theoretical results. 
        \item All the theorems, formulas, and proofs in the paper should be numbered and cross-referenced.
        \item All assumptions should be clearly stated or referenced in the statement of any theorems.
        \item The proofs can either appear in the main paper or the supplemental material, but if they appear in the supplemental material, the authors are encouraged to provide a short proof sketch to provide intuition. 
        \item Inversely, any informal proof provided in the core of the paper should be complemented by formal proofs provided in appendix or supplemental material.
        \item Theorems and Lemmas that the proof relies upon should be properly referenced. 
    \end{itemize}

    \item {\bf Experimental result reproducibility}
    \item[] Question: Does the paper fully disclose all the information needed to reproduce the main experimental results of the paper to the extent that it affects the main claims and/or conclusions of the paper (regardless of whether the code and data are provided or not)?
    \item[] Answer: \answerYes{} 
    \item[] Justification: All experimental results are described in the main paper and the Appendix
    \item[] Guidelines:
    \begin{itemize}
        \item The answer NA means that the paper does not include experiments.
        \item If the paper includes experiments, a No answer to this question will not be perceived well by the reviewers: Making the paper reproducible is important, regardless of whether the code and data are provided or not.
        \item If the contribution is a dataset and/or model, the authors should describe the steps taken to make their results reproducible or verifiable. 
        \item Depending on the contribution, reproducibility can be accomplished in various ways. For example, if the contribution is a novel architecture, describing the architecture fully might suffice, or if the contribution is a specific model and empirical evaluation, it may be necessary to either make it possible for others to replicate the model with the same dataset, or provide access to the model. In general. releasing code and data is often one good way to accomplish this, but reproducibility can also be provided via detailed instructions for how to replicate the results, access to a hosted model (e.g., in the case of a large language model), releasing of a model checkpoint, or other means that are appropriate to the research performed.
        \item While NeurIPS does not require releasing code, the conference does require all submissions to provide some reasonable avenue for reproducibility, which may depend on the nature of the contribution. For example
        \begin{enumerate}
            \item If the contribution is primarily a new algorithm, the paper should make it clear how to reproduce that algorithm.
            \item If the contribution is primarily a new model architecture, the paper should describe the architecture clearly and fully.
            \item If the contribution is a new model (e.g., a large language model), then there should either be a way to access this model for reproducing the results or a way to reproduce the model (e.g., with an open-source dataset or instructions for how to construct the dataset).
            \item We recognize that reproducibility may be tricky in some cases, in which case authors are welcome to describe the particular way they provide for reproducibility. In the case of closed-source models, it may be that access to the model is limited in some way (e.g., to registered users), but it should be possible for other researchers to have some path to reproducing or verifying the results.
        \end{enumerate}
    \end{itemize}

\item {\bf Open access to data and code}
    \item[] Question: Does the paper provide open access to the data and code, with sufficient instructions to faithfully reproduce the main experimental results, as described in supplemental material?
    \item[] Answer: \answerYes{} 
    \item[] Justification: Yes, all data are publicly available and code is available at: \\https://github.com/MLD3/RandomSearchNNs
    \item[] Guidelines:
    \begin{itemize}
        \item The answer NA means that paper does not include experiments requiring code.
        \item Please see the NeurIPS code and data submission guidelines (\url{https://nips.cc/public/guides/CodeSubmissionPolicy}) for more details.
        \item While we encourage the release of code and data, we understand that this might not be possible, so “No” is an acceptable answer. Papers cannot be rejected simply for not including code, unless this is central to the contribution (e.g., for a new open-source benchmark).
        \item The instructions should contain the exact command and environment needed to run to reproduce the results. See the NeurIPS code and data submission guidelines (\url{https://nips.cc/public/guides/CodeSubmissionPolicy}) for more details.
        \item The authors should provide instructions on data access and preparation, including how to access the raw data, preprocessed data, intermediate data, and generated data, etc.
        \item The authors should provide scripts to reproduce all experimental results for the new proposed method and baselines. If only a subset of experiments are reproducible, they should state which ones are omitted from the script and why.
        \item At submission time, to preserve anonymity, the authors should release anonymized versions (if applicable).
        \item Providing as much information as possible in supplemental material (appended to the paper) is recommended, but including URLs to data and code is permitted.
    \end{itemize}

\item {\bf Experimental setting/details}
    \item[] Question: Does the paper specify all the training and test details (e.g., data splits, hyperparameters, how they were chosen, type of optimizer, etc.) necessary to understand the results?
    \item[] Answer: \answerYes{} 
    \item[] Justification: We leave all training and test details in the main paper and the Appendix.
    \item[] Guidelines: 
    \begin{itemize}
        \item The answer NA means that the paper does not include experiments.
        \item The experimental setting should be presented in the core of the paper to a level of detail that is necessary to appreciate the results and make sense of them.
        \item The full details can be provided either with the code, in appendix, or as supplemental material.
    \end{itemize}

\item {\bf Experiment statistical significance}
    \item[] Question: Does the paper report error bars suitably and correctly defined or other appropriate information about the statistical significance of the experiments?
    \item[] Answer: \answerYes{} 
    \item[] Justification: All error bars are explained in Section \ref{sec:5}
    \item[] Guidelines:
    \begin{itemize}
        \item The answer NA means that the paper does not include experiments.
        \item The authors should answer "Yes" if the results are accompanied by error bars, confidence intervals, or statistical significance tests, at least for the experiments that support the main claims of the paper.
        \item The factors of variability that the error bars are capturing should be clearly stated (for example, train/test split, initialization, random drawing of some parameter, or overall run with given experimental conditions).
        \item The method for calculating the error bars should be explained (closed form formula, call to a library function, bootstrap, etc.)
        \item The assumptions made should be given (e.g., Normally distributed errors).
        \item It should be clear whether the error bar is the standard deviation or the standard error of the mean.
        \item It is OK to report 1-sigma error bars, but one should state it. The authors should preferably report a 2-sigma error bar than state that they have a 96\% CI, if the hypothesis of Normality of errors is not verified.
        \item For asymmetric distributions, the authors should be careful not to show in tables or figures symmetric error bars that would yield results that are out of range (e.g. negative error rates).
        \item If error bars are reported in tables or plots, The authors should explain in the text how they were calculated and reference the corresponding figures or tables in the text.
    \end{itemize}

\item {\bf Experiments compute resources}
    \item[] Question: For each experiment, does the paper provide sufficient information on the computer resources (type of compute workers, memory, time of execution) needed to reproduce the experiments?
    \item[] Answer: \answerYes{} 
    \item[] Justification: Yes, all details are in the Appendix.
    \item[] Guidelines:
    \begin{itemize}
        \item The answer NA means that the paper does not include experiments.
        \item The paper should indicate the type of compute workers CPU or GPU, internal cluster, or cloud provider, including relevant memory and storage.
        \item The paper should provide the amount of compute required for each of the individual experimental runs as well as estimate the total compute. 
        \item The paper should disclose whether the full research project required more compute than the experiments reported in the paper (e.g., preliminary or failed experiments that didn't make it into the paper). 
    \end{itemize}
    
\item {\bf Code of ethics}
    \item[] Question: Does the research conducted in the paper conform, in every respect, with the NeurIPS Code of Ethics \url{https://neurips.cc/public/EthicsGuidelines}?
    \item[] Answer: \answerYes{} 
    \item[] Justification: The paper conforms in every respect to the code of ethics.
    \item[] Guidelines:
    \begin{itemize}
        \item The answer NA means that the authors have not reviewed the NeurIPS Code of Ethics.
        \item If the authors answer No, they should explain the special circumstances that require a deviation from the Code of Ethics.
        \item The authors should make sure to preserve anonymity (e.g., if there is a special consideration due to laws or regulations in their jurisdiction).
    \end{itemize}

\item {\bf Broader impacts}
    \item[] Question: Does the paper discuss both potential positive societal impacts and negative societal impacts of the work performed?
    \item[] Answer: \answerNA{} 
    \item[] Justification: Our work is a general graph representation learning framework with no positive/negative societal impacts beyond any general machine learning framework for graphs (message-passing graph neural networks, graph transformers, random walk neural networks). 
    \item[] Guidelines:
    \begin{itemize}
        \item The answer NA means that there is no societal impact of the work performed.
        \item If the authors answer NA or No, they should explain why their work has no societal impact or why the paper does not address societal impact.
        \item Examples of negative societal impacts include potential malicious or unintended uses (e.g., disinformation, generating fake profiles, surveillance), fairness considerations (e.g., deployment of technologies that could make decisions that unfairly impact specific groups), privacy considerations, and security considerations.
        \item The conference expects that many papers will be foundational research and not tied to particular applications, let alone deployments. However, if there is a direct path to any negative applications, the authors should point it out. For example, it is legitimate to point out that an improvement in the quality of generative models could be used to generate deepfakes for disinformation. On the other hand, it is not needed to point out that a generic algorithm for optimizing neural networks could enable people to train models that generate Deepfakes faster.
        \item The authors should consider possible harms that could arise when the technology is being used as intended and functioning correctly, harms that could arise when the technology is being used as intended but gives incorrect results, and harms following from (intentional or unintentional) misuse of the technology.
        \item If there are negative societal impacts, the authors could also discuss possible mitigation strategies (e.g., gated release of models, providing defenses in addition to attacks, mechanisms for monitoring misuse, mechanisms to monitor how a system learns from feedback over time, improving the efficiency and accessibility of ML).
    \end{itemize}
    
\item {\bf Safeguards}
    \item[] Question: Does the paper describe safeguards that have been put in place for responsible release of data or models that have a high risk for misuse (e.g., pretrained language models, image generators, or scraped datasets)?
    \item[] Answer: \answerNA{} 
    \item[] Justification: No new data or models are released with a high risk of misuse.
    \item[] Guidelines:
    \begin{itemize}
        \item The answer NA means that the paper poses no such risks.
        \item Released models that have a high risk for misuse or dual-use should be released with necessary safeguards to allow for controlled use of the model, for example by requiring that users adhere to usage guidelines or restrictions to access the model or implementing safety filters. 
        \item Datasets that have been scraped from the Internet could pose safety risks. The authors should describe how they avoided releasing unsafe images.
        \item We recognize that providing effective safeguards is challenging, and many papers do not require this, but we encourage authors to take this into account and make a best faith effort.
    \end{itemize}

\item {\bf Licenses for existing assets}
    \item[] Question: Are the creators or original owners of assets (e.g., code, data, models), used in the paper, properly credited and are the license and terms of use explicitly mentioned and properly respected?
    \item[] Answer: \answerYes{} 
    \item[] Justification: Yes, all assets are properly credited.
    \item[] Guidelines:
    \begin{itemize}
        \item The answer NA means that the paper does not use existing assets.
        \item The authors should cite the original paper that produced the code package or dataset.
        \item The authors should state which version of the asset is used and, if possible, include a URL.
        \item The name of the license (e.g., CC-BY 4.0) should be included for each asset.
        \item For scraped data from a particular source (e.g., website), the copyright and terms of service of that source should be provided.
        \item If assets are released, the license, copyright information, and terms of use in the package should be provided. For popular datasets, \url{paperswithcode.com/datasets} has curated licenses for some datasets. Their licensing guide can help determine the license of a dataset.
        \item For existing datasets that are re-packaged, both the original license and the license of the derived asset (if it has changed) should be provided.
        \item If this information is not available online, the authors are encouraged to reach out to the asset's creators.
    \end{itemize}

\item {\bf New assets}
    \item[] Question: Are new assets introduced in the paper well documented and is the documentation provided alongside the assets?
    \item[] Answer: \answerYes{} 
    \item[] Justification: All assets are properly documented.
    \item[] Guidelines:
    \begin{itemize}
        \item The answer NA means that the paper does not release new assets.
        \item Researchers should communicate the details of the dataset/code/model as part of their submissions via structured templates. This includes details about training, license, limitations, etc. 
        \item The paper should discuss whether and how consent was obtained from people whose asset is used.
        \item At submission time, remember to anonymize your assets (if applicable). You can either create an anonymized URL or include an anonymized zip file.
    \end{itemize}

\item {\bf Crowdsourcing and research with human subjects}
    \item[] Question: For crowdsourcing experiments and research with human subjects, does the paper include the full text of instructions given to participants and screenshots, if applicable, as well as details about compensation (if any)? 
    \item[] Answer: \answerNA{} 
    \item[] Justification: No crowdsourcing or research with human subjects was used.
    \item[] Guidelines:
    \begin{itemize}
        \item The answer NA means that the paper does not involve crowdsourcing nor research with human subjects.
        \item Including this information in the supplemental material is fine, but if the main contribution of the paper involves human subjects, then as much detail as possible should be included in the main paper. 
        \item According to the NeurIPS Code of Ethics, workers involved in data collection, curation, or other labor should be paid at least the minimum wage in the country of the data collector. 
    \end{itemize}

\item {\bf Institutional review board (IRB) approvals or equivalent for research with human subjects}
    \item[] Question: Does the paper describe potential risks incurred by study participants, whether such risks were disclosed to the subjects, and whether Institutional Review Board (IRB) approvals (or an equivalent approval/review based on the requirements of your country or institution) were obtained?
    \item[] Answer: \answerNA{} 
    \item[] Justification: All data used in this work are publicly available datasets.
    \item[] Guidelines: 
    \begin{itemize}
        \item The answer NA means that the paper does not involve crowdsourcing nor research with human subjects.
        \item Depending on the country in which research is conducted, IRB approval (or equivalent) may be required for any human subjects research. If you obtained IRB approval, you should clearly state this in the paper. 
        \item We recognize that the procedures for this may vary significantly between institutions and locations, and we expect authors to adhere to the NeurIPS Code of Ethics and the guidelines for their institution. 
        \item For initial submissions, do not include any information that would break anonymity (if applicable), such as the institution conducting the review.
    \end{itemize}

\item {\bf Declaration of LLM usage}
    \item[] Question: Does the paper describe the usage of LLMs if it is an important, original, or non-standard component of the core methods in this research? Note that if the LLM is used only for writing, editing, or formatting purposes and does not impact the core methodology, scientific rigorousness, or originality of the research, declaration is not required.
    \item[] Answer: \answerNA{} 
    \item[] Justification: LLMs do not impact the core methodology of the research.
    \item[] Guidelines:
    \begin{itemize}
        \item The answer NA means that the core method development in this research does not involve LLMs as any important, original, or non-standard components.
        \item Please refer to our LLM policy (\url{https://neurips.cc/Conferences/2025/LLM}) for what should or should not be described.
    \end{itemize}

\end{enumerate}

\newpage

\appendix

\section{Mathematical Proofs \label{sec:pfs}}

\subsection{Random Walk Neural Network Expressive Power \label{sec:rwnn_exp}}

We begin with the 1–Weisfeiler–Lehman (WL) color refinement, which iteratively updates each node’s color by hashing its current color together with the multiset of its neighbors’ colors (Section~\ref{sec:wl}). WL is known to upper-bound MPNN expressivity \citep{xu2018how, morris2019weisfeiler}. We then introduce \emph{Walk Weisfeiler–Lehman} (WWL), a walk-based refinement that updates a node from the multiset of \emph{colored walks} of length $\le \ell$ that terminates at it (Section \ref{sec:wwl}). We establish monotonicity of WWL in the number of refinement rounds $t$, the maximum walk length $\ell$, and the initialization $\pi^{(0)}$ (i.e., richer initial features yield finer partitions). We further show that WWL upper-bounds the expressive power of RWNNs (without positional/ID signals). Finally, using \emph{unfolding trees}, which simultaneously captures the nodes visible to $t$ rounds of message passing and encodes all root-terminating walks of length $\le t$, we prove the main expressivity results that unify MPNNs and RWNNs: equivalence under full coverage and strict separation under partial coverage (Section \ref{app:equal}). We provide a more detailed review of existing WL variants and their relation to WWL in Appendix \ref{sec:ext_disc}.

\subsubsection{Weisfeiler–Lehman (WL) \label{sec:wl}}

We begin with the 1-dimensional Weisfeiler–Lehman (WL) color-refinement procedure, which upper-bounds the expressive power of message-passing GNNs and, as we will show later, also upper-bounds RWNN expressive power. Intuitively, WL iteratively refines node labels by hashing each node’s current label together with the multiset of its neighbors’ labels.

\begin{definition}[1-WL color refinement]
Let $G=(V,E)$ be an unlabeled graph and let $\mathcal N(u)$ denote the neighbors of $u\in V$. Initialize a coloring $\pi^{(0)}_{\mathrm{WL}}:V\to\Sigma$ with a constant value (e.g., $\pi^{(0)}_{\mathrm{WL}}(u)=1$ for all $u$). For $t\ge 0$, update
\[
\pi^{(t+1)}_{\mathrm{WL}}(u)
\;=\;
\mathrm{Hash}\!\left(
  \pi^{(t)}_{\mathrm{WL}}(u),\;
  \bigl\{\!\!\bigl\{\,\pi^{(t)}_{\mathrm{WL}}(v)\,:\, v\in \mathcal N(u)\,\bigr\}\!\!\bigr\}
\right)
\quad \forall u\in V,
\]
where $\mathrm{Hash}$ is injective and maps pairs of the form (current color, neighbor colors) to $\Sigma$. The process \emph{stabilizes} at the first $t^\star$ for which $\pi^{(t^\star)}_{\mathrm{WL}}=\pi^{(t^\star+1)}_{\mathrm{WL}}$; we denote the stable coloring by $\pi^{(\infty)}_{\mathrm{WL}}$.
\end{definition}

To compare two graphs $G$ and $H$, run 1-WL on each. If the stable color multisets differ (e.g., some color has a different node count), the graphs are certified non-isomorphic. If the stable colorings agree, the test is inconclusive (the graphs may still be non-isomorphic). For the remainder of the analysis, we write \(\alpha \preceq \beta\) for \emph{node-level} colorings \(\alpha,\beta:V\to\Sigma\) to mean that \(\beta\) refines \(\alpha\): if \(\beta(u)=\beta(v)\) then \(\alpha(u)=\alpha(v)\). These notions coincide with graph-level distinguishability: applying an injective readout on multiset colors $\alpha$ and $\beta$ for $\alpha \preceq \beta$ yields graph-level functions $f_\alpha$ and $f_\beta$ such that $f_\alpha \preceq f_\beta$. Thus, distinguishability at the node-level translates to distinguishability at the graph-level. 

WL has been used to quantify the expressive power of MPNNs. For any standard message-passing NN, its expressive power is no greater than that of WL. Moreover, if its aggregation function is injective on the multisets of node neighbors, its expressive power matches that of WL.
\begin{lemma}[MPNN vs.\ 1-WL Expressivity \citep{xu2018how, morris2019weisfeiler}]
Let \(f_{\mathrm{MPNN}}\) be a MPNN with a permutation-invariant readout, and let \(\pi_{\mathrm{WL}}\) denote the 1-WL coloring. Then
\[
f_{\mathrm{MPNN}} \;\preceq\; \pi_{\mathrm{WL}}.
\]
Moreover, if the multiset aggregator \(f_{\mathrm{agg}}\) used by \(f_{\mathrm{MPNN}}\) is injective, then
\[
f_{\mathrm{MPNN}} \;\simeq\; \pi_{\mathrm{WL}}.
\]
\vspace{-2em}
\label{thm:mpnn_equiv}
\end{lemma}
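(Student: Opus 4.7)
The proof will establish the two directions separately. The main tool is a simultaneous induction on the depth $t$ of MPNN layers and the iteration $t$ of WL refinement, using that both quantities depend on the same underlying combinatorial data (the current state plus the multiset of neighbor states).

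For the upper bound $f_{\mathrm{MPNN}} \preceq \pi_{\mathrm{WL}}$, I would prove by induction on $t$ the node-level claim: if $\pi^{(t)}_{\mathrm{WL}}(u)=\pi^{(t)}_{\mathrm{WL}}(v)$ then $h^{(t)}(u)=h^{(t)}(v)$, where $h^{(t)}$ denotes the MPNN representation after $t$ layers. The base case $t=0$ is immediate in the unlabeled setting since both $\pi^{(0)}_{\mathrm{WL}}$ and the initial MPNN features are constant. For the inductive step, injectivity of $\mathrm{Hash}$ forces $\pi^{(t-1)}_{\mathrm{WL}}(u)=\pi^{(t-1)}_{\mathrm{WL}}(v)$ and equality of the neighbor color multisets; by the inductive hypothesis applied pointwise, the neighbor representation multisets $\{\!\{h^{(t-1)}(w):w\in\mathcal N(u)\}\!\}$ and $\{\!\{h^{(t-1)}(w):w\in\mathcal N(v)\}\!\}$ coincide, so any permutation-invariant $f_{\mathrm{agg}}$ returns the same value. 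Lifting from node-level to graph-level via the permutation-invariant readout (which cannot separate equal node-representation multisets) yields $f_{\mathrm{MPNN}}\preceq\pi_{\mathrm{WL}}$.

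For the converse under an injective multiset aggregator, I would construct (or invoke the existence of) an MPNN whose per-layer update is itself injective on the pair (current state, neighbor multiset), so that $h^{(t)}$ is a bijective re-encoding of $\pi^{(t)}_{\mathrm{WL}}$. The standard way is to use a Deep-Sets style representation (Lemma~5 of Xu et al., 2018), which shows that on any countable feature domain there exist injective maps from multisets into $\mathbb R^d$; composing with an injective combine step yields an update that matches $\mathrm{Hash}$ up to a bijection. A symmetric induction then gives $\pi_{\mathrm{WL}}\preceq f_{\mathrm{MPNN}}$ at the node level, and an injective graph-level readout (again by the Deep-Sets construction) lifts this to graphs. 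Combined with the upper bound, this gives $f_{\mathrm{MPNN}}\simeq\pi_{\mathrm{WL}}$.

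The main obstacle is the second direction, specifically the need to justify injectivity on multisets of (potentially real-valued) node embeddings; this is not a routine calculation but a structural result. I would handle it by restricting to the countable domain generated over finitely many graphs of bounded size (so that the reachable feature set is countable and Lemma~5 of Xu et al.\ applies directly), which suffices for the distinguishability statement $\preceq$ since it only compares finitely many graphs at a time. The rest of the argument is bookkeeping: verifying that the constructed aggregator is permutation invariant, that the composition of injective maps is injective, and that stabilization of WL corresponds to termination of the induction at depth $t^\star$.
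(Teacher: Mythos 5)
Your proof is correct and is essentially the standard argument from the works the paper cites (\citet{xu2018how, morris2019weisfeiler}); the paper itself states this lemma as a known result without providing its own proof. Both the inductive upper bound via injectivity of the hash and the converse via Deep-Sets--style injective multiset aggregators on countable domains are exactly how the cited references establish the equivalence, so there is nothing further to reconcile.
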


\subsubsection{Walk Weisfeiler--Lehman (WWL) \label{sec:wwl}}

Building on WL, we now align Weisfeiler--Lehman to random walk models by defining a node-level WWL scheme that refines a node’s label from the multiset of \emph{colored walks} incident to it.

\begin{definition}[WWL at length \(\ell\)]
Let \(G=(V,E)\) be a graph and \(\ell\in\mathbb N\). For \(L\ge 1\), let
\[
\mathcal W_L \;=\; \{\, W=(w_0,\dots,w_L)\in V^{L+1} \,:\, (w_{i-1},w_i)\in E\ \forall i\in[L] \,\}
\]
be the set of length-\(L\) walks, and write \(\mathcal W_{\le \ell}=\bigcup_{\ell=1}^\ell \mathcal W_L\), the union of all walks of length $\leq \ell$. For a node \(u\in V\), define its terminating-walk neighborhood
\[
\mathcal W_{\le \ell}(u) \;=\; \{\, W=(w_0,\dots,w_L)\in \mathcal W_{\le \ell} \,:\, w_\ell=u \,\}.
\]
Given an initial coloring \(\pi^{(0)}:V\to\Sigma\) (e.g., uniform or from node features), define for any walk
\(
\mathrm{col}^{(t)}_{\operatorname{WWL}^\ell}(W) \;=\; \bigl(\pi^{(t)}_{\operatorname{WWL}^\ell}(w_0),\dots,\pi^{(t)}_{\operatorname{WWL}^\ell}(w_L)\bigr)
\), the colored walk obtained by applying $\pi^{(t)}_{\operatorname{WWL}^\ell}$ to each node in the walk. The $\operatorname{WWL}^\ell$ update is, for all \(u\in V\),
\[
\pi^{(t+1)}_{\operatorname{WWL}^\ell}(u)
\;=\;
\mathrm{Hash}\!\Bigl(
  \pi^{(t)}_{\operatorname{WWL}^\ell}(u),\;
  \bigl\{\!\!\bigl\{\,\mathrm{col}^{(t)}_{\operatorname{WWL}^\ell}(W)\,:\, W\in \mathcal W_{\le \ell}(u)\,\bigr\}\!\!\bigr\}
\Bigr).
\]
\label{def:wwl}
\end{definition}


\begin{lemma}[Monotonicity in \(t\), \(\ell\), and \(\pi_0\)]
\label{lem:wwl_monotone}
Fix \(\ell\le \ell'\), \(t\le t'\), and initial colorings \(\pi_0 \preceq \pi_0'\).
Then
\begin{align}
\text{(time)}\quad&
\pi^{(t)}_{\operatorname{WWL}^\ell(\pi_0)}
\;\preceq\;
\pi^{(t')}_{\operatorname{WWL}^\ell(\pi_0)},\\[2pt]
\text{(length)}\quad&
\pi^{(t)}_{\operatorname{WWL}^\ell(\pi_0)}
\;\preceq\;
\pi^{(t)}_{\operatorname{WWL}^{\ell'}(\pi_0)},\\[2pt]
\text{(initialization)}\quad&
\pi^{(t)}_{\operatorname{WWL}^\ell(\pi_0)}
\;\preceq\;
\pi^{(t)}_{\operatorname{WWL}^\ell(\pi_0')}.
\end{align}
Consequently, combining each result yields \(\pi^{(t)}_{\operatorname{WWL}^\ell(\pi_0)} \preceq
\pi^{(t')}_{\operatorname{WWL}^{\ell'}(\pi_0')}\) for \(t\le t'\), \(\ell\le \ell'\), $\pi_0 \preceq \pi_0'$.
\end{lemma}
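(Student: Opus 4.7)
The plan is to prove all three monotonicity claims by induction on the refinement round $t$, relying on two elementary facts: (i) the hash in the $\operatorname{WWL}^\ell$ update is injective, so equality of round-$(t{+}1)$ colors forces coordinate-wise equality of its inputs (the round-$t$ color and the terminating-walk multiset); and (ii) if $\alpha \preceq \beta$ is a refinement of node colorings, there is a well-defined color-merging map $\phi$ with $\alpha = \phi \circ \beta$, whose componentwise action on walks and subsequent multiset formation preserves equality. I would state (ii) as a small pushforward lemma first, then invoke it uniformly in all three arguments.

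For time monotonicity, I would establish only the one-step claim $\pi^{(t)}_{\operatorname{WWL}^\ell(\pi_0)} \preceq \pi^{(t+1)}_{\operatorname{WWL}^\ell(\pi_0)}$ directly: if two nodes share round-$(t{+}1)$ colors, injectivity of the hash forces agreement of the first input coordinate, which is precisely the round-$t$ color. Transitivity of $\preceq$ then extends this to all $t \le t'$.

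The length and initialization claims are handled by parallel inductions on $t$. For the length case, write $\pi^{(t)}_\ell := \pi^{(t)}_{\operatorname{WWL}^\ell(\pi_0)}$ and analogously for $\ell'$; the base case is trivial since $\pi^{(0)}_\ell = \pi^{(0)}_{\ell'} = \pi_0$. Assuming $\pi^{(t)}_\ell \preceq \pi^{(t)}_{\ell'}$ via some merging map $\phi$, suppose $\pi^{(t+1)}_{\ell'}(u) = \pi^{(t+1)}_{\ell'}(v)$. Injectivity of the hash yields both $\pi^{(t)}_{\ell'}(u) = \pi^{(t)}_{\ell'}(v)$ (whence $\pi^{(t)}_\ell(u) = \pi^{(t)}_\ell(v)$ via $\phi$), and equality of the $\pi^{(t)}_{\ell'}$-colored walk multisets indexed by $\mathcal W_{\le \ell'}(u)$ and $\mathcal W_{\le \ell'}(v)$. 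Applying $\phi$ componentwise converts this into equality of $\pi^{(t)}_\ell$-colored walk multisets on the same underlying walks, and restricting to length at most $\ell$ preserves that equality since the length predicate depends only on the colored walk itself. Feeding these into the $\ell$-update closes the induction. The initialization case is identical in shape, with $(\ell, \ell')$ replaced by $(\pi_0, \pi_0')$ and the base case provided directly by the hypothesis $\pi_0 \preceq \pi_0'$. The combined statement then follows from transitivity of $\preceq$ by chaining the three individual refinements.

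The main obstacle is almost entirely bookkeeping rather than conceptual: one must verify carefully that a color refinement lifts through componentwise application to walks, then through multiset aggregation, and finally commutes with restriction from walks of length $\le \ell'$ to walks of length $\le \ell$. Once the multiset-pushforward lemma is pinned down cleanly, each inductive step reduces to one invocation of hash injectivity followed by one application of that lemma, and the three parts come out essentially by the same template.
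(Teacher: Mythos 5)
Your proposal is correct and follows essentially the same route as the paper's proof: hash injectivity for time monotonicity, projection of the terminating-walk multisets for the length claim, and a color-merging (pushforward) map for the initialization claim, with the combined statement obtained by transitivity of $\preceq$. Your explicit induction on $t$ for the length case is in fact slightly more careful than the paper's one-line projection argument, which tacitly relies on the round-$t$ refinement relation $\pi^{(t)}_{\operatorname{WWL}^\ell(\pi_0)} \preceq \pi^{(t)}_{\operatorname{WWL}^{\ell'}(\pi_0)}$ to compare walks colored by two different colorings before projecting.
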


\begin{proof}
\emph{Monotonicity in \(t\).}
At each step,
\(\pi^{(t+1)}_{\operatorname{WWL}^\ell}(u)=\mathrm{Hash}\bigl(
\pi^{(t)}_{\operatorname{WWL}^\ell}(u),\;\cdot\;\bigr)\)
includes the current color as an input. By injectivity of \(\mathrm{Hash}\),
if two nodes receive the same new color then they had the same current color.
Thus \(\pi^{(t)}_{\operatorname{WWL}^\ell}\preceq \pi^{(t+1)}_{\operatorname{WWL}^\ell}\),
and induction gives the stated inequality for \(t\le t'\).

\emph{Monotonicity in \(\ell\).}
Let \(\ell\le \ell'\). For each node \(u\), the multiset of colored
terminating walks of lengths \(\le \ell\) is obtained from the corresponding
multiset for lengths \(\le \ell'\) by the projection that discards all walks
of length \(>\ell\). Therefore, if two nodes are equal under
\(\operatorname{WWL}^{\ell'}\), they are equal under \(\operatorname{WWL}^{\ell}\) as well. Injectivity of \(\mathrm{Hash}\) yields \(\pi^{(t)}_{\operatorname{WWL}^\ell}\preceq \pi^{(t)}_{\operatorname{WWL}^{\ell'}}\).

\emph{Monotonicity in the initialization \(\pi_0\).}
Assume \(\pi_0 \preceq \pi_0'\). Then there exists a color-forgetting map
\(\rho\) with \(\pi_0=\rho\circ \pi_0'\). Apply \(\rho\) pointwise to every
color in each colored walk: for any terminating walk \(W=(w_0,\dots,w_L)\),
\[
(\pi_0(w_0),\dots,\pi_0(w_L))
=
\bigl(\rho(\pi_0'(w_0)),\dots,\rho(\pi_0'(w_L))\bigr).
\]
Hence the multiset of \(\pi_0\)-colored walks at any node is the image, under
this deterministic transformation, of the multiset of \(\pi_0'\)-colored walks.
Consequently, equality of the \(\pi_0'\)-based walk multisets implies equality
of the \(\pi_0\)-based walk multisets, and injectivity of \(\mathrm{Hash}\)
gives \(\pi^{(1)}_{\operatorname{WWL}^\ell(\pi_0)} \preceq
\pi^{(1)}_{\operatorname{WWL}^\ell(\pi_0')}\). The same argument iterates,
since each WWL round recomputes colors from the previous round’s coloring
via the same construction, yielding
\(\pi^{(t)}_{\operatorname{WWL}^\ell(\pi_0)} \preceq
\pi^{(t)}_{\operatorname{WWL}^\ell(\pi_0')}\) for all \(t\).
\end{proof}

The following lemma is an analogue to expressive results on MPNNs and 1-WL. Intuitively, $\operatorname{WWL}^\ell$ upper bounds RWNN expressivity, and RWNNs can match $\operatorname{WWL}^\ell$ if their aggregator is injective.
\begin{lemma}[RWNN vs.\ $\operatorname{WWL}^\ell$ Expressivity]
Let \(f_{\mathrm{RWNN}}^\ell\) be a random walk neural network that, for each node \(u\), aggregates over the multiset of all terminating walks of lengths \(1,\dots,\ell\) ending at \(u\), via a permutation-invariant aggregator and a sequence encoder applied to each walk. Let \(\pi_{\operatorname{WWL}^\ell}\) denote the $\operatorname{WWL}^\ell$ coloring.

\begin{enumerate}
\item (\emph{Upper bound}) For any choice of encoders/aggregators,
\[
f_{\mathrm{RWNN}}^\ell \;\preceq\; \pi_{\operatorname{WWL}^\ell}.
\]
That is, if two graphs are indistinguishable by $\operatorname{WWL}^\ell$, they are indistinguishable by \(f_{\mathrm{RWNN}}^\ell\).

\item (\emph{Tightness under injectivity}) Suppose the sequence encoder
\(f_\mathrm{seq}\) is injective on \emph{length-aware} color sequences and the nodewise multiset aggregator \(f_\mathrm{agg}\) is injective. Then
\[
f_{\mathrm{RWNN}}^\ell \;\simeq\; \pi_{\operatorname{WWL}^\ell}.
\]
\end{enumerate}
\label{lem:rwnn_equal}
\end{lemma}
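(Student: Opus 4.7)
The plan is to prove both parts by induction on the refinement/layer index $t$, pairing each round of $\operatorname{WWL}^\ell$ with one RWNN layer.

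\emph{Upper bound.} The base case identifies the RWNN's initial node embedding with $\pi^{(0)}$, so nodes with identical initial colors receive identical initial representations. For the inductive step, suppose $\pi^{(t+1)}_{\operatorname{WWL}^\ell}(u)=\pi^{(t+1)}_{\operatorname{WWL}^\ell}(v)$. Injectivity of $\mathrm{Hash}$ forces (i) $\pi^{(t)}_{\operatorname{WWL}^\ell}(u)=\pi^{(t)}_{\operatorname{WWL}^\ell}(v)$ and (ii) equality of the multisets of colored walks in $\mathcal W_{\le\ell}(u)$ and $\mathcal W_{\le\ell}(v)$. By the induction hypothesis, the layer-$t$ RWNN representations are functions of the layer-$t$ WWL colors, so the matched color multisets lift to matched multisets of sequence-encoder inputs. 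Applying the same $f_\mathrm{seq}$ to matched walks yields matched multisets of per-walk representations, and any permutation-invariant $f_\mathrm{agg}$ on equal multisets returns equal outputs. This gives $f_{\mathrm{RWNN}}^\ell(u)=f_{\mathrm{RWNN}}^\ell(v)$ at the node level, and an injective graph-level readout lifts this to $f_{\mathrm{RWNN}}^\ell \preceq \pi_{\operatorname{WWL}^\ell}$ at the graph level.

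\emph{Tightness.} Combined with the upper bound, only the reverse refinement $\pi_{\operatorname{WWL}^\ell} \preceq f_{\mathrm{RWNN}}^\ell$ remains. I induct again on $t$, showing that the layer-$t$ RWNN embedding induces at least as fine a partition as $\pi^{(t)}_{\operatorname{WWL}^\ell}$. Assuming this at layer $t$, the layer-$(t+1)$ update applies $f_\mathrm{seq}$ to each colored terminating walk and then $f_\mathrm{agg}$ to the resulting multiset. Length-aware injectivity of $f_\mathrm{seq}$ ensures that distinct colored walks (with length encoded) map to distinct per-walk representations, and injectivity of $f_\mathrm{agg}$ ensures that distinct multisets produce distinct aggregates. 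Thus the RWNN's layer-$(t+1)$ output is a bijective function of the pair consisting of $u$'s current color and the multiset of colored walks in $\mathcal W_{\le\ell}(u)$. This is precisely the argument of the WWL $\mathrm{Hash}$, so any distinction WWL makes at round $t+1$ is also made by the RWNN.

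The main obstacle is bookkeeping the two structural assumptions baked into the WWL update that must be mirrored on the RWNN side: the length-awareness of $f_\mathrm{seq}$ and the inclusion of the current color $\pi^{(t)}(u)$ alongside the walk multiset. For the former, walks of different lengths sharing a colored suffix would collide under a length-agnostic encoder, so the sequence model must encode length (e.g., via a length token, positional padding, or terminal symbol). For the latter, WWL's hash takes $\pi^{(t)}(u)$ explicitly, and this must be supplied to the RWNN either by including the length-zero walk $(u)$ in $\mathcal W_{\le\ell}(u)$ or by retaining $u$'s layer-$t$ representation through a residual connection into the aggregator. Both modeling choices are standard, but they must be stated precisely so that the inductive step closes in both directions without a gap.
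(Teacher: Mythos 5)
Your proposal matches the paper's proof in both structure and substance: the same layer-by-round induction for the upper bound (injectivity of the WWL hash yields a walk-multiset bijection, which the permutation-invariant aggregator preserves) and the same contrapositive argument for tightness via injectivity of $f_\mathrm{seq}$ on length-aware sequences and of $f_\mathrm{agg}$ on multisets. Your closing remarks about length-awareness and retaining the current node's state are exactly the assumptions the paper also relies on (it notes that "an injective transform of the current node state" must enter the RWNN update), so there is no gap.
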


\begin{proof}
\textit{(Upper bound).}
We prove by induction on \(t\) that
\(\pi_{\operatorname{WWL}^\ell}^{(t)}(u)=\pi_{\operatorname{WWL}^\ell}^{(t)}(v)\)
implies \(f_{\mathrm{RWNN}}^{\ell,t}(u)=f_{\mathrm{RWNN}}^{\ell,t}(v)\).

\emph{Base case \(t=0\).} Both procedures start from the same initialization (e.g., uniform or fixed features), so the claim holds trivially.

\emph{Inductive step.} Assume the claim holds at depth \(t\).
Take \(u,v\) with
\(\pi_{\operatorname{WWL}^\ell}^{(t+1)}(u)=\pi_{\operatorname{WWL}^\ell}^{(t+1)}(v)\).
By injectivity of the WWL hash, the entire inputs to the hash coincide, hence
\[
\bigl\{\!\!\bigl\{\pi_{\operatorname{WWL}^\ell}^{(t)}(W): W\in\mathcal W_{\le\ell}(u)\bigr\}\!\!\bigr\}
=
\bigl\{\!\!\bigl\{\pi_{\operatorname{WWL}^\ell}^{(t)}(W'): W'\in\mathcal W_{\le\ell}(v)\bigr\}\!\!\bigr\},
\]
where each \(\pi_{\operatorname{WWL}^\ell}^{(t)}(W)\) is the length-aware color sequence along the walk \(W\).
Thus there is a bijection between terminating walks at \(u\) and \(v\) that preserves these sequences.
By the induction hypothesis, matched nodes with equal WWL color at round \(t\) have equal RWNN representations at depth \(t\).
Therefore, for each matched walk pair, the inputs to the per-walk sequence encoder \(f_{\mathrm{seq}}\) agree elementwise, so per-walk encodings match; applying the same permutation-invariant multiset aggregator \(f_{\mathrm{agg}}\) yields
\(f_{\mathrm{RWNN}}^{\ell,t+1}(u)=f_{\mathrm{RWNN}}^{\ell,t+1}(v)\).
This completes the induction and the upper bound.

\medskip
\textit{(Equivalence under injectivity).}
Assume \(f_{\mathrm{seq}}\) is injective on length-aware sequences and \(f_{\mathrm{agg}}\) is injective on multisets.
Let \(u,v\) satisfy \(\pi_{\operatorname{WWL}^\ell}^{(t+1)}(u)\neq \pi_{\operatorname{WWL}^\ell}^{(t+1)}(v)\).
By injectivity of the WWL hash, either their current colors at round \(t\) differ, or their multisets
\(\{\!\{\pi_{\operatorname{WWL}^\ell}^{(t)}(W):W\in\mathcal W_{\le\ell}(\cdot)\}\!\}\) differ.
In the first case, including (an injective transform of) the current node state in the RWNN update separates \(u\) and \(v\).
In the second case, there is no bijection between the two multisets of colored sequences; since \(f_{\mathrm{seq}}\) is injective on sequences and \(f_{\mathrm{agg}}\) is injective on multisets, the aggregated RWNN representations must differ at round \(t{+}1\).
Combining with the upper bound, we conclude
\(f_{\mathrm{RWNN}}^{\ell,t}\simeq \pi_{\operatorname{WWL}^\ell}^{(t)}\) for all \(t\).
\end{proof}

\subsubsection{RWNN-MPNN Equivalence Under Full Coverage (Theorem \ref{thm:equal}, Corollary \ref{cor:pc}) \label{app:equal}}

\paragraph{Unfolding Trees.} We introduce the \emph{unfolding tree} from \citet{morris2020weisfeiler, kriege2022weisfeiler}, which makes explicit the bridge between Weisfeiler--Lehman (WL) refinement and random walks. For a node \(u\) in \(G\), the unfolding tree at depth \(\ell\) enumerates, with multiplicities, all vertices seen by successive layers of message passing around \(u\). Equivalently, every leaf-to-root path in the unfolding tree corresponds to a walk in \(G\) that \emph{terminates} at \(u\). Hence the unfolding tree simultaneously encodes (i) all messages propagated in a message-passing view and (ii) all terminating walks of length \(\le \ell\) at \(u\). We will leverage this structure to relate the expressive power of WL and WWL.

\begin{definition}[Unfolding tree \citep{morris2020weisfeiler, kriege2022weisfeiler}]
\label{def:unfold}
Let \(G=(V,E)\) be a graph, \(\ell\in\mathbb N\), and \(u\in V\). The \emph{unfolding tree} of depth \(\ell\) rooted at \(u\), denoted \(T^G[\ell,u]\), is the rooted tree defined recursively as follows:
\begin{itemize}
\item \(T^G[0,u]\) consists of a single root node labeled by \(u\).
\item For \(\ell\ge 1\), \(T^G[\ell,u]\) has root labeled by \(u\); for each neighbor \(v\in \mathcal N_G(u)\), attach as a child a fresh copy of \(T^G[\ell-1,v]\).
\end{itemize}
\end{definition}

The first key fact ties WL colors to unfolding trees: WL’s \(\ell\)-round color of a node is exactly the isomorphism type of its depth-\(\ell\) unfolding tree.

\begin{lemma}[WL \(\leftrightarrow\) unfolding tree \citep{kriege2022weisfeiler}]
\label{lem:tree}
Let \(G,H\) be graphs, \(u\in V(G)\), \(v\in V(H)\), and \(\ell\ge 1\). Then
\[
\pi_{\operatorname{WL}}^{(\ell)}(u)=\pi_{\operatorname{WL}}^{(\ell)}(v)
\quad\Longleftrightarrow\quad
T^G[\ell,u]\;\cong\;T^H[\ell,v],
\]
\end{lemma}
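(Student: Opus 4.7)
The plan is an induction on $\ell$ that exploits the parallel recursive structure of the $1$-WL refinement and the unfolding tree construction. The base case $\ell=0$ is immediate: $T^G[0,u]$ and $T^H[0,v]$ are both single-node rooted trees and are therefore always isomorphic as rooted trees, while the constant WL initialization gives $\pi^{(0)}_{\mathrm{WL}}(u)=\pi^{(0)}_{\mathrm{WL}}(v)$ for all $u,v$. Both sides hold vacuously, and so the biconditional holds. I then assume it holds at depth $\ell-1$ for every pair of graphs and every choice of root.

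For the $(\Leftarrow)$ direction, let $\varphi$ be a root-preserving rooted-tree isomorphism $T^G[\ell,u]\to T^H[\ell,v]$. By the inductive definition of the unfolding tree, $\varphi$ restricts to a bijection between the children of the two roots, which induces a bijection $\sigma:\mathcal N_G(u)\to\mathcal N_H(v)$ together with subtree isomorphisms $T^G[\ell-1,u']\cong T^H[\ell-1,\sigma(u')]$ for every $u'\in\mathcal N_G(u)$. Applying the induction hypothesis to each matched pair yields $\pi^{(\ell-1)}_{\mathrm{WL}}(u')=\pi^{(\ell-1)}_{\mathrm{WL}}(\sigma(u'))$, so the multisets of round-$(\ell-1)$ neighbor colors at $u$ and $v$ agree. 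Since $T^G[\ell-1,u]$ is the depth-$(\ell-1)$ truncation of $T^G[\ell,u]$ (and similarly for $v$), the existence of $\varphi$ also gives $T^G[\ell-1,u]\cong T^H[\ell-1,v]$, hence $\pi^{(\ell-1)}_{\mathrm{WL}}(u)=\pi^{(\ell-1)}_{\mathrm{WL}}(v)$ by a second invocation of the induction hypothesis. Injectivity of the WL hash then yields $\pi^{(\ell)}_{\mathrm{WL}}(u)=\pi^{(\ell)}_{\mathrm{WL}}(v)$.

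For the $(\Rightarrow)$ direction, suppose $\pi^{(\ell)}_{\mathrm{WL}}(u)=\pi^{(\ell)}_{\mathrm{WL}}(v)$. Injectivity of the WL hash forces both the current-color component and the neighbor-multiset component of its input to coincide, giving $\pi^{(\ell-1)}_{\mathrm{WL}}(u)=\pi^{(\ell-1)}_{\mathrm{WL}}(v)$ and an equality of multisets $\{\!\{\pi^{(\ell-1)}_{\mathrm{WL}}(u'):u'\in\mathcal N_G(u)\}\!\}=\{\!\{\pi^{(\ell-1)}_{\mathrm{WL}}(v'):v'\in\mathcal N_H(v)\}\!\}$. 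Equality of these multisets furnishes a color-preserving bijection $\sigma:\mathcal N_G(u)\to\mathcal N_H(v)$ with $\pi^{(\ell-1)}_{\mathrm{WL}}(u')=\pi^{(\ell-1)}_{\mathrm{WL}}(\sigma(u'))$ for all $u'$. The induction hypothesis applied to each pair yields subtree isomorphisms $T^G[\ell-1,u']\cong T^H[\ell-1,\sigma(u')]$. Gluing these along $\sigma$ and mapping the root $u$ to $v$ assembles a rooted isomorphism $T^G[\ell,u]\cong T^H[\ell,v]$, closing the induction.

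The only nontrivial step is the upgrade from equality of neighbor color \emph{multisets} to a concrete color-preserving bijection on neighbor vertices, which is precisely what multiset equality provides once a per-color pairing is selected on each side. Everything else is bookkeeping: restricting a rooted tree isomorphism to its root's children, invoking injectivity of the hash to invert the WL update, and assembling paired subtree isomorphisms into a single rooted tree isomorphism via the neighbor bijection.
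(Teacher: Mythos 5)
Your proof is correct. Note that the paper does not actually prove this lemma --- it is imported as a known result from \citet{kriege2022weisfeiler} and used as a black box --- so there is no in-paper argument to compare against; what you have written is the standard self-contained induction on $\ell$, and it is sound. Both directions are handled properly: the strengthened induction hypothesis (over all graphs and all roots) is exactly what is needed to apply the claim to the neighbor pairs $(u',\sigma(u'))$, the observation that $T^G[\ell-1,u]$ is the depth-$(\ell-1)$ truncation of $T^G[\ell,u]$ correctly supplies $\pi^{(\ell-1)}_{\mathrm{WL}}(u)=\pi^{(\ell-1)}_{\mathrm{WL}}(v)$ in the $(\Leftarrow)$ direction, and the passage from multiset equality to a color-preserving bijection is the right (and only nontrivial) step in the $(\Rightarrow)$ direction. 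One cosmetic slip: at the end of the $(\Leftarrow)$ direction you invoke \emph{injectivity} of the hash to conclude $\pi^{(\ell)}_{\mathrm{WL}}(u)=\pi^{(\ell)}_{\mathrm{WL}}(v)$, but there you only need that $\mathrm{Hash}$ is a well-defined function (equal inputs give equal outputs); injectivity is genuinely needed only in the $(\Rightarrow)$ direction, where you do use it correctly. A pedantic reader might also want you to say that WL is run on the disjoint union $G\uplus H$ so that colors of nodes in different graphs are comparable, but the paper's own statement leaves this implicit as well.
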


Unfolding trees also capture terminating walks: every leaf-to-root path in \(T^G[\ell,u]\) reads off a unique length-\(\ell\) walk in \(G\) ending at \(u\), and conversely. Let
\[
W\bigl(T^G[\ell,u]\bigr)
\;=\;
\bigl\{\!\!\bigl\{\,
\bigl(x_0,\dots,x_\ell \bigr)
:\ (x_0,\dots,x_\ell)\ \text{is a leaf-to-root path in }T^G[\ell,u]
\,\bigr\}\!\!\bigr\}
\]
be the multiset of vertex-sequences read along leaf-to-root paths (ordered from leaf to root). Let \(\mathcal W_\ell(u)\) denote the multiset of all length-\(\ell\) walks in \(G\) that terminate at \(u\) (with multiplicity). Then:

\begin{lemma}[Leaf-to-root paths \(\leftrightarrow\) terminating walks \citep{kriege2022weisfeiler}]
\label{lem:walks}
For any \(u\in V(G)\) and \(\ell\ge 1\),
\[
W\bigl(T^G[\ell,u]\bigr)\;=\;\mathcal W_\ell(u).
\]
\end{lemma}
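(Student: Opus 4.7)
}
The plan is to establish the multiset equality by induction on the depth $\ell$, exploiting the recursive structure built into Definition \ref{def:unfold}. The key observation is that Definition \ref{def:unfold} attaches, for each neighbor $v \in \mathcal N_G(u)$, a \emph{fresh} copy of $T^G[\ell-1, v]$ as a child of the root. This recursive attachment is what makes the correspondence between leaf-to-root paths and terminating walks, including multiplicities, work out exactly.

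For the base case $\ell = 1$, the tree $T^G[1,u]$ has root $u$ and a single-node child for each $v \in \mathcal N_G(u)$. The leaf-to-root paths (read from leaf to root) are exactly $\{\!\{(v,u) : v \in \mathcal N_G(u)\}\!\}$, which coincides as a multiset with $\mathcal W_1(u)$, the length-$1$ walks terminating at $u$. For the inductive step, assume $W(T^G[\ell-1, v]) = \mathcal W_{\ell-1}(v)$ for every $v \in V(G)$. Any leaf-to-root path in $T^G[\ell, u]$ has the form $(x_0, x_1, \ldots, x_{\ell-1}, u)$, where $(x_0,\ldots,x_{\ell-1})$ is a leaf-to-root path in the attached copy of $T^G[\ell-1, x_{\ell-1}]$ for some neighbor $x_{\ell-1} \in \mathcal N_G(u)$. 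Thus
\[
W\bigl(T^G[\ell,u]\bigr)
\;=\;
\biguplus_{v \in \mathcal N_G(u)}
\bigl\{\!\!\bigl\{(x_0,\ldots,x_{\ell-1},u) : (x_0,\ldots,x_{\ell-1}) \in W(T^G[\ell-1,v])\bigr\}\!\!\bigr\},
\]
where $\biguplus$ denotes multiset disjoint union. By the induction hypothesis, each inner multiset equals the set of length-$\ell$ walks terminating at $u$ whose penultimate vertex is $v$, with correct multiplicities inherited from $\mathcal W_{\ell-1}(v)$. Summing over $v \in \mathcal N_G(u)$ partitions $\mathcal W_\ell(u)$ by penultimate vertex, giving $W(T^G[\ell,u]) = \mathcal W_\ell(u)$ as multisets.

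The main subtlety, and thus the step to justify most carefully, is the \emph{multiplicity} bookkeeping: a walk in $G$ may revisit vertices or edges, whereas leaf-to-root paths in the (tree-structured) unfolding are all distinct as tree objects. The key point is that Definition \ref{def:unfold} creates a \emph{distinct} child subtree for every neighbor, independently of whether the same vertex of $G$ appears elsewhere in the tree; so each occurrence of a walk in $\mathcal W_\ell(u)$ corresponds to a unique leaf-to-root path, and the multiset equality holds on the nose rather than merely as a set equality. Everything else is routine induction on $\ell$ and careful unpacking of the recursive definition.
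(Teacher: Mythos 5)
Your induction is correct. Note, however, that the paper itself gives no proof of Lemma~\ref{lem:walks}: it is imported as a known result from \citet{kriege2022weisfeiler}, so there is no in-paper argument to compare against. Your proof is the natural one and fills that gap self-containedly: the base case and the decomposition of $W\bigl(T^G[\ell,u]\bigr)$ over the fresh child copies $T^G[\ell-1,v]$, $v\in\mathcal N_G(u)$, matched via the induction hypothesis to the partition of $\mathcal W_\ell(u)$ by penultimate vertex, is exactly how the cited equivalence is established; your remark that freshness of the child subtrees is what makes the correspondence hold as a multiset equality (each walk, even one revisiting vertices of $G$, picks out a unique leaf) is the right subtlety to flag.
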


\begin{theorem} For any graphs \(G,H\) and any \(\ell\geq1\), \(\operatorname{WWL}^\ell\) test has exactly the same distinguishing power as the classical 1-dimensional Weisfeiler--Lehman test. Formally, 
\[ \pi^{(\infty)}_{\operatorname{WWL}^\ell} \;\simeq\;\pi^{(\infty)}_{\mathrm{WL}}. \] 
\label{thm:wl_wwl}
\end{theorem}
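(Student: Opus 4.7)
The plan is to prove the equivalence $\pi^{(\infty)}_{\operatorname{WWL}^\ell} \simeq \pi^{(\infty)}_{\mathrm{WL}}$ by establishing the two refinements $\pi^{(\infty)}_{\mathrm{WL}} \preceq \pi^{(\infty)}_{\operatorname{WWL}^\ell}$ and $\pi^{(\infty)}_{\operatorname{WWL}^\ell} \preceq \pi^{(\infty)}_{\mathrm{WL}}$ separately. The key combinatorial bridge for both directions is the unfolding tree: Lemma \ref{lem:tree} equates $\mathrm{WL}$'s depth-$t$ coloring with the rooted-isomorphism type of the depth-$t$ unfolding tree, while Lemma \ref{lem:walks} equates its leaf-to-root paths with the multiset of terminating walks. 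Through this shared object, the walk-based $\operatorname{WWL}^\ell$ refinement and the neighborhood-based $\mathrm{WL}$ refinement can be compared on the same footing.

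For the first containment, $\pi^{(\infty)}_{\mathrm{WL}} \preceq \pi^{(\infty)}_{\operatorname{WWL}^\ell}$ (i.e.\ $\operatorname{WWL}^\ell$ is at least as fine as $\mathrm{WL}$), I would proceed by induction on the refinement round $t$ to show $\pi^{(t)}_{\mathrm{WL}} \preceq \pi^{(t)}_{\operatorname{WWL}^\ell}$. The base case is immediate from the shared initialization. For the inductive step, assuming $\pi^{(t+1)}_{\operatorname{WWL}^\ell}(u) = \pi^{(t+1)}_{\operatorname{WWL}^\ell}(v)$, I would restrict the $\operatorname{WWL}^\ell$ hash input to its length-$1$ walks: these constitute the multiset $\{\!\{(\pi^{(t)}_{\operatorname{WWL}^\ell}(w),\pi^{(t)}_{\operatorname{WWL}^\ell}(u)) : w \in \mathcal N(u)\}\!\}$, which agrees with its $v$-counterpart by injectivity of the $\operatorname{WWL}^\ell$ hash. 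Projecting each WWL-$t$ color to its WL-$t$ color (valid by the inductive hypothesis) produces the exact input multiset to $\mathrm{WL}$'s update, so $\pi^{(t+1)}_{\mathrm{WL}}(u) = \pi^{(t+1)}_{\mathrm{WL}}(v)$ follows.

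For the reverse containment, $\pi^{(\infty)}_{\operatorname{WWL}^\ell} \preceq \pi^{(\infty)}_{\mathrm{WL}}$, I would exploit finite-graph stabilization of $\mathrm{WL}$ at some time $T^\star$. If $\pi^{(\infty)}_{\mathrm{WL}}(u)=\pi^{(\infty)}_{\mathrm{WL}}(v)$, then by Lemma \ref{lem:tree} we have $T^G[T, u] \cong T^H[T, v]$ for all $T \ge T^\star$. I would then show by induction on $t$ that $\pi^{(t)}_{\operatorname{WWL}^\ell}(u) = \pi^{(t)}_{\operatorname{WWL}^\ell}(v)$. For the step at round $t+1$, I would apply Lemma \ref{lem:walks} to isomorphic trees of depth $T^\star+\ell$ to obtain a bijection between terminating walks of length $L \le \ell$ at $u$ and at $v$. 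Each matched walk's internal vertex at position $i$ sits at tree-depth $L-i \le \ell$, so it carries a rooted subtree of depth at least $T^\star$, giving equal WL-stable colors via Lemma \ref{lem:tree} and hence equal WWL-$t$ colors by the inductive hypothesis; thus the multisets of $\pi^{(t)}_{\operatorname{WWL}^\ell}$-colored walks agree and the WWL update matches.

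The main obstacle I anticipate is the depth bookkeeping in the reverse direction: because WWL processes ordered colored walks, I must guarantee that the subtree at every internal vertex of each walk is deep enough ($\ge T^\star$) to pin down its WL-stable color via Lemma \ref{lem:tree}. Selecting the comparison unfolding-tree depth as $T^\star + \ell$ (so the shallowest subtree encountered, at a length-$\ell$ walk's starting vertex, still has depth $\ge T^\star$) resolves this, and WL's finite stabilization on finite graphs ensures that iterating to this depth does not over-refine beyond $\pi^{(\infty)}_{\mathrm{WL}}$. Monotonicity in $t$ from Lemma \ref{lem:wwl_monotone} lets me pass from the round-wise statement to the stable coloring. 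Combining the two containments yields $\pi^{(\infty)}_{\operatorname{WWL}^\ell} \simeq \pi^{(\infty)}_{\mathrm{WL}}$.
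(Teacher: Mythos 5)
Your proposal is correct and follows essentially the same route as the paper: both containments are proved through the unfolding-tree bridge (Lemmas \ref{lem:tree} and \ref{lem:walks}), with the easy direction reducing the $\operatorname{WWL}^\ell$ hash input to its length-$1$ walks and the hard direction showing that equality of WL-stable colors is preserved by each WWL update. The only differences are in execution: the paper phrases the easy direction as $\operatorname{WWL}^1=\mathrm{WL}$ plus monotonicity in $\ell$ (whose proof is exactly your length-filtering projection), and the hard direction as a fixed-point argument initialized at $\pi^{(\infty)}_{\mathrm{WL}}$ with the tree isomorphism's color-preservation established by a propagation-to-the-root contradiction, whereas you run a direct induction on WWL rounds and pin down the stable colors of internal walk vertices by padding the unfolding tree to depth $T^\star+\ell$ --- both executions are sound.
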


\begin{proof}

\noindent
\textbf{\(\pi^{(\infty)}_{\mathrm{WL}} \preceq \pi^{(\infty)}_{\operatorname{WWL}^\ell}\).}
For \(\ell=1\), the set of terminating length-1 walks at a node \(u\) is exactly its neighbor set \(\mathcal N(u)\). Hence the \(\operatorname{WWL}^1\) update coincides with the WL update, and for every round \(t\)
\[
\pi^{(t)}_{\operatorname{WWL}^1} \;=\; \pi^{(t)}_{\mathrm{WL}},
\quad\text{in particular}\quad
\pi^{(\infty)}_{\operatorname{WWL}^1} \;=\; \pi^{(\infty)}_{\mathrm{WL}}.
\]
By Lemma~\ref{lem:wwl_monotone} (monotonicity in \(\ell\)),
\(\pi^{(t)}_{\operatorname{WWL}^1} \preceq \pi^{(t)}_{\operatorname{WWL}^\ell}\) for all \(\ell\ge 1\) and all \(t\).
Passing to the limit,
\[
\pi^{(\infty)}_{\mathrm{WL}}
\;=\;
\pi^{(\infty)}_{\operatorname{WWL}^1}
\;\preceq\;
\pi^{(\infty)}_{\operatorname{WWL}^\ell}.
\]

\noindent
\textbf{\(\pi^{(\infty)}_{\operatorname{WWL}^\ell} \preceq \pi^{(\infty)}_{\mathrm{WL}}\).}
Initializing WWL with the WL limit, \(\pi^{(0)}=\pi_{\mathrm{WL}}^{(\infty)}\), it suffices to show that one WWL update makes no further splits. Fix \(u\in V(G)\) and \(v\in V(H)\) with \(\pi_{\mathrm{WL}}^{(\infty)}(u)=\pi_{\mathrm{WL}}^{(\infty)}(v)\). By Lemma~\ref{lem:tree}, there is a root-preserving isomorphism \(\sigma:T^G[\ell,u]\!\xrightarrow{\ \cong\ }\!T^H[\ell,v]\). By Lemma~\ref{lem:walks}, leaf-to-root paths in these depth-\(\ell\) trees biject with the terminating walks of lengths \(1,\dots,\ell\) at \(u\) and \(v\), respectively; \(\sigma\) also preserves WL\(^{\infty}\) colors at every node of two unfolding trees. To show this, suppose for contradiction, that there exists $x\in T_G[\ell,u]$
with $\pi^{(\infty)}_{\text{\normalfont 1-WL}}(x)\neq
\pi^{(\infty)}_{\text{\normalfont 1-WL}}(\sigma(x))$.
Since $\text{\normalfont 1-WL}$ stabilizes in finite time on the finite disjoint union
$G\uplus H$, there exists a finite witness round $k^\star\in\mathbb N$ such that
$\pi^{(k^\star)}_{\text{\normalfont 1-WL}}(x)\neq
\pi^{(k^\star)}_{\text{\normalfont 1-WL}}(\sigma(x))$.
Let $d$ be the distance from $x$ to the root $u$ in $T_G[\ell,u]$.
By the $\text{\normalfont 1-WL}$ update rule, a mismatch at a node at round $k^\star$ forces a mismatch at its parent at round $k^\star\!+\!1$ (the multiset of child colors differs), and inductively a mismatch at the root after $d$ further rounds:
\[
\pi^{(k^\star+d)}_{\text{\normalfont 1-WL}}(u)\;\neq\;
\pi^{(k^\star+d)}_{\text{\normalfont 1-WL}}(v).
\]
This contradicts $\pi^{(\infty)}_{\text{\normalfont 1-WL}}(u)=
\pi^{(\infty)}_{\text{\normalfont 1-WL}}(v)$.
Hence $\sigma$ must preserve $\text{\normalfont 1-WL}$ colors at every node. Consequently, the leaf-to-root paths in $T_G[\ell,u]$ and $T_H[\ell,v]$
correspond bijectively under $\sigma$ with identical colored sequences. Thus, the corresponding multisets of \emph{WL\(^{\infty}\)-colored} terminating-walk sequences at \(u\) and \(v\) coincide. Together with \(\pi^{(0)}(u)=\pi^{(0)}(v)\), the entire inputs to the WWL hash agree at \(u\) and \(v\), so by injectivity of \(\mathrm{Hash}\) we obtain \(\pi^{(1)}_{\operatorname{WWL}^{\ell}(\pi_{\mathrm{WL}}^{(\infty)})}(u)=\pi^{(1)}_{\operatorname{WWL}^{\ell}(\pi_{\mathrm{WL}}^{(\infty)})}(v)\). Hence \(\pi_{\mathrm{WL}}^{(\infty)}\) is a fixed point of WWL. By Lemma~\ref{lem:wwl_monotone}, WWL is monotone in $t$ and \(\pi_0\); since \(\text{uniform}\preceq\pi_{\mathrm{WL}}^{(\infty)}\), it follows that \(\pi^{(\infty)}_{\operatorname{WWL}^{\ell}}\preceq\pi^{(\infty)}_{\mathrm{WL}}\). Combined with \(\pi^{(\infty)}_{\mathrm{WL}}=\pi^{(\infty)}_{\operatorname{WWL}^{1}}\preceq \pi^{(\infty)}_{\operatorname{WWL}^{\ell}}\), we conclude \(\pi^{(\infty)}_{\operatorname{WWL}^{\ell}}\simeq\pi^{(\infty)}_{\mathrm{WL}}\).

\end{proof}

We now leverage the preceding results to prove the main expressivity statements.

\begin{theorem}[RWNN-MPNN equivalence under full coverage]
Let $G$ be a graph. Let $f_{\mathrm{RWNN}}^{\mathrm{FC}}$ denote an RWNN with injective $f_\mathrm{seq}$ and $f_\mathrm{agg}$ with no additional positional encodings, applied to the complete multiset of walks $\mathcal{W}_{\le \ell}(G)$ with lengths up to $\ell=C_E(G)$, the edge cover time of $G$. Let $f_{\mathrm{MPNN}}$ be an MPNN with injective $f_\mathrm{agg}$. Then, for all graphs $G,H$,
\[
f_{\mathrm{MPNN}}(G)=f_{\mathrm{MPNN}}(H)
\;\Longleftrightarrow\;
f_{\mathrm{RWNN}}^{\mathrm{FC}}(G)=f_{\mathrm{RWNN}}^{\mathrm{FC}}(H).
\]
Hence, $f_{\mathrm{RWNN}}^{\mathrm{FC}} \simeq f_{\mathrm{MPNN}}$ (i.e., $f_{\mathrm{RWNN}}^{\mathrm{FC}}$ and $f_{\mathrm{MPNN}}$ are equal in expressive power).
\end{theorem}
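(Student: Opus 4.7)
The plan is to chain three equivalences already established in the preceding sections and transit through the Weisfeiler--Lehman hierarchy. Specifically, Lemma \ref{thm:mpnn_equiv} gives $f_{\mathrm{MPNN}} \simeq \pi_{\mathrm{WL}}$ under the injective aggregator hypothesis; Theorem \ref{thm:wl_wwl} gives $\pi^{(\infty)}_{\mathrm{WL}} \simeq \pi^{(\infty)}_{\operatorname{WWL}^\ell}$ for any $\ell \ge 1$; and Lemma \ref{lem:rwnn_equal} gives $f_{\mathrm{RWNN}}^{\ell} \simeq \pi_{\operatorname{WWL}^\ell}$ under the injectivity hypotheses on $f_{\mathrm{seq}}$ and $f_{\mathrm{agg}}$ that are in force. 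Transitivity of $\simeq$ then produces $f_{\mathrm{MPNN}} \simeq f_{\mathrm{RWNN}}^{\mathrm{FC}}$, which is precisely the two-way implication claimed.

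First I would prove the $(\Rightarrow)$ direction. Assume $f_{\mathrm{MPNN}}(G) = f_{\mathrm{MPNN}}(H)$. By Lemma \ref{thm:mpnn_equiv}, this is equivalent to equality of the stable WL colorings on $G$ and $H$ (passed through an injective readout). By Theorem \ref{thm:wl_wwl}, this in turn forces equality of the stable WWL$^\ell$ colorings. Because $f_{\mathrm{RWNN}}^{\mathrm{FC}}$ is applied to the \emph{complete} multiset $\mathcal{W}_{\le \ell}(G)$, it satisfies exactly the aggregation-over-terminating-walks setting in Lemma \ref{lem:rwnn_equal}; combined with injectivity of $f_{\mathrm{seq}}$ and $f_{\mathrm{agg}}$, we obtain $f_{\mathrm{RWNN}}^{\mathrm{FC}} \simeq \pi_{\operatorname{WWL}^\ell}$, and hence $f_{\mathrm{RWNN}}^{\mathrm{FC}}(G) = f_{\mathrm{RWNN}}^{\mathrm{FC}}(H)$. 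The $(\Leftarrow)$ direction is symmetric: each intermediate equivalence also carries the refinement relation in the reverse order, so running the same chain backwards gives the converse implication.

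Two subtleties merit explicit treatment. First, Lemmas \ref{thm:mpnn_equiv} and \ref{lem:rwnn_equal} are formulated at the node-coloring or node-embedding level, while the theorem compares graph-level outputs. This is handled by lifting node-level $\preceq$ to graph-level $\preceq$ via a permutation-invariant injective readout on the multiset of final node representations, as already observed in Section 2.1; both architectures here carry such a readout, so the lift is routine. Second, the choice $\ell = C_E(G)$ is stronger than what Theorem \ref{thm:wl_wwl} requires (which holds for any $\ell \ge 1$); its role is to make the informal notion of "full coverage" rigorous by guaranteeing that every edge of $G$ appears in some walk of $\mathcal{W}_{\le \ell}(G)$, aligning with the $\mathrm{FC}$ semantics used throughout the paper and with the partial-coverage corollary that follows.

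The main obstacle I anticipate is book-keeping around the fixed-point semantics, since the three components are stated at slightly different levels: WL and WWL$^\ell$ are compared at their stable colorings, while Lemma \ref{lem:rwnn_equal} is phrased layer by layer. To apply transitivity cleanly, one must take the RWNN (and the MPNN) at sufficient depth to realize the WWL$^\ell$ (respectively WL) limit; because both WL and WWL$^\ell$ stabilize in at most $O(|V|)$ rounds on finite graphs, and because the relation $\simeq$ abstracts the model as a function class over arbitrary depth, this is more of a matching-of-hypotheses step than a genuine mathematical difficulty, and once these depths are aligned the chain composes to yield the theorem.
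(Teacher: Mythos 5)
Your proposal is correct and follows essentially the same route as the paper's own proof: chaining $f_{\mathrm{MPNN}} \simeq \pi_{\mathrm{WL}}$ (Lemma~\ref{thm:mpnn_equiv}), $\pi_{\mathrm{WL}} \simeq \pi_{\operatorname{WWL}^\ell}$ (Theorem~\ref{thm:wl_wwl}), and $f_{\mathrm{RWNN}}^{\mathrm{FC}} \simeq \pi_{\operatorname{WWL}^\ell}$ (Lemma~\ref{lem:rwnn_equal}) by transitivity. Your additional remarks on lifting node-level refinement to graph-level readouts and on aligning depths with the stable colorings are reasonable elaborations of details the paper leaves implicit, but they do not change the argument.
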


\begin{proof}
By the standard 1-WL result for message passing (Theorem~\ref{thm:mpnn_equiv}), an MPNN with injective aggregation
satisfies \(f_{\mathrm{MPNN}} \simeq \pi_{\mathrm{WL}}\). By the RWNN/WWL correspondence (Lemma~\ref{lem:rwnn_equal}),
a full-coverage RWNN with injective \(f_{\mathrm{seq}}\) and \(f_{\mathrm{agg}}\) satisfies
\(f_{\mathrm{RWNN}}^{\mathrm{FC}} \simeq \pi_{\operatorname{WWL}^\ell}\). Finally, by the equivalence
\(\pi_{\operatorname{WWL}^\ell} \simeq \pi_{\mathrm{WL}}\) (Theorem~\ref{thm:wl_wwl}), we conclude
\(f_{\mathrm{RWNN}}^{\mathrm{FC}} \simeq f_{\mathrm{MPNN}}\).
\end{proof}

\begin{corollary}[RWNNs under partial coverage]
Let \(f_{\mathrm{RWNN}}^{\mathrm{PC}}\) be an RWNN of the same architectural class as in
Theorem~\ref{thm:equal}, but applied to a multiset of terminating walks that achieves only \emph{partial}
node/edge coverage of the input. Then, for all graphs \(G,H\),
\[
f_{\mathrm{MPNN}}(G)=f_{\mathrm{MPNN}}(H)
\;\Longrightarrow\;
f_{\mathrm{RWNN}}^{\mathrm{PC}}(G)=f_{\mathrm{RWNN}}^{\mathrm{PC}}(H),
\]
and there exist non-isomorphic graphs \(G\not\cong H\) such that
\[
f_{\mathrm{MPNN}}(G)\neq f_{\mathrm{MPNN}}(H)
\quad\text{while}\quad
f_{\mathrm{RWNN}}^{\mathrm{PC}}(G)=f_{\mathrm{RWNN}}^{\mathrm{PC}}(H).
\]
Hence \(f_{\mathrm{RWNN}}^{\mathrm{PC}} \prec f_{\mathrm{MPNN}}\).
\end{corollary}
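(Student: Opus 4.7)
} The plan is to deduce both statements from Theorem~\ref{thm:equal} together with the WWL machinery developed in Appendix~\ref{sec:rwnn_exp}. The forward implication follows from a monotonicity-of-information argument (PC sees a sub-multiset of what FC sees), and strictness is established by a concrete construction that hides a 1-WL-detectable feature outside the support of the PC walks.

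\paragraph{Forward implication.} First, I would establish $f_{\mathrm{RWNN}}^{\mathrm{PC}} \preceq f_{\mathrm{RWNN}}^{\mathrm{FC}}$ and then chain with Theorem~\ref{thm:equal}. Since $f_{\mathrm{RWNN}}^{\mathrm{PC}}$ operates on a sub-multiset of $\mathcal{W}_{\le \ell}(G)$, the complete multiset used by FC, and since the per-walk encoder $f_{\mathrm{seq}}$ acts walk-wise while $f_{\mathrm{agg}}$ is permutation-invariant, the PC output is obtained as a canonical restriction of the FC input. Suppose $f_{\mathrm{MPNN}}(G) = f_{\mathrm{MPNN}}(H)$. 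By Theorem~\ref{thm:equal} we get $f_{\mathrm{RWNN}}^{\mathrm{FC}}(G) = f_{\mathrm{RWNN}}^{\mathrm{FC}}(H)$; by injectivity of $f_{\mathrm{seq}}$ and $f_{\mathrm{agg}}$ together with Lemma~\ref{lem:rwnn_equal} and Theorem~\ref{thm:wl_wwl}, this equality is equivalent to the FC colored walk multisets coinciding under a 1-WL color-preserving bijection between $V(G)$ and $V(H)$. Restricting both sides of this bijection-preserved equality to the PC sub-multiset yields identical PC colored walk multisets on $G$ and $H$, and applying $f_{\mathrm{seq}}, f_{\mathrm{agg}}$ gives $f_{\mathrm{RWNN}}^{\mathrm{PC}}(G) = f_{\mathrm{RWNN}}^{\mathrm{PC}}(H)$.

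\paragraph{Strict separation.} For the strictness witness, I would exhibit non-isomorphic $G \not\cong H$ that share a common induced subgraph $B$ but differ by a small MPNN-detectable modification outside $B$, such as adding a pendant vertex or swapping a single edge so that some node's degree (and thus its refined 1-WL color) changes. The modification is detected by 1-WL, and hence by $f_{\mathrm{MPNN}}$ via its injective aggregator and readout, yet it is invisible to any walk confined to $B$. Choosing a PC walk multiset entirely supported on $B$ (which exists precisely because partial coverage explicitly permits missing nodes/edges) produces identical walk multisets on $G$ and $H$, so $f_{\mathrm{RWNN}}^{\mathrm{PC}}(G) = f_{\mathrm{RWNN}}^{\mathrm{PC}}(H)$ while $f_{\mathrm{MPNN}}(G) \neq f_{\mathrm{MPNN}}(H)$.

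\paragraph{Main obstacle.} The main subtlety lies in the forward implication: making rigorous the claim that the sub-multiset restriction on $G$ matches, under the 1-WL color-preserving bijection, the restriction on $H$. I plan to resolve this by choosing the restriction canonically (for example, truncating to walks of a fixed length strictly below $\ell$, or filtering by the 1-WL color class of the starting node), so that the restriction operation commutes with any color-preserving bijection and hence pushes FC-multiset equality through to PC-multiset equality. The strict separation, by contrast, is a direct construction once a shared subgraph $B$ and a 1-WL-detectable marker outside $B$ are fixed.
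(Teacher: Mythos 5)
Your proposal matches the paper's proof in both structure and substance: the forward implication via coverage monotonicity ($f_{\mathrm{RWNN}}^{\mathrm{PC}} \preceq f_{\mathrm{RWNN}}^{\mathrm{FC}} \simeq f_{\mathrm{MPNN}}$ by Theorem~\ref{thm:equal}), and strictness via a pair of graphs differing only by a 1-WL-detectable modification (the paper uses an isolated vs.\ pendant vertex added to isomorphic graphs) that lies entirely outside the support of the partial walk multiset. Your explicit attention to making the sub-multiset restriction canonical is if anything slightly more careful than the paper's one-line appeal to ``coverage monotonicity,'' but the argument is the same.
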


\begin{proof}
Coverage monotonicity (a direct consequence of injectivity and permutation invariance of the aggregator on multisets)
implies that removing walks cannot increase distinguishing power, i.e.,
\(f_{\mathrm{RWNN}}^{\mathrm{PC}} \preceq f_{\mathrm{RWNN}}^{\mathrm{FC}} \simeq f_{\mathrm{MPNN}}\), which yields the
implication \(f_{\mathrm{MPNN}}(G)=f_{\mathrm{MPNN}}(H)\Rightarrow
f_{\mathrm{RWNN}}^{\mathrm{PC}}(G)=f_{\mathrm{RWNN}}^{\mathrm{PC}}(H)\).
For strictness, start from two isomorphic graphs and form \(G\) by adding one isolated vertex and \(H\) by adding one
pendant vertex (a new vertex attached to an existing node). Then 1-WL (hence an MPNN) distinguishes \(G\) and \(H\).
However, if \(f_{\mathrm{RWNN}}^{\mathrm{PC}}\) is applied to walk multisets that exclude all walks visiting the added
vertex in both graphs, the remaining covered walks coincide, so \(f_{\mathrm{RWNN}}^{\mathrm{PC}}(G)=
f_{\mathrm{RWNN}}^{\mathrm{PC}}(H)\). Thus \(f_{\mathrm{RWNN}}^{\mathrm{PC}} \prec f_{\mathrm{MPNN}}\).
\end{proof}

\subsection{Random Search Neural Network Expressive Power (Lemma \ref{lem:log}, Theorem \ref{thm:uni}) \label{sec:rsnn_exp}}

We first establish a coverage lemma: for any edge $e=\{u,v\}$ in a connected graph $G$ with maximum degree $d_{\max}$, a randomized DFS (uniform start; i.i.d.\ tie-breaking) includes $e$ in its spanning tree with probability at least $1/d_{\max}$, i.e., $\Pr\!\big[e\in T_{\mathrm{DFS}}(G)\big]\ge 1/d_{\max}$. Building on this, we show that sampling $O\!\big(d_{\max}\log|E|\big)$ independent DFS trees suffices to achieve full \emph{edge} coverage with high probability; in bounded-degree sparse graphs ($d_{\max}=O(1)$ and $|E|=\Theta(|V|)$), this reduces to $O(\log |V|)$ searches. Equipped with such full coverage, standard universal components, and shared anonymous integer tags, RSNNs are universal approximators on graphs in the specified family.

\begin{lemma}[Edge inclusion probability under random DFS]
\label{lem:dfs-edge-prob}
Consider the following random–DFS procedure on a graph $G$: fix a uniform distribution over the root vertex; independently for each vertex $x$, draw a uniformly random permutation $\pi_x$ of its neighbors; run depth-first search that, upon first visiting $x$, explores neighbors in the order $\pi_x$. Let $T_{\mathrm{DFS}}$ be the resulting DFS spanning tree. For an edge $e=(u,v)$, define
\[
S_u(e)\;:=\;\bigl\{\,w\in \mathcal N(u)\setminus\{v\}\;:\;\text{there exists a $u\!\to\!v$ path in $G\setminus\{e\}$ whose first edge is }(u,w)\,\bigr\},
\]
and set $\tau_u(e):=|S_u(e)|$; define $S_v(e)$ and $\tau_v(e)$ analogously. Then
\[
\Pr\!\bigl[e\in E(T_{\mathrm{DFS}})\bigr]
\;\ge\;
\min\!\Bigl\{\frac{1}{\tau_u(e)+1},\;\frac{1}{\tau_v(e)+1}\Bigr\}
\;\ge\;
\frac{1}{\max\{\deg(u),\deg(v)\}}
\;\ge\;
\frac{1}{d_{\max}}.
\]
\end{lemma}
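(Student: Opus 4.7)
The plan is to decompose $\{e\in E(T_{\mathrm{DFS}})\}$ by conditioning on which endpoint of $e$ is discovered first. Let $A_u$ be the event that $u$ is discovered strictly before $v$, and $A_v$ its complement, so $\Pr[A_u]+\Pr[A_v]=1$. I will establish the two conditional bounds $\Pr[e\in E(T_{\mathrm{DFS}})\mid A_u]\ge 1/(\tau_u(e)+1)$ and the symmetric statement for $A_v$; averaging against the weights $\Pr[A_u]$ and $\Pr[A_v]$ and then taking the worse of the two yields the claimed $\min$ bound. The remaining inequalities in the displayed chain are immediate: the containment $S_u(e)\subseteq \mathcal{N}(u)\setminus\{v\}$ gives $\tau_u(e)+1\le \deg(u)$, and symmetrically for $v$, so $\min\{1/(\tau_u(e)+1),1/(\tau_v(e)+1)\}\ge 1/\max\{\deg(u),\deg(v)\}\ge 1/d_{\max}$.

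To prove the conditional bound, I would first argue that $\pi_u$ is independent of $A_u$. The DFS never queries $\pi_u$ before $u$ itself is processed, and before the first discovery of either $u$ or $v$ the only neighbor-orderings consulted are those of vertices on the DFS spine, all of which lie in $V\setminus\{u,v\}$. Hence $A_u\in\sigma(\mathrm{root},\{\pi_x\}_{x\notin\{u,v\}})$, independent of the uniform $\pi_u$. Next I would establish the deterministic implication: on $A_u$, if $v$ precedes every element of $S_u(e)$ in $\pi_u$, then $e\in E(T_{\mathrm{DFS}})$. When DFS enters $u$, $v$ is still unvisited; $u$ then iterates its neighbors in order $\pi_u$, and every neighbor $w$ appearing before $v$ satisfies $w\notin S_u(e)$, which by the equivalent characterization ($w\in S_u(e)$ iff $w$ and $v$ lie in the same component of $G-u$) means $v$ is unreachable from $w$ in $G-u$. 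Since the DFS recursion from $w$ cannot pass through the already-visited vertex $u$, its reachable set is contained in the $w$-component of $G-u$ and therefore does not include $v$. Consequently $v$ remains unvisited when $u$'s iterator reaches it, and $(u,v)$ enters the tree.

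Combining the two ingredients, $\Pr[e\in E(T_{\mathrm{DFS}})\mid A_u]$ is at least the probability, under a uniform $\pi_u$, that $v$ comes first among the $\tau_u(e)+1$ elements of $\{v\}\cup S_u(e)$, which equals $1/(\tau_u(e)+1)$ by the symmetry of uniform permutations. The symmetric argument with roles of $u$ and $v$ swapped yields the matching bound on $A_v$, and the lemma follows. The main obstacle is the deterministic implication: one must rule out the possibility that some $w\notin S_u(e)$ opens an indirect route to $v$ through partially-explored territory. This is resolved by the observation that $u$ is always among the already-visited vertices when the DFS opens such a $w$, so reachability from $w$ to $v$ at that moment is bounded above by connectivity in $G-u$, which is precisely what the definition of $S_u(e)$ excludes. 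Everything else (independence of $\pi_u$ from $A_u$, the uniform-permutation probability, and the final degree bound) is routine.
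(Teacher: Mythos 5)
Your proposal is correct and follows essentially the same route as the paper's proof: condition on which endpoint is discovered first, observe that $\pi_u$ (resp.\ $\pi_v$) is independent of that event, and use the sufficient event that $v$ precedes all of $S_u(e)$ in $\pi_u$, which has probability exactly $1/(\tau_u(e)+1)$. You actually spell out two steps the paper treats tersely — the measurability argument showing $A_u$ is independent of $\pi_u$, and the component-of-$G-u$ argument showing earlier non-$S_u(e)$ neighbors cannot reach $v$ — so no gaps.
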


\begin{proof}
Let $A$ be the event that $u$ is discovered by DFS before $v$. On $A$, when $u$ is first processed, $v$ is unvisited. The edge $(u,v)$ will be taken as a tree edge iff, in the random neighbor order $\pi_u$, the vertex $v$ appears \emph{before all} neighbors $S_u(e)$ that can lead from $u$ to $v$ without using $e$. The positions of the other neighbors of $u$ are irrelevant: exploring any neighbor not on a path to $v$ first cannot reach $v$ before DFS returns to $u$. Since $\pi_u$ is a uniform permutation, the probability of this sufficient event is exactly $1/(\tau_u(e)+1)$. A symmetric argument on $A^{c}$ (i.e., when $v$ is discovered before $u$) gives the bound $1/(\tau_v(e)+1)$. Unconditionally,
{\footnotesize{
\[
\Pr\!\bigl[e\in T\bigr]
= \Pr(A)\,\Pr\!\bigl[e\in T\mid A\bigr]
  + \Pr(A^{c})\,\Pr\!\bigl[e\in T\mid A^{c}\bigr]
\;\ge\;
\min\!\Bigl\{\frac{1}{\tau_u(e)+1},\;\frac{1}{\tau_v(e)+1}\Bigr\}.
\]}}
where $T$ is a random DFS tree. Finally, $\tau_u(e)\le \deg(u)-1$ and $\tau_v(e)\le \deg(v)-1$, hence $\min\{1/(\tau_u+1),\,1/(\tau_v+1)\}\ge 1/\max\{\deg(u),\deg(v)\}\ge 1/d_{\max}$.
\end{proof}

\begin{lemma}[Logarithmic Sampling Yields Full Edge Coverage]
Let \(G=(V,E)\) be a connected, unweighted graph with \(|E|\le C|V|\) for some constant \(C\) and a bounded maximum degree \(d_{\max}\). Let \(S_1, S_2, \ldots, S_m\) be \(m\) independent random searches sampled from \(G\), and let \(T_1, T_2, \ldots, T_m\) be their corresponding induced spanning trees. Then, for small \(\delta \ll 1\), if
\eq{m \ge \frac{\ln\left(\frac{C|V|}{\delta}\right)}{\ln\left(\frac{d_{\max}}{d_{\max}-1}\right)}}
the union of \(T_1, T_2, \ldots, T_m\) contains every edge in \(E\) with probability at least \(1-\delta\).
\label{lem:rsnn_cov}
\end{lemma}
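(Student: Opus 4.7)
The plan is to reduce the full-edge-coverage event to a union bound over edges using the per-edge inclusion probability from Lemma~\ref{lem:dfs-edge-prob}. First, fix an arbitrary edge $e \in E$. By that lemma, each of the $m$ independent random DFS trees $T_j$ contains $e$ with probability at least $1/d_{\max}$. Since the $S_j$ (and hence the $T_j$) are sampled independently, the probability that no $T_j$ contains $e$ is at most $\bigl(1 - 1/d_{\max}\bigr)^m = \bigl((d_{\max}-1)/d_{\max}\bigr)^m$.

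Next, I apply a union bound over the at most $C|V|$ edges: the probability that the union $\bigcup_{j=1}^m T_j$ misses at least one edge is at most $C|V| \cdot \bigl((d_{\max}-1)/d_{\max}\bigr)^m$. Requiring this to be at most $\delta$ and taking logarithms yields the condition $m \ln\!\bigl(d_{\max}/(d_{\max}-1)\bigr) \ge \ln\!\bigl(C|V|/\delta\bigr)$, which rearranges to the bound on $m$ in the statement. Substituting back gives edge coverage with probability at least $1-\delta$.

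There is no serious obstacle: the hard work is packaged into Lemma~\ref{lem:dfs-edge-prob}, which supplies a graph-independent per-edge lower bound of $1/d_{\max}$, and the remaining argument is a standard independence plus union-bound calculation. One minor subtlety worth flagging in the write-up is that the $m$ searches are independent but, within a single search, the inclusion events $\{e \in T_j\}$ across different edges are correlated; the argument only needs independence across the $j$ index, which is exactly what the sampling assumption provides. It is also worth noting that the logarithmic-in-$|V|$ scaling comes precisely from the fact that $d_{\max}$, and hence $\ln(d_{\max}/(d_{\max}-1))$, is a positive constant under the bounded-degree assumption, and that $|E| = O(|V|)$ enters only inside the logarithm; these two structural assumptions are what make the conclusion meaningful for sparse graphs.
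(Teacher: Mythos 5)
Your proposal is correct and follows essentially the same route as the paper's proof: invoke the per-edge inclusion bound of $1/d_{\max}$ from Lemma~\ref{lem:dfs-edge-prob}, use independence across the $m$ searches to bound the per-edge miss probability by $(1-1/d_{\max})^m$, union-bound over the at most $C|V|$ edges, and solve for $m$ by taking logarithms. Your remark that only independence across the search index (not across edges within a search) is needed is a correct and worthwhile clarification, but the argument itself matches the paper's.
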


\begin{proof}
By Lemma \ref{lem:dfs-edge-prob} the probability that any edge \(e\) appears in any DFS is at least \(p_e \ge \frac{1}{d_{\max}}.\) Hence, the probability that a single DFS tree does \emph{not} contain \(e\) is at most \(1-p_e \le 1-\frac{1}{d_{\max}}.\) Since the spanning trees \(T_1, T_2, \ldots, T_m\) are sampled independently, the probability that \(e\) is missing from all \(m\) trees is at most \(\left(1-\frac{1}{d_{\max}}\right)^m.\) By the union bound over all \(|E|\) edges, the probability that there exists at least one edge which is not covered by the union of the \(m\) trees is at most
\[
|E| \left(1-\frac{1}{d_{\max}}\right)^m \le C|V| \left(1-\frac{1}{d_{\max}}\right)^m.
\]
We require this probability to be at most \(\delta\):
\[
C|V| \left(1-\frac{1}{d_{\max}}\right)^m \le \delta.
\]
Taking the natural logarithm on both sides gives:
\[
\ln(C|V|) + m \ln\left(1-\frac{1}{d_{\max}}\right) \le \ln(\delta).
\]
Since \(\ln\left(1-\frac{1}{d_{\max}}\right) < 0\), dividing by this term (and reversing the inequality) yields
\[
m \ge \frac{\ln\left(\frac{C|V|}{\delta}\right)}{\ln\left(\frac{1}{1-\frac{1}{d_{\max}}}\right)} = \frac{\ln\left(\frac{C|V|}{\delta}\right)}{\ln\left(\frac{d_{\max}}{d_{\max}-1}\right)}.
\]
Thus, with \(m\) chosen accordingly, the union of the \(m\) spanning trees contains every edge of \(G\) with probability at least \(1-\delta\).
\end{proof}

\begin{definition}[Anonymous integer tags]\label{def:anon}
Let $G=(V,E,X)$ be a (connected) graph. Sample the \emph{first} search
$S^{(1)}$ on $G$ according to the RSNN search policy (e.g., a DFS with
random tie–breaking). Let $(v_{(1)},v_{(2)},\dots,v_{(n)})$ be the
vertices ordered by their \emph{first-visit time} along $S^{(1)}$. Define
the integer tag assignment $\tau:V\to[n]$ by
\[
\tau\!\bigl(v_{(i)}\bigr)\;:=\;i\qquad\text{for }i=1,\dots,n,
\]
Use the \emph{same} tag assignment $\tau$ for all searches in the RSNN search set on $G$. Because $S^{(1)}$ is sampled in a manner equivariant to vertex relabellings (e.g., random start and random neighbour ordering in DFS), the induced random tag assignment is permutation-invariant \emph{in distribution}.
\end{definition}

\begin{theorem}[Universal Approximation by RSNNs on Sparse Graphs with Bounded Degree]
Let \(\epsilon > 0\) and let \(f: \mathcal{G} \to \mathbb{R}^d\) be any continuous graph-level function, where \(\mathcal{G}\) is the space of sparse, unweighted graphs with \(|E|=O(|V|)\) and maximum degree at most \(d_{\max}\). Assume $f_{\mathrm{RSNN}}(G)$ uses (i) a universal set encoder $f_\mathrm{agg}$, (ii) a universal sequence encoder $f_\mathrm{seq}$, and (iii) anonymous integer tags. Assume $m$ satisfies Lemma \ref{lem:rsnn_cov}, so that full coverage is achieved with probability at least \(1-\delta\). Then, with probability at least \(1-\delta\) there exists an RSNN configuration such that 
\eq{\| f_{\mathrm{RSNN}}(G) - f(G) \| < \epsilon \quad \text{for all } G \in \mathcal{G},}
\end{theorem}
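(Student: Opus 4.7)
The plan is to condition on the full-coverage event provided by Lemma~\ref{lem:rsnn_cov} and then argue that, on this event, the tagged multiset of DFS sequences is a \emph{faithful} representation of the graph up to isomorphism, so that a universal set-of-sequences encoder can approximate any continuous graph-level target. Formally, let $E_{\mathrm{cov}}$ denote the event that $\bigcup_{j=1}^m E(T_{S_j})=E(G)$; by Lemma~\ref{lem:rsnn_cov}, $\Pr(E_{\mathrm{cov}})\ge 1-\delta$ under the hypothesized $m$. The entire argument then proceeds conditionally on $E_{\mathrm{cov}}$.

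The first substantive step is to construct an injective (up to isomorphism) encoding. On $E_{\mathrm{cov}}$, every edge $\{u,v\}\in E$ is traversed by at least one search $S_j$, and the anonymous integer tags $\tau:V\to[n]$ of Definition~\ref{def:anon} are shared across all $m$ searches, so each $S_j$ can be written as a sequence of $(\tau(v),x_v,\text{continuity bit})$ triples. I will show that the map $G\mapsto \{\!\{(\tau(S_1),\dots,\tau(S_m))\}\!\}$ determines $G$ up to isomorphism: node identities are recoverable from the shared tags, node features come with each token, and the edge set is recoverable as the union of consecutive, continuity-marked pairs across the $m$ sequences, which equals $E(G)$ precisely on $E_{\mathrm{cov}}$. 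Because the initial tag assignment is produced by an equivariant procedure, permuting $V(G)$ by any isomorphism $\pi$ produces the \emph{same distribution} over tagged multisets, so this encoding is isomorphism-invariant in distribution.

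Next, I will assemble the approximation through the two universality hypotheses. Since tag alphabet, feature space, and sequence length are all bounded by constants on the bounded-degree sparse family with any fixed vertex cap, one can restrict to a compact subset of inputs to $f_\mathrm{seq}$; a universal sequence encoder (e.g., a transformer or LSTM) can then approximate, to arbitrary uniform accuracy $\epsilon_1$, an injective embedding of each tagged sequence into $\mathbb R^{d'}$. A universal set/multiset encoder $f_\mathrm{agg}$ (e.g., a Deep-Sets style architecture) can, in turn, approximate any continuous function of the resulting multiset of embeddings to accuracy $\epsilon_2$. Composing these approximations with the continuous target $f$ pulled back through the injective encoding, and using uniform continuity of $f$ on the (compact) image, yields the bound $\|f_{\mathrm{RSNN}}(G)-f(G)\|<\epsilon$ provided $\epsilon_1,\epsilon_2$ are chosen small enough as a function of $\epsilon$ and the modulus of continuity of $f$.

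The main obstacle I expect is the interaction between \emph{random} anonymous tags and the requirement that a \emph{single} deterministic RSNN approximate $f$ uniformly over $\mathcal G$. Two points need care: (i) on the event $E_{\mathrm{cov}}$, different realizations of the first search may produce different tag assignments for the same $G$, but since the downstream encoders are trained to be invariant to tag relabelings (or, equivalently, $f$ is isomorphism-invariant so the pulled-back target only depends on the $G$-isomorphism class), this ambiguity does not inflate the error; (ii) one must justify that the universality statements for $f_\mathrm{seq}$ and $f_\mathrm{agg}$ can be invoked \emph{simultaneously} on a compact domain that accommodates all graphs in $\mathcal G$ up to the relevant size. Both points are handled by restricting the class to bounded-size graphs in $\mathcal G$ (where the tag alphabet and sequence length are uniformly bounded) and by standard composition arguments for universal approximators, after which the final probability bound $1-\delta$ is inherited directly from Lemma~\ref{lem:rsnn_cov}.
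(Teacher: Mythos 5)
Your proposal is correct and follows essentially the same route as the paper's proof: condition on the full-coverage event from the coverage lemma, observe that a full-coverage tagged search set determines the graph (so the target pulls back to a well-defined, permutation-invariant function on search sets), invoke the universality of $f_\mathrm{seq}$ and $f_\mathrm{agg}$ on the finite/bounded input space, and inherit the $1-\delta$ probability bound. The main difference is one of explicitness: the paper compresses the reconstruction step into the single claim that the lifted target $\widetilde f$ is ``well-defined,'' whereas you spell out the injectivity-up-to-isomorphism of the tagged encoding and the handling of random tag assignments, which is a worthwhile elaboration but not a different argument.
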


\begin{proof}
Let $\mathcal S^{\mathrm{FC}}(G)$ be the set of search sets of size $m$ on $G$. Define a target on search sets by
\[
\widetilde f(\mathcal S)\ :=\
\begin{cases}
F(G), & \mathcal S\in \mathcal S^{\mathrm{FC}}(G),\\[2pt]
0, & \text{otherwise.}
\end{cases}
\]
This $\widetilde f$ is well-defined (for any given input search set, there is a single unique output), and is permutation-invariant in the multiset argument.
Because the input space (bounded-length sequences over a finite alphabet,
aggregated into multisets of bounded size) is finite, the assumed universal
sequence encoder and universal set aggregator can uniformly approximate
$\widetilde f$ to error $<\varepsilon$ across
$\bigcup_{G\in \mathcal G_{\le n_{\max}}} \mathcal S^{\mathrm{FC}}(G)$.
Therefore, with those parameters, for any $G$ and any random $\mathcal S(G)$,
\[
\Pr\!\left(\,\bigl\| f_{\mathrm{RSNN}}(\mathcal S) - F(G) \bigr\| < \varepsilon
\ \Big|\ \mathcal S\in \mathsf{FullCov}(G)\right)=1,
\]
and hence unconditionally
$\Pr(\| f_{\mathrm{RSNN}}(\mathcal S) - F(G)\|<\varepsilon)\ge 1-\delta$.
\end{proof}

\subsection{Random Search Neural Network Invariance (Theorem \ref{thm:rsnn_invariance}, Corollary \ref{corr:sgd}) \label{sec:rsnn_inv}}

We next study invariance properties of RSNNs. Because RSNNs are randomized graph functions, we adopt a \emph{probabilistic} notion of isomorphism invariance: if two graphs are isomorphic, the distributions of RSNN outputs coincide. As a consequence, the \emph{expected} predictor $\Phi(G)\!=\!\mathbb{E}[f_{\mathrm{RSNN}}(G)]$ is an isomorphism-invariant graph function. Moreover, RSNNs \emph{learn} this invariance via stochastic training: sampling a fresh search per step yields an unbiased gradient of the invariant risk, and under standard SGD conditions the parameters converge to a stationary point of the invariant objective. In practice, this justifies sampling with a small number of searches (e.g., $m{=}1$) in limited budget regimes.

\begin{theorem}[Isomorphism-Invariance of RSNN]
A randomized search procedure on a graph $G$ produces a sequence $S^G = (s_0^G,\ldots,s_{|V(G)|}^G)$ of visited vertices. We say the procedure is \emph{probabilistically invariant} to graph isomorphisms if, 
\[
\bigl(\pi(s_0^G),\ldots,\pi(s_{|V(G)|}^G)\bigr)\ \stackrel{d}{=}\ (s_0^H,\ldots,s_{|V(H)|}^H) \quad\text{for all } G\stackrel{\pi}\cong H.
\]

The randomized DFS procedure used in RSNNs satisfies the above definition. Hence, RSNNs satisfy probabilistic invariance: for all $G\cong H$, \(f_{\mathrm{RSNN}}(G)\ \stackrel{d}{=}\ f_{\mathrm{RSNN}}(H)\), and the averaged predictor \(\Phi(G)\ \coloneqq\ \mathbb{E}\bigl[f_{\mathrm{RSNN}}(G)\bigr]\) is an invariant function on graphs: $\Phi(G)=\Phi(H)$ for all $G\cong H$.
\end{theorem}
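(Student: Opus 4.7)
The plan is to prove probabilistic invariance in two stages: first verify that the underlying randomized DFS is probabilistically invariant by constructing a measure-preserving coupling between the randomness on $G$ and on $H$, and then lift this invariance through each component of the RSNN pipeline. The randomness in a single DFS has two independent ingredients: a uniformly chosen root $r \in V(G)$, and, for each vertex $v$, an independent uniform permutation $\sigma_v$ of $\mathcal N(v)$ used as the neighbor-exploration order. Together these determine the DFS trajectory deterministically.

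Given an isomorphism $\pi: V(G) \to V(H)$, I would couple the randomness on $H$ to that on $G$ by setting $r' := \pi(r)$ and, for each $w \in V(H)$, defining the neighbor order $\sigma'_w := \pi \circ \sigma_{\pi^{-1}(w)}$. Since $\pi$ restricts to a bijection $\mathcal N(\pi^{-1}(w)) \to \mathcal N(w)$, the pushforward of a uniform permutation is again uniform, so this coupling is measure-preserving. I would then show by induction on the DFS step that if $(s_0^G, s_1^G, \ldots)$ is the trajectory on $G$, then $(\pi(s_0^G), \pi(s_1^G), \ldots)$ is exactly the trajectory on $H$ under the coupled randomness: the inductive step uses that the next unvisited neighbor of $s_k^G$ chosen by $\sigma_{s_k^G}$ maps under $\pi$ to the next unvisited neighbor of $\pi(s_k^G)$ chosen by $\sigma'_{\pi(s_k^G)}$, because $\pi$ preserves adjacency and the visited set transports along $\pi$. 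Since the coupled trajectory on $H$ has the same marginal distribution as the original DFS on $H$, the required sequence-level distributional equality follows.

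To lift invariance to the full RSNN, I would trace the pipeline component by component. The recording function $f_{\text{emb}}[i] = h_V(w_i) + \text{proj}(h_{\text{PE}}[i])$ uses a node feature embedding satisfying $h_V(v) = h_V(\pi(v))$ because isomorphisms preserve labeled features, together with sequence-internal positional encodings that depend only on the sequence and its adjacency pattern (both preserved under the coupling). Hence the encoded sequences on $G$ and $H$ coincide. The sequence model $f_{\text{seq}}$ is a deterministic per-sequence map, and the node aggregator in Equation~\eqref{eq:agg} sums per-occurrence contributions; under the coupling the $i$-th sequence position on $H$ is $\pi(w_i)$, so $f_{\text{agg}}^H[\pi(v)]$ equals $f_{\text{agg}}^G[v]$ entry-for-entry. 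Composing with any permutation-invariant graph-level readout yields $f_{\mathrm{RSNN}}(G) \stackrel{d}{=} f_{\mathrm{RSNN}}(H)$, and taking expectations gives $\Phi(G) = \Phi(H)$.

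The main obstacle I expect is verifying that every ingredient of the RSNN depends on the input only through quantities preserved by $\pi$. In particular, the positional encodings of \citet{tonshoff2023walking} must be sequence-internal (encoding adjacency or positions within the walk) rather than referencing any absolute vertex labeling, and any anonymous integer tags (Definition~\ref{def:anon}) must be derived from randomized searches so that the tag assignment itself is equivariant in distribution. Once these invariance-of-ingredients checks are in place, the coupling argument reduces the remainder to an induction on DFS steps plus a straightforward change-of-variables under $\pi$.
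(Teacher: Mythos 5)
Your proposal is correct, and it reaches the same conclusion by a slightly different mechanism than the paper. The paper proves the sequence-level equality $\pi\,X_{\mathrm{DFS}}(G)\stackrel{d}{=}X_{\mathrm{DFS}}(H)$ by induction on the DFS step, conditioning on an arbitrary valid prefix $\mathbf{x}$ and checking that the conditional law of the next state pushes forward correctly under $\pi$ (uniform on the admissible set $A(G;\mathbf{x})$ maps to uniform on $A(H;\pi\mathbf{x})=\pi A(G;\mathbf{x})$ in the expansion case, and backtracking is deterministic and relabel-equivariant). You instead build an explicit measure-preserving coupling of the randomness itself, setting $r'=\pi(r)$ and $\sigma'_w=\pi\circ\sigma_{\pi^{-1}(w)}$, and show the coupled trajectories correspond \emph{exactly} under $\pi$; the distributional claim then follows because the coupled randomness on $H$ has the correct marginal. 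Both arguments rest on the same observation (isomorphisms carry the visited set, stack top, and unvisited-neighbor set across, and pushforwards of uniform choices remain uniform), so the difference is one of packaging: the coupling yields an almost-sure pathwise identity that is arguably cleaner, while the paper's conditional-law induction avoids committing to a particular parametrization of the DFS randomness. One place where your write-up is actually more careful than the paper's: the paper concludes with the one-line remark that $f_{\mathrm{RSNN}}$ is ``a deterministic function of the search sequence,'' which silently requires that the function depend on the sequence only through label-invariant quantities (node features, sequence-internal adjacency encodings, first-visit tags); you make this ingredient-by-ingredient check explicit, which is the right thing to do since a function of the raw vertex labels would not transport across $\pi$.
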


\begin{proof}
Write $X_{\mathrm{DFS}}(G)=(s_0,\ldots,s_{|V|-1})$ for the vertex sequence produced by the randomized DFS on $G$, and let $H=\pi\!\cdot\!G$ for an isomorphism $\pi:V(G)\!\to\!V(H)$.  
The randomness comes from: (i) the root $s_0\sim\mathrm{Unif}(V(G))$ and (ii) an independent random order of neighbors at each vertex.

We prove by induction on $t$ that the next state has the same \emph{pushforward} conditional law under any isomorphism $\pi$:
\begin{equation}\label{eq:cond-push}
\pi\bigl(X_{\mathrm{DFS}}(G)[t]\ \big|\ \mathbf x\bigr)\ \stackrel{d}{=}\ X_{\mathrm{DFS}}(H)[t]\ \big|\ \pi\mathbf x,
\end{equation}
for every valid DFS prefix $\mathbf x=(s_0,\ldots,s_{t-1})$ on $G$ (and its image $\pi\mathbf x$ on $H$). Averaging over prefixes then yields $\pi X_{\mathrm{DFS}}(G)[t]\stackrel{d}{=}X_{\mathrm{DFS}}(H)[t]$ for each $t$, and thus $\pi X_{\mathrm{DFS}}(G)\stackrel{d}{=}X_{\mathrm{DFS}}(H)$.

\paragraph{State, admissible set, and frontier.}
For a prefix $\mathbf x$ valid on $G$, let $V_{\mathrm{vis}}(G;\mathbf x)=\{s_0,\ldots,s_{t-1}\}$ be the visited set and let $\mathrm{top}(G;\mathbf x)$ be the current DFS stack top (the vertex whose adjacency list is being explored). Define the \emph{admissible neighbor set}
\[
A(G;\mathbf x)\ :=\ \mathcal N\!\bigl(\mathrm{top}(G;\mathbf x)\bigr)\ \setminus\ V_{\mathrm{vis}}(G;\mathbf x).
\]
If $A(G;\mathbf x)\neq\varnothing$, the rule “pick the unvisited neighbor at random’’ makes the next vertex $s_t$ \emph{uniform} on $A(G;\mathbf x)$. If $A(G;\mathbf x)=\varnothing$, the next move is the (deterministic) backtrack to the parent of $\mathrm{top}(G;\mathbf x)$ in the current DFS tree.  
Under an isomorphism $\pi:G\!\cong\!H$, relabeling preserves these invariants:
\[
\mathrm{top}(H;\pi\mathbf x)=\pi\bigl(\mathrm{top}(G;\mathbf x)\bigr),\quad
V_{\mathrm{vis}}(H;\pi\mathbf x)=\pi\bigl(V_{\mathrm{vis}}(G;\mathbf x)\bigr),\quad
A(H;\pi\mathbf x)=\pi\bigl(A(G;\mathbf x)\bigr).
\]

\paragraph{Base case ($t=0$).}
$s_0\sim\mathrm{Unif}(V(G))$ and $\pi s_0\sim\mathrm{Unif}(V(H))$, so
\[
\pi X_{\mathrm{DFS}}(G)[0]\ \stackrel{d}{=}\ X_{\mathrm{DFS}}(H)[0].
\]

\paragraph{Induction step.}
Assume $\pi X_{\mathrm{DFS}}(G)[:t]\stackrel{d}{=}X_{\mathrm{DFS}}(H)[:t]$.  
Fix any realization $\mathbf x$ of the prefix on $G$. There are two cases.

\emph{(i) Expansion step: $A(G;\mathbf x)\neq\varnothing$.}
Conditioned on $\mathbf x$, $X_{\mathrm{DFS}}(G)[t]$ is uniform on $A(G;\mathbf x)$.  
Conditioned on $\pi\mathbf x$, $X_{\mathrm{DFS}}(H)[t]$ is uniform on $A(H;\pi\mathbf x)=\pi A(G;\mathbf x)$.  
Pushing the uniform measure on $A(G;\mathbf x)$ forward by $\pi$ yields the uniform measure on $\pi A(G;\mathbf x)$, hence
\[
\pi\bigl(X_{\mathrm{DFS}}(G)[t]\ \big|\ \mathbf x\bigr)\ \stackrel{d}{=}\ X_{\mathrm{DFS}}(H)[t]\ \big|\ \pi\mathbf x.
\]

\emph{(ii) Backtrack step: $A(G;\mathbf x)=\varnothing$.}
The next state is the parent of $\mathrm{top}(G;\mathbf x)$ in the DFS tree determined by $\mathbf x$; thus it is \emph{deterministic} given $\mathbf x$.  
Relabeling preserves parent/child relations in the explored DFS tree, so
\[
\pi\bigl(X_{\mathrm{DFS}}(G)[t]\ \big|\ \mathbf x\bigr)\ =\ X_{\mathrm{DFS}}(H)[t]\ \big|\ \pi\mathbf x,
\]

In both cases, the conditional laws match after applying $\pi$.  
Taking expectations over the distributions gives
$\pi X_{\mathrm{DFS}}(G)[t]\stackrel{d}{=}X_{\mathrm{DFS}}(H)[t]$ for each $t$, which completes the induction and yields
\[
\pi\,X_{\mathrm{DFS}}(G)\ \stackrel{d}{=}\ X_{\mathrm{DFS}}(H).
\]
This proves probabilistic invariance of the randomized DFS. Since the RSNN output $f_{\mathrm{RSNN}}$ is a deterministic function of the search sequence, it follows that $f_{\mathrm{RSNN}}(G)\stackrel{d}{=}f_{\mathrm{RSNN}}(H)$, and the averaged predictor $\Phi(G)=\mathbb{E}[f_{\mathrm{RSNN}}(G)]$ is an invariant graph function.
\end{proof}

\begin{corollary}[Stochastic training converges to the invariant objective]
\label{prop:pi_sgd}
Let $\ell(\cdot,y)$ be a differentiable loss. Consider the expected risk
\[
L(\mathbf W)\;=\;\mathbb{E}_{(G,y)\sim \mathcal D}\;
\mathbb{E}_{S\sim \mathcal{S}_{\mathrm{DFS}}(G)}
\big[\ell\big(f_{\mathrm{RSNN}}(G,S;\mathbf W),\,y\big)\big].
\]
At each SGD step $t$, sample $(G_t,y_t)\sim\mathcal D$ and one search draw $S_t\sim \mathcal{S}_{\mathrm{DFS}}(G_t)$, and update
\[
\mathbf W_{t+1}\;=\;\mathbf W_t\;-\;\eta_t\,\nabla_{\mathbf W}\,
\ell\big(f_{\mathrm{RSNN}}(G_t,S_t;\mathbf W_t),\,y_t\big).
\]
Then $\mathbb{E}\big[\nabla_{\mathbf W}\ell(f_{\mathrm{RSNN}}(G_t,S_t;\mathbf W_t),y_t)\,\big] =\nabla_{\mathbf W} L(\mathbf W_t)$, i.e., the single-sample gradient is an unbiased estimator of the invariant objective’s gradient. Under standard SGD conditions, $\mathbf W_t$ converges almost surely to the optimal $\mathbf W^\star$ of the invariant objective.
\end{corollary}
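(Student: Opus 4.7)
The plan is twofold: first establish unbiasedness of the stochastic gradient, then invoke a classical stochastic-approximation convergence result. For unbiasedness, I would condition on $\mathbf W_t$ (which, by construction, is independent of the fresh draws $(G_t,y_t,S_t)$ at step $t$) and apply the tower property. Writing the per-sample stochastic gradient as $g_t := \nabla_{\mathbf W}\ell(f_{\mathrm{RSNN}}(G_t,S_t;\mathbf W_t),y_t)$, the law of total expectation gives $\mathbb E[g_t\mid\mathbf W_t] = \mathbb E_{(G,y)\sim\mathcal D}\,\mathbb E_{S\sim\mathcal S_{\mathrm{DFS}}(G)}[\nabla_{\mathbf W}\ell(f_{\mathrm{RSNN}}(G,S;\mathbf W_t),y)]$. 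Under mild integrability, differentiation under the integral sign (Leibniz's rule) permits swapping $\nabla_{\mathbf W}$ with the two expectations, producing $\nabla_{\mathbf W} L(\mathbf W_t)$; this establishes $\mathbb E[g_t\mid\mathbf W_t]=\nabla L(\mathbf W_t)$. Linearity of expectation then lifts the identity to the mini-batch average in the stated form.

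Second, with the mini-batch gradient identified as an unbiased stochastic gradient of the invariant population risk $L$, the update rule is an instance of SGD on $L$. I would appeal to classical convergence results (e.g.\ Robbins--Monro; see Bottou, 1998; Bertsekas--Tsitsiklis, 2000) under the standard hypotheses: (i) $L$ is bounded below with $\nabla L$ Lipschitz in $\mathbf W$, (ii) the stochastic gradients have uniformly bounded second moment (equivalently, bounded variance given a bound on $\|\nabla L\|$), and (iii) the step sizes satisfy $\sum_t \eta_t=\infty$ and $\sum_t \eta_t^2<\infty$. Together with unbiasedness, these assumptions yield almost sure convergence of $\mathbf W_t$ to a stationary point $\mathbf W^\star$ of $L$ and constitute the ``standard SGD conditions'' referenced in the statement.

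The main obstacle I anticipate is rigorously justifying the gradient--expectation interchange and the regularity assumptions on the invariant objective. The inner expectation over $S$ is in fact a finite sum, since for any fixed graph there are only finitely many DFS realizations, so that swap is immediate. The outer interchange over $(G,y)$ requires only that $\nabla_{\mathbf W}\ell$ be dominated by an integrable function in a neighborhood of each $\mathbf W_t$, which holds whenever $f_{\mathrm{RSNN}}$ and $\ell$ are smooth in $\mathbf W$ with locally bounded derivatives---standard for typical sequence encoders (GRU, LSTM, transformer) and loss functions (cross-entropy, MSE). Once these mild conditions are in place, the remainder of the argument reduces to a direct citation of the classical SGD convergence theorem, so the technical novelty lies entirely in the (already established) probabilistic invariance of $\mathcal S_{\mathrm{DFS}}$ that makes $L$ the correct invariant objective to optimize.
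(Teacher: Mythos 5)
Your proposal is correct and follows essentially the same route as the paper: unbiasedness via independence of the search draw from $\mathbf W_t$ plus an exchange of gradient and expectation (which the paper justifies by citing the analogous argument of Murphy et al.\ for permutation sampling), followed by a citation of classical Robbins--Monro/supermartingale SGD convergence under the standard step-size and regularity conditions. Your added observations --- that the inner expectation over DFS realizations is a finite sum and that convergence is to a stationary point (optimum only under convexity) --- are consistent with, and slightly more explicit than, the paper's proof sketch.
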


\begin{proof}[Proof sketch]
This follows directly from the proof of Proposition~A.1 in \citet{murphy2018janossy}, replacing permutations by RSNN searches: since the search randomness $S\!\sim\!\mathcal{S}_{\mathrm{DFS}}(G)$ is sampled independently of $\mathbf W$ and $\ell$ is differentiable with integrable gradient, we can exchange $\nabla_{\mathbf W}$ and the expectations to get $\nabla_{\mathbf W} L(\mathbf W)=\mathbb{E}_{(G,y)\sim\mathcal D}\mathbb{E}_{S\sim\mathcal{S}_{\mathrm{DFS}}(G)}\big[\nabla_{\mathbf W}\ell(f_{\mathrm{RSNN}}(G,S;\mathbf W),y)\big]$, so the single-sample stochastic gradient is unbiased; standard Robbins–Monro/Polyak supermartingale arguments then yield a.s.\ convergence of SGD to a stationary point (and to $\mathbf W^\star$ under convexity).
\end{proof}

\newpage

\section{Additional Model Details: Positional Encodings and Sampling Algorithms \label{sec:pes}}

In this section, we provide additional details on the positional encoding scheme and sampling algorithms used in both RSNN and RWNN models. These components are essential not only for implementation but also for theoretical expressivity. We also present detailed descriptions of the sampling procedures for both random walks and random searches. For RWNNs, we outline the walk generation algorithm, including initialization, neighbor selection, and PE encoding. For RSNNs, we describe the random depth-first search (DFS) strategy, including how spanning trees are constructed and how node visitation is handled. These implementation-level details clarify the runtime differences analyzed in Appendix~\ref{sec:run} and support the reproducibility of our reported results.

\subsection{Positional Encodings}

\paragraph{Identity and Adjacency Encodings.} \citet{tonshoff2023walking} and \citet{chen2025learning} augment each walk with two binary feature matrices that inject explicit structural context.  
For a walk \(W=(w_{0},\dots,w_{\ell})\) on graph $G$, the \emph{identity encoding} \(\mathrm{id}^{\,s}_{W}\in\{0,1\}^{(\ell+1)\times s}\) marks node repetitions within a sliding window of size \(s\): for indices \(0\le i\le\ell\) and \(0\le j\le s-1\) we set
\[
\mathrm{id}^{\,s}_{W}[i,j]=1 \;\text{iff}\; i-j\ge 1\ \text{and}\ w_{i}=w_{\,i-j},
\]
and \(0\) otherwise.  Thus column \(j\) signals whether the current node re-appeared exactly \(j\) steps earlier, explicitly encoding cycles of length \(j+1\). Second, the \emph{adjacency encoding} \(\mathrm{adj}^{\,s}_{W}\in\{0,1\}^{(\ell+1)\times(s-1)}\) records edges among already-visited nodes that the walk does not traverse.  We define
\[
\mathrm{adj}^{\,s}_{W}[i,j]=1 \;\text{iff}\; i-j\ge 1\ \text{and}\ (w_{i},w_{\,i-j})\in E(G),
\]
and \(0\) otherwise. Here, $E(\cdot)$ denotes the edge set of the input. Consequently, for every pair of nodes that appears within the window, the encoding reveals whether they are adjacent in the underlying graph. The two blocks are concatenated to form a positional-encoding matrix
\(h_{\mathrm{PE}}=[\,\mathrm{id}^{\,s}_{W}\,\|\,\mathrm{adj}^{\,s}_{W}\,]\in\mathbb R^{(\ell+1)\times d_{\mathrm{pe}}}\) with \(d_{\mathrm{pe}}=2s-1\). Appending \(h_{\mathrm{PE}}\) to the raw node embeddings ensures that, once full node- and edge-coverage is achieved, the sequence model receives enough information to reconstruct the entire subgraph induced by the walk. 

\paragraph{Anonymous Encodings.} As an alternative to the identity–adjacency scheme, \emph{anonymous encodings} have been proposed to capture graph structure by \citet{wang2024non} and \citet{kim2025revisiting}. For a walk \(W\) we create an integer vector
\(\omega_{\text{anon}}(W)\in\{1,\dots,\ell+1\}^{\ell+1}\) defined recursively:
\[
  \omega_{\text{anon}}(W)[t]\;=\;
  \begin{cases}
      1+\max\bigl\{\,\omega_{\text{anon}}(W)[0{:}t-1]\bigr\},
         &\text{if }w_{t}\notin\{w_{0},\dots,w_{t-1}\},\\[4pt]
      \omega_{\text{anon}}(W)[s],
         &\text{if }s<t\text{ is the first index with }w_{s}=w_{t}.
  \end{cases}
\]
In words, the first time a node appears in the walk it is assigned the next
unused label \(1,2,3,\dots\); every subsequent visit to that same node reuses the
original label.  Hence \(\omega_{\text{anon}}\) is invariant to the specific node IDs yet records the order in which \emph{unique} vertices are discovered, providing
topological context without relying on absolute labels. 

\paragraph{Role in Expressivity.}  
These positional encodings play a critical role in the expressive power of both RWNNs and RSNNs. They serve as the main mechanism by which the walk or search encodes structural information from the underlying graph. In particular, the identity and anonymous encodings, when combined with walks that achieve full edge coverage, allow for exact reconstruction of the input graph, assuming a sufficiently large window size $s$. Meanwhile, the adjacency encoding enables full reconstruction even with only node coverage, as it records structural edges not explicitly traversed in the sequence. In our RSNN implementation, we omit identity encodings since each node appears exactly once in a search; instead, we rely solely on adjacency encodings. These are especially important for preserving expressivity in RSNNs: depth-first searches introduce disconnections in the sequence, where jumps between non-adjacent nodes may obscure structure. Consider for example nodes $w_{i}$ and $w_{i+1}$ traversed adjacent to one another in a search sequence, but disconnected in the graph. With an appropriate window size, the adjacency encoding first signals the disconnection setting $\mathrm{adj}_W^s[i+1,1] = 0$, then identifies the connecting edge when it appeared in the sequence setting $\mathrm{adj}_W^s[i+1,j] = 1$ for $(w_i, w_{i-j}) \in E(G)$. This ensures that, once full edge coverage is achieved across searches, the sequence model receives all structural information necessary to reconstruct the graph. Thus, positional encodings are central to the theoretical guarantees of RSNN expressivity.

\begin{algorithm}[t]
\footnotesize
\DontPrintSemicolon
\KwIn{Graph $G = (V,E)$, walk length $l$, window size $s$}
\KwOut{Random walk $W = (w_0,\dots,w_l)$, encodings $\mathrm{id}^s_W$, $\mathrm{adj}^s_W$}

Sample initial node $w_0 \sim \mathcal{U}(V)$\;
Initialize $W \leftarrow [w_0]$\;

\For{$i \leftarrow 1$ to $l$}{
    Let $\mathcal{N}(w_{i-1})$ be the neighbors of $w_{i-1}$\;
    Sample $w_i \sim \mathcal{U}(\mathcal{N}(w_{i-1}))$\;
    Append $w_i$ to $W$\;

    \For{$j \leftarrow 1$ to $s$}{
        \If{$i - j \ge 0$}{
            $\mathrm{id}^s_W[i,j] \leftarrow \mathbf{1}[w_i = w_{i-j}]$ \tcp*{Identity encoding}
            
            $\mathrm{adj}^s_W[i,j] \leftarrow \mathbf{1}[(w_i, w_{i-j}) \in E]$ \tcp*{Adjacency encoding}
        }
    }
}
\Return{$W$, $\mathrm{id}^s_W$, $\mathrm{adj}^s_W$}
\caption{Uniform Random Walk with Positional Encodings}
\label{alg:rw}
\end{algorithm}

\begin{algorithm}[t]
\footnotesize
\DontPrintSemicolon
\KwIn{Graph $G = (V,E)$, window size $s$}
\KwOut{Search sequence $W = (w_0, \dots, w_\ell)$, adjacency encoding $\mathrm{adj}^s_W$}

Sample initial node $w_0 \sim \mathcal{U}(V)$\;
Initialize stack $\mathcal{S} \gets [w_0]$, visited set $\mathcal{V} \gets \{w_0\}$, walk $W \gets [\,]$\;
Initialize $\mathrm{adj}^s_W \gets \mathbf{0}^{|V| \times (s - 1)}$\;

\While{$\mathcal{S}$ is not empty}{
    Pop $u \gets \mathcal{S}$\;
    Append $u$ to $W$\;

    \For{$j \gets 1$ \KwTo $s - 1$}{
        \If{$|W| > j$}{
            $\mathrm{adj}^s_W[|W|-1, j] \gets \mathbf{1}[(u, W[|W|-1-j]) \in E]$ \tcp*{Adjacency encoding}
        }
    }

    Let $\mathcal{N}(u)$ be unvisited neighbors of $u$ in random order\;
    \ForEach{$v \in \mathcal{N}(u)$}{
        Push $v$ onto $\mathcal{S}$\;
        Add $v$ to $\mathcal{V}$\;
    }
}
\Return $W$, $\mathrm{adj}^s_W$
\caption{Random Depth-First Search with Adjacency Encodings}
\label{alg:dfs}
\end{algorithm}

\subsection{Sampling Algorithms}

\paragraph{Random Walk Sampling.}
We adopt a standard uniform random walk procedure to extract sequences from a graph (Algorithm \ref{alg:rw}). The algorithm begins by uniformly sampling a starting node from the vertex set. At each step, it selects the next node uniformly at random from the current node’s neighbors. As the walk progresses, we maintain a sliding window of fixed size $s$ to compute identity and adjacency encodings for each step. The algorithm takes as input the graph $G$, walk length $l$, and window size $s$, and returns both the walk and the corresponding structural encodings.

\paragraph{Random Search Sampling.}
We implement random searches in RSNNs using a randomized depth-first search (DFS) traversal (Algorithm \ref{alg:dfs}). The algorithm begins by sampling a starting node uniformly at random from the vertex set. From there, we perform a standard DFS, visiting each neighbor in a random order to introduce stochasticity. As nodes are visited, they are recorded sequentially in the walk \( W \), and only the adjacency-based positional encoding \(\mathrm{adj}^s_W\) is computed using a sliding window of size \(s\). Since DFS visits each node exactly once, identity encodings are unnecessary. The resulting walk and adjacency encoding together define the structural input for RSNNs.

\newpage

\section{Extended Results \label{sec:ext}}

We present two additional experiments to complement our main findings. First, we conduct an ablation study evaluating the impact of the sequence model architecture on performance by comparing CRAWL, the best performing RWNN, and RSNNs equipped with GRUs, LSTMs, and Transformers on molecular benchmarks. This experiment helps assess whether the RSNN framework is sensitive to the choice of sequence model. Second, we report runtime comparisons between RSNNs and RWNNs to evaluate computational efficiency. Specifically, we compare training times across varying sample sizes to understand how the two approaches scale under realistic computational budgets.

\begin{table*}[b!]
\begingroup
\setlength{\tabcolsep}{4pt} 
\small
\centering
\caption{Median (min, max) of model AUC across test splits on molecular benchmarks. We report results for each model equipped with one of three sequence models: (1) GRU, (2) LSTM, or (3) Transformer (TRSF), as indicated by the suffix. The best model for each value of $m$ is highlighted in \blu{blue}. Trends from the main paper hold across architectures: RSNNs consistently outperform RWNNs at low sample sizes, with GRUs and LSTMs yielding similar performance, while Transformers underperform across most settings.}
\resizebox{\textwidth}{!}{
\begin{tabular}{l l c c c c c c c} 
\toprule
& & \multicolumn{6}{c }{\textbf{MoleculeNet Molecular Benchmarks (AUC $\uparrow$)}} \\
\cmidrule{3-8}
 & & \textbf{CLINTOX} & \textbf{SIDER} & \textbf{BACE} & \textbf{BBBP} & \textbf{TOX21} & \textbf{TOXCAST} \\ 
 & \textbf{\# Graphs} & 1.5K & 1.5K & 1.5K & 2K & 8K & 9K\\ 
 & \textbf{Avg. $|V|$} & 26.1 & 33.6 & 34.1 & 23.9 & 18.6 & 18.8 \\ 
 & \textbf{Avg. $|E|$} & 55.5 & 70.7 & 73.7 & 51.6 & 38.6 & 38.5 \\ 
 & \textbf{\# Classes} & 2 & 2 & 2 & 2 & 2 & 2\\ 
 \midrule
 & \textbf{CRAWL-TRSF} & 59.8 \scriptsize{(48.1, 71.8)} & 60.3 \scriptsize{(57.2, 68.4)} & 67.6 \scriptsize{(65.2, 73.3)} & 74.6 \scriptsize{(66.1, 79.4)} & 70.4 \scriptsize{(65.3, 74.5)} & 70.8 \scriptsize{(65.4, 75.3)} \\
  & \textbf{CRAWL-LSTM} & 66.7 \scriptsize{(40.4, 80.2)} & 61.4 \scriptsize{(57.4, 63.8)} & 66.2 \scriptsize{(60.7, 71.4)} & 74.4 \scriptsize{(68.4, 80.4)} & 72.2 \scriptsize{(67.6, 76.0)} & 71.5 \scriptsize{(67.7, 75.4)} \\
  & \textbf{CRAWL-GRU} & 70.0 \scriptsize{(64.6, 73.6)} & 64.2 \scriptsize{(56.1, 67.2)} & 62.5 \scriptsize{(59.2, 70.8)} & 77.6 \scriptsize{(68.8, 81.5)} & 71.7 \scriptsize{(66.4, 75.3)} & 72.8 \scriptsize{(67.7, 76.7)} \\
 $m=1$ & \textbf{RSNN-TRSF} & 82.9 \scriptsize{(59.8, 87.9)} & 65.6 \scriptsize{(63.1, 71.9)} & 78.0 \scriptsize{(71.3, 81.5)} & 85.6 \scriptsize{(77.6, 89.8)} & 77.7 \scriptsize{(73.8, 78.9)} & 74.2 \scriptsize{(70.8, 78.8)} \\
  & \textbf{RSNN-LSTM} & 87.2 \scriptsize{(82.6, 89.4)} & \blu{66.8 \scriptsize{(61.7, 72.2)}} & 78.2 \scriptsize{(74.3, 84.3)} & 87.1 \scriptsize{(83.9, 89.5)} & 79.5 \scriptsize{(77.2, 83.7)} & \blu{75.6 \scriptsize{(72.9, 80.6)}} \\
  & \textbf{RSNN-GRU} & \blu{88.1 \scriptsize{(84.9, 91.5)}} & 66.2 \scriptsize{(63.0, 72.4)} & \blu{79.7 \scriptsize{(75.9, 83.6)}} & \blu{87.5 \scriptsize{(80.3, 89.9)}} & \blu{79.8 \scriptsize{(77.2, 83.4)}} & 74.6 \scriptsize{(72.3, 79.7)} \\
 \midrule
 & \textbf{CRAWL-TRSF} & 69.4 \scriptsize{(49.0, 79.0)} & 64.7 \scriptsize{(61.1, 69.5)} & 73.7 \scriptsize{(68.4, 75.4)} & 82.6 \scriptsize{(77.5, 87.7)} & 74.5 \scriptsize{(71.6, 78.6)} & 71.3 \scriptsize{(69.1, 80.0)} \\
  & \textbf{CRAWL-LSTM} & 80.4 \scriptsize{(72.3, 83.8)} & 66.3 \scriptsize{(63.2, 68.8)} & 72.7 \scriptsize{(67.5, 78.5)} & 84.0 \scriptsize{(78.5, 88.6)} & 77.5 \scriptsize{(75.3, 79.9)} & 74.6 \scriptsize{(71.1, 79.9)} \\
  & \textbf{CRAWL-GRU} & 83.0 \scriptsize{(76.6, 91.5)} & 65.2 \scriptsize{(59.5, 71.3)} & 75.7 \scriptsize{(71.0, 79.0)} & 84.5 \scriptsize{(80.7, 87.0)} & 77.6 \scriptsize{(75.6, 81.2)} & 74.4 \scriptsize{(69.2, 77.9)} \\
 $m=4$& \textbf{RSNN-TRSF} & 84.2 \scriptsize{(63.4, 87.0)} & 67.1 \scriptsize{(64.6, 70.8)} & 79.8 \scriptsize{(69.4, 82.5)} & 85.6 \scriptsize{(79.9, 90.7)} & 78.0 \scriptsize{(74.2, 83.0)} & 76.6 \scriptsize{(71.5, 81.2)} \\
  & \textbf{RSNN-LSTM} & 88.7 \scriptsize{(81.2, 90.8)} & \blu{67.5 \scriptsize{(64.1, 70.1)}} & \blu{80.9 \scriptsize{(75.3, 84.4)}} & \blu{88.9 \scriptsize{(82.7, 91.6)}} & \blu{81.4 \scriptsize{(76.3, 83.3)}} & \blu{76.6 \scriptsize{(73.8, 81.3)}} \\
  & \textbf{RSNN-GRU} & \blu{89.1 \scriptsize{(80.9, 91.7)}} & 67.0 \scriptsize{(61.3, 71.1)} & 80.4 \scriptsize{(76.5, 84.0)} & 88.0 \scriptsize{(80.3, 90.5)} & 80.3 \scriptsize{(77.3, 84.2)} & 76.1 \scriptsize{(72.2, 79.0)} \\
 \midrule
 & \textbf{CRAWL-TRSF} & 68.3 \scriptsize{(53.1, 88.1)} & 65.9 \scriptsize{(62.6, 71.4)} & 75.4 \scriptsize{(66.6, 80.7)} & 85.4 \scriptsize{(79.2, 89.6)} & 76.4 \scriptsize{(71.8, 78.2)} & 75.2 \scriptsize{(72.0, 78.7)} \\
  & \textbf{CRAWL-LSTM} & 87.2 \scriptsize{(78.3, 89.4)} & 67.1 \scriptsize{(63.6, 70.7)} & 79.2 \scriptsize{(76.8, 83.2)} & 86.8 \scriptsize{(79.5, 91.6)} & 78.9 \scriptsize{(76.0, 81.7)} & 73.5 \scriptsize{(68.9, 77.3)} \\
  & \textbf{CRAWL-GRU} & 86.5 \scriptsize{(83.6, 91.4)} & 66.1 \scriptsize{(62.1, 69.9)} & 80.3 \scriptsize{(71.0, 82.5)} & 86.0 \scriptsize{(82.8, 89.6)} & 79.1 \scriptsize{(76.7, 82.1)} & 75.5 \scriptsize{(72.0, 78.6)} \\
 $m=8$& \textbf{RSNN-TRSF} & 82.7 \scriptsize{(51.8, 89.9)} & 66.8 \scriptsize{(62.5, 72.0)} & 80.2 \scriptsize{(73.3, 82.4)} & 86.4 \scriptsize{(79.8, 90.7)} & 76.8 \scriptsize{(75.4, 81.5)} & 75.2 \scriptsize{(71.5, 81.4)} \\
  & \textbf{RSNN-LSTM} & \blu{88.4 \scriptsize{(82.2, 90.6)}} & 67.2 \scriptsize{(64.3, 74.6)} & \blu{80.7 \scriptsize{(74.8, 87.1)}} & 88.1 \scriptsize{(82.6, 91.4)} & 81.1 \scriptsize{(77.7, 85.2)} & \blu{75.9 \scriptsize{(72.3, 82.2)}} \\
  & \textbf{RSNN-GRU} & 88.3 \scriptsize{(80.1, 91.3)} & \blu{67.6 \scriptsize{(63.3, 69.2)}} & 80.0 \scriptsize{(76.1, 85.1)} & \blu{88.6 \scriptsize{(83.6, 90.3)}} & \blu{82.2 \scriptsize{(77.3, 85.3)}} & 75.7 \scriptsize{(73.0, 78.9)} \\
 \midrule
 & \textbf{CRAWL-TRSF} & 69.6 \scriptsize{(47.6, 86.9)} & 65.1 \scriptsize{(63.1, 70.1)} & 78.8 \scriptsize{(73.5, 79.7)} & 85.2 \scriptsize{(79.5, 89.3)} & 77.7 \scriptsize{(75.8, 81.9)} & 74.8 \scriptsize{(72.1, 80.0)} \\
  & \textbf{CRAWL-LSTM} & 87.8 \scriptsize{(80.1, 89.5)} & 65.7 \scriptsize{(63.4, 69.0)} & 79.5 \scriptsize{(74.4, 86.0)} & 87.1 \scriptsize{(79.7, 93.9)} & 79.2 \scriptsize{(77.9, 82.3)} & 76.2 \scriptsize{(70.4, 79.0)} \\
  & \textbf{CRAWL-GRU} & 89.1 \scriptsize{(80.5, 91.1)} & 65.3 \scriptsize{(61.4, 70.8)} & 80.7 \scriptsize{(76.1, 84.5)} & 87.0 \scriptsize{(81.7, 90.3)} & 80.9 \scriptsize{(77.4, 82.6)} & 76.2 \scriptsize{(72.7, 77.9)} \\
 $m=16$& \textbf{RSNN-TRSF} & 84.4 \scriptsize{(78.5, 91.7)} & 66.6 \scriptsize{(63.6, 73.6)} & 81.0 \scriptsize{(73.1, 82.8)} & 86.0 \scriptsize{(78.7, 90.7)} & 77.6 \scriptsize{(74.5, 82.1)} & 76.4 \scriptsize{(72.3, 79.2)} \\
  & \textbf{RSNN-LSTM} & 88.3 \scriptsize{(81.9, 92.2)} & \blu{67.3 \scriptsize{(64.8, 71.9)}} & \blu{80.5 \scriptsize{(79.0, 84.3)}} & 88.5 \scriptsize{(83.8, 91.2)} & 82.0 \scriptsize{(78.8, 83.5)} & 75.5 \scriptsize{(72.9, 80.0)} \\
  & \textbf{RSNN-GRU} & \blu{88.5 \scriptsize{(82.0, 93.7)}} & 67.1 \scriptsize{(65.0, 74.0)} & 79.8 \scriptsize{(76.8, 84.9)} & \blu{89.4 \scriptsize{(83.0, 91.7)}} & \blu{82.2 \scriptsize{(78.0, 84.1)}} & \blu{76.5 \scriptsize{(73.4, 79.3)}} \\
 \bottomrule
\end{tabular}
}
\endgroup
\end{table*}

\subsection{Sequence Model Ablations}

We evaluate the impact of sequence model architecture by comparing RSNNs and CRAWL equipped with GRUs, LSTMs, and Transformers (Table \ref{tab:1}). Across all configurations, the trends from the main paper hold: RSNNs consistently outperform RWNNs at low sample sizes ($m=1$), regardless of sequence model. Notably, RSNNs with $m=1$ often match or exceed the performance of RWNNs with $m=16$, reaffirming the sample efficiency advantages of random search. When $m=16$ on the \textbf{BACE} dataset, CRAWL-LSTM and CRAWL-GRU slightly outperform their RSNN counterparts, however in the remaining comparisons RSNN always outperforms CRAWL across all $m$. Overall, GRUs and LSTMs perform comparably within both RSNN and RWNN variants, indicating that RSNN improvements are robust to the choice of sequence model, provided it has adequate recurrence-based inductive bias. In contrast, Transformers underperform relative to GRUs and LSTMs across most benchmarks and sample sizes. One possible explanation is that Transformers lack the hard-coded recurrence structure present in GRUs and LSTMs, relying instead on global attention mechanisms that may require more data to model sequential dependencies effectively, especially in low-sample regimes. These results suggest that recurrent sequence models are better suited for graph-based walk or search processing under constrained sampling budgets.

\begin{figure*}[h]
     \centering
     \begin{subfigure}[b]{.325\textwidth}
         \centering
         \includegraphics[width=\textwidth]{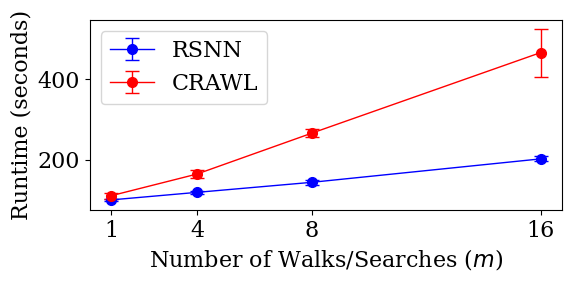}
         \caption{SIDER}
         \label{fig:kreg}
     \end{subfigure}
     \hfill
     \begin{subfigure}[b]{.325\textwidth}
         \centering
         \includegraphics[width=\textwidth]{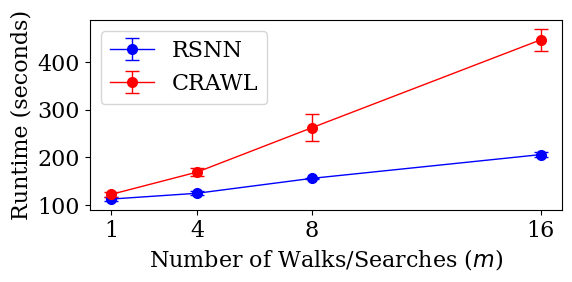}
         \caption{BBBP}
     \end{subfigure}
     \begin{subfigure}[b]{.325\textwidth}
         \centering
         \includegraphics[width=\textwidth]{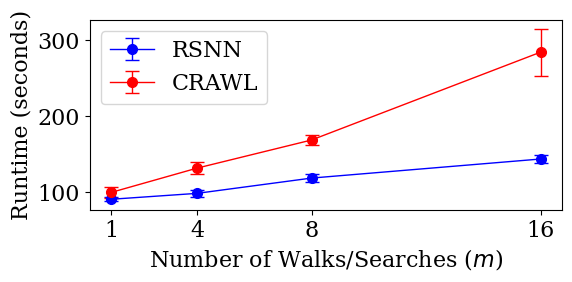}
         \caption{TOXCAST}
     \end{subfigure}
    \caption{Training runtime (in seconds) of RSNN and CRAWL over 25 epochs on \textbf{SIDER}, \textbf{BBBP}, \textbf{TOXCAST} as a function of the number of samples $m$. Error bars represent standard deviation across 5 runs. At low sample counts, RSNNs exhibit comparable runtime to RWNNs; as $m$ increases, RSNNs become faster despite longer sequence lengths. We hypothesize this is due to random walks repeatedly visiting high-degree nodes, incurring more computation per step, whereas DFS-based searches visit each node exactly once.}
    \label{fig:runtimes}
\end{figure*}

\subsection{Runtime Comparisons  \label{sec:run}}

\paragraph{Experimental Setup.} To ensure a fair comparison between RSNNs and CRAWL, we fix all model components except for the sampling strategy. Both models use a GRU sequence model with hidden dimension 64 and are composed of 2 layers. We set the positional encoding window size to 8 and batch size to 64. For each graph, the random walk length is set to $l = |V|$, equal to the number of nodes, so that the number of sequence steps is identical between random walks and searches. As a result, RSNNs and CRAWL have equivalent asymptotic runtimes per sample. We measure training runtime over 25 epochs on three molecular benchmarks, \textbf{SIDER}, \textbf{BBBP}, and \textbf{TOXCAST}, across varying sample sizes $m \in \{1, 4, 8, 16\}$. For each forward pass, a fresh set of $m$ walks or searches is sampled per graph. All experiments are run on a single NVIDIA GeForce GTX 1080 Ti GPU, with sampling parallelized across 4 CPU workers to reflect practical deployment conditions.

\paragraph{Results.} Empirically we observe that RSNN searches are never more expensive than CRAWL walks for any tested number of walks $m$, and that for larger $m$ the RSNN implementation can even become faster (Figure \ref{fig:runtimes}). Although, each routine shares the same asymptotic cost, $\mathcal{O}(|V|)$ on our sparse graphs, they differ by constant factors that affect runtime comparisons in practice:

\begin{itemize}
    \item \textbf{Random Walks Revisit Nodes with Larger Degrees.} A DFS visits each vertex exactly once, while a random walk visits nodes randomly, potentially revisiting many vertices with higher degrees. Consequently, searches and random walks visit different sets of nodes. This affects runtime since operations per node depend on their degrees (e.g., shuffling neighbors, random choices on neighbors, identity/adjacency checks), incurring more computation per-step and increasing runtimes for random walks.
    \item \textbf{Per–step work.} The DFS runs one \texttt{for\ s} loop that updates a \emph{single} adjacency-encoding tensor. The RWNN walk performs an identical \texttt{for\ s} loop, but each iteration evaluates \emph{two} conditions (identity \& adjacency encoding) and writes to \emph{two} tensors, effectively doubling the cost of that inner loop per step.
    \item \textbf{Neighbor handling.} DFS shuffles the neighbor list once per new  vertex, whereas random walks rebuilds a Python list and calls \texttt{random.choice} at every step, and if non-backtracking is enabled, creates an additional filtered list. These repeated list allocations and Python-level random picks inflate wall time.
\end{itemize}

Together, these constant-factor differences explain why the asymptotically identical $\mathcal{O}(|V|)$ algorithms show distinct wall-time profiles: RSNN remains competitive for all $m$, while CRAWL exhibit longer runtimes at larger $m$.

\newpage

\section{Experimental Details and Code \label{sec:details}}


\paragraph{Training and Hyperparameter Selection.}
All models are trained by minimizing the binary cross-entropy loss on molecular benchmarks and the negative log-likelihood loss on protein benchmarks. Training is performed for a maximum of 200 epochs with early stopping patience set to 25 epochs based on validation performance. The best-performing model on the validation set is selected for evaluation on the test set. We perform a grid search over the following hyperparameters for all RWNN and RSNN models:
\begin{itemize}
    \item Number of layers: \{1, 2, 3\}
    \item Learning rate: \{0.05, 0.01, 0.005, 0.001\}
    \item Batch size: \{32, 64, 128\}
    \item Hidden dimension: \{32, 64, 128\}
    \item Global pooling: \{\texttt{mean}, \texttt{sum}, \texttt{max}\}
    \item Sequence model: \{\texttt{GRU}, \texttt{LSTM}, \texttt{Transformer}\}
    \item Number of samples $m$: \{1, 4, 8, 16\}
\end{itemize}

We fix the window size $s = 8$ for both CRAWL and RSNN models. All models are optimized using the Adam optimizer \citep{KingBa15}.

\section{Extended Discussion \label{sec:ext_disc}}

\paragraph{Background on WL and its Variants.} The Weisfeiler–Lehman (WL) hierarchy has become a standard lens for characterizing graph model expressivity. \citet{xu2018how} first established the equivalence between 1-dimensional WL and MPNNs, while \citet{morris2019weisfeiler, azizian2020expressive} generalized this perspective to higher-order GNNs via higher-order WL variants. Beyond MPNNs, recent work has aligned graph transformers with WL, clarifying their expressivity within the same hierarchy \citep{muller2024aligning, muller2024towards}. In parallel, random walk kernels and path GNNs have been connected to WL as sequence-based representations \citep{kriege2022weisfeiler, graziani2024expressive}.

Our \emph{Walk Weisfeiler–Lehman} (WWL) refinement builds directly on this line: we introduce a walk-based color refinement and show that, under full coverage, its distinguishing power coincides with 1-WL. In doing so, we place RWNNs firmly within the WL-centered expressivity landscape alongside MPNNs, graph transformers, and path-based GNNs, advancing a unified view of diverse graph learning architectures through the WL hierarchy.

\end{document}